\documentclass{article}

\usepackage[left=1in,right=1in,top=1in,bottom=1in]{geometry}
%

\RequirePackage{amsthm,amsmath,amsmath,amsfonts,amssymb}

\usepackage{mathabx}

\allowdisplaybreaks

\usepackage{algorithm}
\usepackage[noend]{algpseudocode}

\usepackage{color}



\RequirePackage{amsthm,amsmath,amsmath,amsfonts,amssymb}

\usepackage{mathabx}



\usepackage{algorithm}
\usepackage[noend]{algpseudocode}

\usepackage{color}

\usepackage{bm}
\usepackage{hyperref}
\hypersetup{colorlinks,
	linkcolor=blue,
	citecolor=blue,
	urlcolor=magenta,
	linktocpage,
	plainpages=false}

\usepackage{enumerate}


\usepackage{natbib}
\usepackage{amsmath}
\usepackage{amsthm}
\usepackage{amssymb}
\usepackage{mathabx}
\usepackage{tikz}
\usepackage{xcolor}
\usetikzlibrary{arrows}

\usepackage[title]{appendix}

\usepackage{hyperref}
\usepackage{bm}

\allowdisplaybreaks[4]

\usepackage{caption}
\usepackage{subcaption}





\newcommand{\argmin}{\mathrm{argmin}}

\newcommand{\poly}{\mathrm{poly}}

\renewcommand{\d}{\mathrm{d}}
\newcommand{\dt}{\mathrm{d}t}

\def\R{\mathbb{R}}
\def\N{\mathbb{N}}

\def\cA{\mathcal{A}}
\def\cB{\mathcal{B}}
\def\cC{\mathcal{C}}
\def\cD{\mathcal{D}}
\def\cE{\mathcal{E}}
\def\cF{\mathcal{F}}
\def\cG{\mathcal{G}}
\def\cH{\mathcal{H}}
\def\cI{\mathcal{I}}
\def\cJ{\mathcal{J}}
\def\cK{\mathcal{K}}
\def\cL{\mathcal{L}}
\def\cM{\mathcal{M}}
\def\cN{\mathcal{N}}

\def\cS{\mathcal{S}}

\newcommand{\mat}[1]{\bm{#1}}
\newcommand{\vect}[1]{\bm{#1}}
\newcommand{\norm}[1]{\left\|#1\right\|}
\newcommand{\abs}[1]{\left|#1\right|}

\newtheorem{thm}{Theorem}[section]
\newtheorem{lem}{Lemma}[section]
\newtheorem{cor}{Corollary}[section]

\newtheorem{claim}{Claim}[section]

\title{Algorithmic Regularization in Learning Deep Homogeneous Models: Layers are Automatically Balanced}

\date{}
%

\author{
  Simon S. Du\thanks{Machine Learning Department, School of Computer Science, Carnegie
  	Mellon University. Email: \texttt{ssdu@cs.cmu.edu}}
  \and
  Wei Hu\thanks{Computer Science Department, Princeton University. Email: \texttt{huwei@cs.princeton.edu}}
  \and
  Jason D. Lee\thanks{Department of Data Sciences and Operations, Marshall School of
  	Business, University of Southern California. Email: \texttt{jasonlee@marshall.usc.edu}}
}

\begin{document}

\maketitle

\begin{abstract}

We study the implicit regularization imposed by gradient descent for learning multi-layer homogeneous functions including feed-forward fully connected and convolutional deep neural networks with linear, ReLU or Leaky ReLU activation.
We rigorously prove that gradient flow (i.e. gradient descent with infinitesimal step size) effectively enforces the differences between squared norms across different layers to remain \emph{invariant} without any explicit regularization.
This result implies that if the weights are initially small, gradient flow automatically balances the magnitudes of all layers.
Using a discretization argument, we analyze gradient descent with positive step size for the non-convex low-rank asymmetric matrix factorization problem without any regularization.
Inspired by our findings for gradient flow, we prove that gradient descent with step sizes $\eta_t = O\left(t^{-\left( \frac12+\delta\right)} \right)$ ($0<\delta\le\frac12$) automatically balances two low-rank factors and converges to a bounded global optimum. Furthermore, for rank-$1$ asymmetric matrix factorization we give a finer analysis showing gradient descent with constant step size converges to the global minimum at a globally linear rate.
We believe that the idea of examining the invariance imposed by first order algorithms in learning homogeneous models could serve as a fundamental building block for studying optimization for learning deep models.

\end{abstract}

\section{Introduction}
\label{sec:intro}

Modern machine learning models often consist of multiple layers.
For example, consider a feed-forward deep neural network that defines a prediction function
\begin{align*}
\vect{x} \mapsto f(\vect{x}; \mat{W}^{(1)},\ldots,\mat{W}^{(N)}) = \mat{W}^{(N)}\phi(\mat{W}^{(N-1)}\cdots\mat{W}^{(2)}\phi(\mat{W}^{(1)}\vect{x})\cdots),
\end{align*} 
where $\mat{W}^{(1)},\ldots,\mat{W}^{(N)}$ are weight matrices in $N$ layers, and $\phi\left(\cdot\right)$ is a point-wise \emph{homogeneous} activation function such as Rectified Linear Unit (ReLU) $\phi(x) = \max\{x, 0\}$.
A simple observation is that this model is \emph{homogeneous}: if we multiply a layer by a positive scalar $c$ and divide another layer by $c$, the prediction function remains the same, e.g. $f(\vect{x}; c\mat{W}^{(1)},\ldots,\frac{1}{c}\mat{W}^{(N)}) = f(\vect{x}; \mat{W}^{(1)},\ldots,\mat{W}^{(N)})$.

A direct consequence of homogeneity is that a solution can produce small function value while being unbounded, because one can always multiply one layer by a huge number and divide another layer by that number.
Theoretically, this possible unbalancedness poses significant difficulty in analyzing first order optimization methods like gradient descent/stochastic gradient descent (GD/SGD), because when parameters are not a priori constrained to a compact set via either coerciveness\footnote{A function $f$ is coercive if $\norm{\vect x} \to \infty$ implies $f(\vect x) \to \infty$.} of the loss or an explicit constraint, GD and SGD are not even guaranteed to converge \citep[Proposition 4.11]{lee2016gradient}. In the context of deep learning, \cite{shamir2018resnets} determined that the primary barrier to providing algorithmic results is in that the sequence of parameter iterates is possibly unbounded.

Now we take a closer look at asymmetric matrix factorization, which is a simple two-layer homogeneous model.
Consider the following formulation for factorizing a low-rank matrix:
\begin{align}
	\min_{\mat{U} \in \mathbb{R}^{d_1 \times r},\mat{V} \in \mathbb{R}^{d_2 \times r}} f\left(\mat{U},\mat{V}\right) = \frac{1}{2}\norm{\mat{U}\mat{V}^\top -\mat{M}^*}_F^2, \label{eqn:intro_mf_obj}
\end{align} where $\mat{M}^* \in \mathbb{R}^{d_1 \times d_2}$ is a matrix we want to factorize.
We observe that due to the homogeneity of $f$, it is not smooth\footnote{A function is said to be smooth if its gradient is $\beta$-Lipschitz continuous for some finite $\beta>0$.} even in the neighborhood of a globally optimum point.
To see this, we compute the gradient of $f$:
\begin{align}
	\frac{\partial f\left(\mat{U},\mat{V}\right)}{\partial \mat{U}} = \left(\mat{U}\mat{V}^\top - \mat{M}^*\right)\mat{V}, \qquad 	\frac{\partial f\left(\mat{U},\mat{V}\right)}{\partial \mat{V}} = \left(\mat{U}\mat{V}^\top - \mat{M}^*\right)^\top\mat{U}. \label{eqn:mf_gradient}
\end{align}
Notice that the gradient of $f$ is not homogeneous anymore.
Further, consider a globally optimal solution $(\mat U, \mat V)$ such that $\norm{\mat{U}}_F$ is of order $\epsilon$ and $\norm{\mat{V}}_F$ is of order $1/\epsilon$ ($\epsilon$ being very small).
A small perturbation on $\mat{U}$ can lead to dramatic change to the gradient of $\mat{U}$.
This phenomenon can happen for all homogeneous functions when the layers are unbalanced.
The lack of nice geometric properties of homogeneous functions due to unbalancedness makes first-order optimization methods difficult to analyze.


A common theoretical workaround is to artificially  modify the natural objective function as in \eqref{eqn:intro_mf_obj} in order to prove convergence.
In~\citep{tu2015low, ge2017no}, a regularization term for balancing the two layers is added to \eqref{eqn:intro_mf_obj}:\begin{align}
\min_{\vect{U}\in\mathbb{R}^{d_1 \times r},\vect{V}\in\mathbb{R}^{d_2 \times r}} \frac12 \norm{\mat{U}\mat{V}^\top-\mat{M}}_F^2 + \frac{1}{8}\norm{\mat{U}^\top\mat{U}-\mat{V}^\top\mat{V}}_F^2. \label{eqn:intro_mf_reg_obj}
\end{align}
For problem~\eqref{eqn:intro_mf_reg_obj}, the regularizer removes the homogeneity issue and the optimal solution becomes unique (up to rotation).
\citet{ge2017no} showed that the modified objective \eqref{eqn:intro_mf_reg_obj} satisfies (i) every local minimum is a global minimum, (ii) all saddle points are strict\footnote{A saddle point of a function $f$ is strict if the Hessian at that point has a negative eigenvalue.}, and (iii) the objective is smooth. These imply that (noisy) GD finds a global minimum \citep{ge2015escaping, lee2016gradient, panageas2016gradient}.


\begin{figure*}[t!]
	\centering
	\begin{subfigure}[t]{0.45\textwidth}
		\includegraphics[width=\textwidth]{./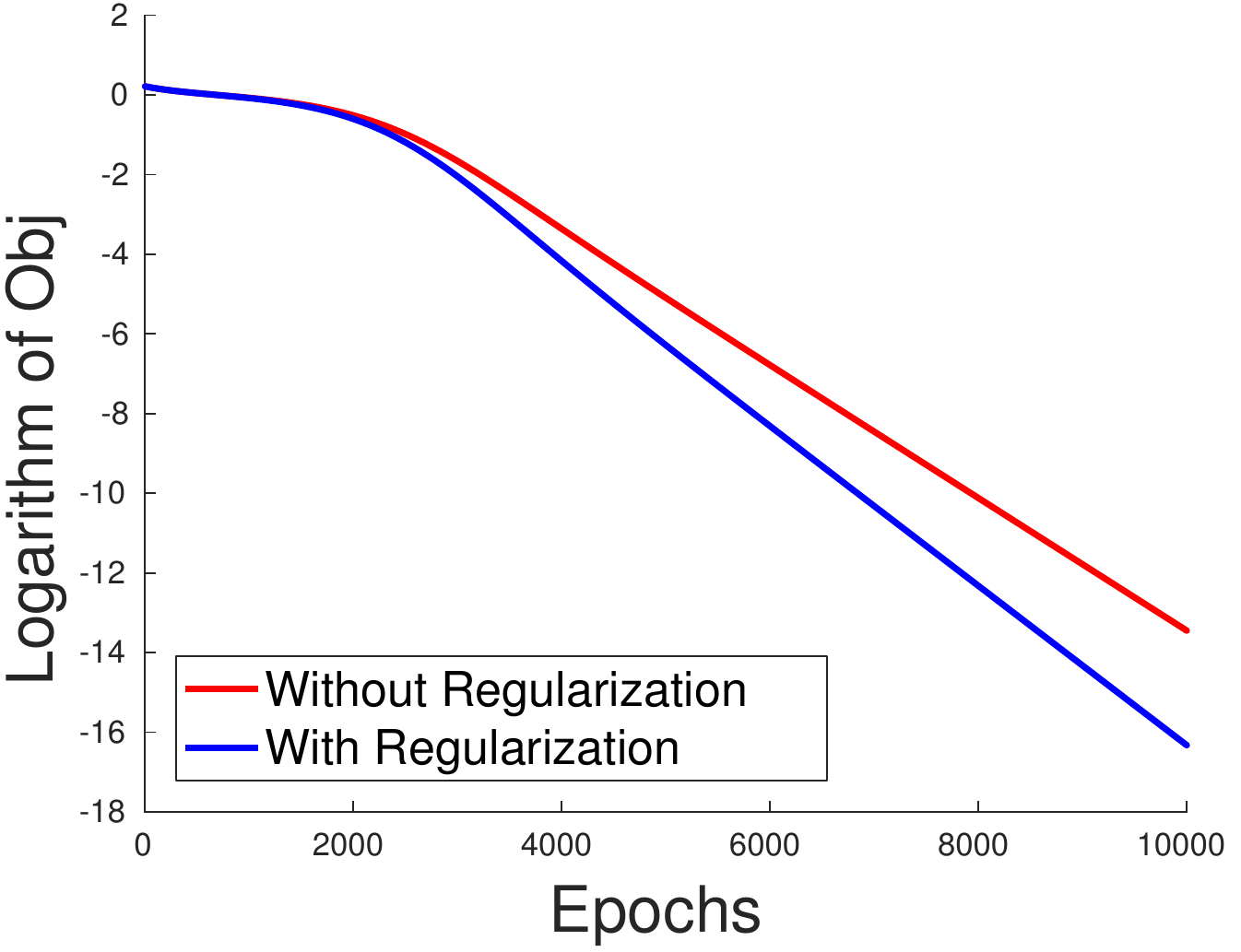}
		\caption{Comparison of convergence rates of GD for objective functions~\eqref{eqn:intro_mf_obj} and~\eqref{eqn:intro_mf_reg_obj}.
			}
		\label{fig:mf_converge}
	\end{subfigure}	
	\quad
	\begin{subfigure}[t]{0.45\textwidth}
		\includegraphics[width=\textwidth]{./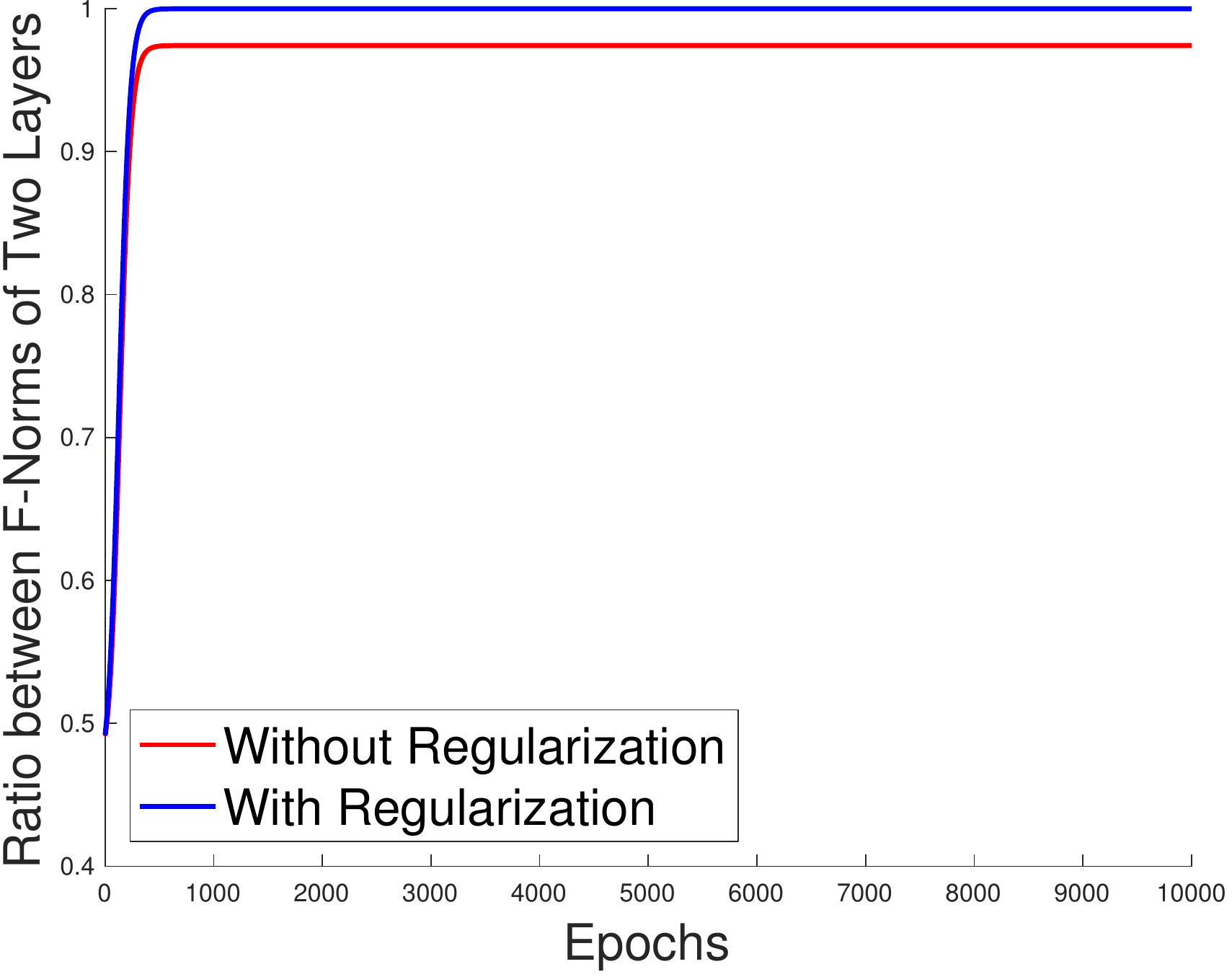}
		\caption{
			Comparison of quantity $\norm{\mat{U}}_F^2 / \norm{\mat{V}}_F^2$ when running GD for objective functions~\eqref{eqn:intro_mf_obj} and~\eqref{eqn:intro_mf_reg_obj}.
		}
					\label{fig:balanced_norm}
	\end{subfigure}
	\caption{Experiments on the matrix factorization problem with objective functions~\eqref{eqn:intro_mf_obj} and~\eqref{eqn:intro_mf_reg_obj}.
		Red lines correspond to running GD on the objective function~\eqref{eqn:intro_mf_obj}, and 
		blue lines correspond to running GD on the objective function~\eqref{eqn:intro_mf_reg_obj}.
	}
	\label{fig:homogeneity}
\end{figure*}

On the other hand, empirically, removing the homogeneity is not necessary.
We use GD with random initialization to solve the optimization problem~\eqref{eqn:intro_mf_obj}.
Figure~\ref{fig:mf_converge} shows that even \emph{without} regularization term like in the modified objective~\eqref{eqn:intro_mf_reg_obj} GD with random initialization converges to a global minimum and the convergence rate is also competitive.
A more interesting phenomenon is shown in Figure~\ref{fig:balanced_norm} in which we track the Frobenius norms of $\mat{U}$ and $\mat{V}$ in all iterations. The plot shows that the ratio between norms remains a constant in all iterations. 
Thus the unbalancedness does not occur at all!
In many practical applications, many models also admit the homogeneous property (like deep neural networks) and first order methods often converge to a balanced solution.
A natural question arises: \begin{center}
\textbf{Why does GD balance multiple layers and converge in learning homogeneous functions?}
\end{center}

In this paper, we take an important step towards answering this question.
Our key finding is that the gradient descent algorithm provides an implicit regularization on the target homogeneous function.
First, we show that on the gradient flow (gradient descent with infinitesimal step size)  trajectory induced by any differentiable loss function, for a large class of homogeneous models, including fully connected and convolutional neural networks with linear, ReLU and Leaky ReLU activations, the differences between squared norms across layers remain invariant.
Thus, as long as at the beginning the differences are small, they remain small at all time.
Note that small differences arise in commonly used initialization schemes such as  $\frac{1}{\sqrt{d}}$ Gaussian initialization or Xavier/Kaiming initialization schemes~\citep{glorot2010understanding,he2016deep}.
Our result thus explains why using ReLU activation is a better choice than sigmoid from the optimization point view.
For linear activation, we prove an even stronger invariance for gradient flow: we show that $\mat{W}^{(h)} (\mat{W}^{(h)})^\top  - (\mat{W}^{(h+1)})^\top \mat{W}^{(h+1)} $ stays invariant over time, where $\mat{W}^{(h)}$ and $\mat{W}^{(h+1)}$ are weight matrices in consecutive layers with linear activation in between.


Next, we go beyond gradient flow and consider gradient descent with positive step size.
We focus on the asymmetric matrix factorization problem~\eqref{eqn:intro_mf_obj}.
Our invariance result for linear activation indicates that $\mat{U}^\top \mat{U} - \mat{V}^\top\mat{V}$ stays unchanged for gradient flow.
For gradient descent,  $\mat{U}^\top \mat{U} - \mat{V}^\top\mat{V}$ can change over iterations.
Nevertheless we show that if the step size decreases like $\eta_t = O\left(t^{-\left( \frac12+\delta\right)} \right)$ ($0<\delta\le\frac12$), $\mat{U}^\top \mat{U} - \mat{V}^\top\mat{V}$ will remain small in all iterations.
In the set where $\mat{U}^\top \mat{U} - \mat{V}^\top \mat{V}$ is small, the loss is coercive, and gradient descent thus ensures that all the iterates are bounded.
Using these properties, we then show that gradient descent converges to a globally optimal solution.
Furthermore, for rank-$1$ asymmetric matrix factorization, we give a finer analysis and show that randomly initialized gradient descent with \emph{constant} step size converges to the global minimum at a globally linear rate.

\subsection{Related Work}
\label{sec:rel} 
	The homogeneity issue has been previously discussed by~\cite{neyshabur2015path,neyshabur2015data}. The authors proposed a variant of stochastic gradient descent that regularizes paths in a neural network, which is related to the max-norm.
The algorithm outperforms gradient descent and AdaGrad on several classification tasks.

A line of research focused on analyzing gradient descent dynamics for (convolutional) neural networks with one or two unknown layers~\citep{tian2017analytical,brutzkus2017globally,du2017convolutional,du2017spurious,zhong2017recovery,li2017convergence,ma2017power,brutzkus2017sgd}.
For one unknown layer, there is no homogeneity issue.
While for two unknown layers, existing work either requires learning two layers separately~\citep{zhong2017recovery,ge2017learning} or uses re-parametrization like weight normalization to remove the homogeneity issue~\citep{du2017spurious}.
To our knowledge, there is no rigorous analysis for optimizing multi-layer homogeneous functions.

For a general (non-convex) optimization problem, it is known that
if the objective function satisfies (i) gradient changes smoothly if the parameters are perturbed, 
(ii) all saddle points and local maxima are strict (i.e., there exists a direction with negative curvature), and (iii) all local minima are global (no spurious local minimum), then gradient descent~\citep{lee2016gradient, panageas2016gradient} converges to a global minimum.
There have been many studies on the optimization landscapes of neural networks~\citep{kawaguchi2016deep,choromanska2015loss,du2018power,hardt2016identity,bartlett2018gradient,haeffele2015global,freeman2016topology,vidal2017mathematics,safran2016quality,zhou2017landscape,nguyen2017loss,nguyen2017loss2,zhou2017landscape,safran2017spurious}, showing that the objective functions have properties (ii) and (iii).
Nevertheless, the objective function is in general not smooth as we discussed before.
Our paper complements these results by showing that the magnitudes of all layers are balanced and in many cases, this implies smoothness.

\subsection{Paper Organization}
The rest of the paper is organized as follows.
In Section~\ref{sec:conserved}, we present our main theoretical result on the implicit regularization property of gradient flow for optimizing neural networks.
In Section~\ref{sec:mf}, we analyze the dynamics of randomly initialized gradient descent for asymmetric matrix factorization problem with unregularized objective function~\eqref{eqn:intro_mf_obj}.
In Section~\ref{sec:exp}, we empirically verify the theoretical result in Section~\ref{sec:conserved}.
We conclude and list future directions in Section~\ref{sec:con}.
Some technical proofs are deferred to the appendix.

\subsection{Notation}
We use bold-faced letters for vectors and matrices.
For a vector $\vect x$, denote by $\vect x[i]$ its $i$-th coordinate.
For a matrix $\vect A$, we use $\mat A[i, j]$ to denote its $(i, j)$-th entry, and use $\mat A[i, :]$ and $\mat A[:, j]$ to denote its $i$-th row and $j$-th column, respectively (both as column vectors).
We use $\norm{\cdot}_2$ or $\norm{\cdot}$ to denote the Euclidean norm of a vector, and use $\norm{\cdot}_F$ to denote the Frobenius norm of a matrix.
We use $\langle \cdot, \cdot \rangle$ to denote the standard Euclidean inner product between two vectors or two matrices.
Let $[n] = \{1, 2, \ldots, n\}$.


\section{The Auto-Balancing Properties in Deep Neural Networks}
\label{sec:conserved}
In this section we study the implicit regularization imposed by gradient descent with infinitesimal step size (gradient flow) in training deep neural networks.
In Section~\ref{subsec:conserved-fully-connected} we consider fully connected neural networks, and our main result (Theorem~\ref{thm:conserved-neuron}) shows that gradient flow automatically balances the incoming and outgoing weights at every neuron.
This directly implies that the weights between different layers are balanced (Corollary~\ref{cor:conserved-F-norm}).
For linear activation, we derive a stronger auto-balancing property (Theorem~\ref{thm:conserved-linear}).
In Section~\ref{subsec:cnn} we generalize our result from fully connected neural networks to convolutional neural networks.
In Section~\ref{subsec:proof_main} we present the proof of Theorem~\ref{thm:conserved-neuron}.
The proofs of other theorems in this section follow similar ideas 
and are deferred to Appendix~\ref{sec:proof-conserved}.

\subsection{Fully Connected Neural Networks} \label{subsec:conserved-fully-connected}

We first formally define a fully connected feed-forward neural network with $N$ ($N\ge2$) layers.
Let $\mat{W}^{(h)} \in \R^{n_h \times n_{h-1}}$ be the weight matrix in the $h$-th layer, and define $\vect{w} = ( \mat{W}^{(h)} )_{h=1}^N $ as a shorthand of the collection of all the weights.
Then the function $f_{\vect{w}}: \R^d \to \R^p$ ($d = n_0, p = n_N$) computed by this network can be defined recursively: $f_{\vect{w}}^{(1)}(\vect{x}) = \mat{W}^{(1)}\vect{x}$, $f_{\vect{w}}^{(h)}(\vect{x}) = \mat{W}^{(h)} \phi_{h-1}(f_{\vect{w}}^{(h-1)}(\vect{x}))$ ($h = 2, \ldots, N$), and $f_{\vect{w}}(\vect{x}) = f_{\vect{w}}^{(N)}(\vect{x})$, where each $\phi_h$ is an activation function that acts coordinate-wise on vectors.\footnote{We omit the trainable bias weights in the network for simplicity, but our results can be directly generalized to allow bias weights.}
We assume that each $\phi_h$ ($h\in[N-1]$) is \emph{homogeneous}, namely, $\phi_h(x) = \phi_h'(x)\cdot x$ for all $x$ and all elements of the sub-differential $\phi_h'(\cdot) $ when $\phi_h$ is non-differentiable at $x$.
This property is satisfied by functions like ReLU $\phi(x) = \max\{x, 0\}$, Leaky ReLU $\phi(x) = \max\{x, \alpha x\}$ ($0<\alpha<1$), and linear function $\phi(x) = x$.

Let $\ell: \R^p \times \R^p \to \R_{\ge0}$ be a differentiable loss function. Given a training dataset $\left\{ (\vect{x}_i, \vect{y}_i) \right\}_{i=1}^m \subset \R^d\times\R^p$, the training loss as a function of the network parameters $\vect{w}$ is defined as \begin{align}
L(\vect{w}) = \frac1m \sum_{i=1}^m \ell\left( f_{\vect{w}}(\vect{x}_i), \vect{y}_i \right). \label{eqn:nn_loss}
\end{align}

We consider gradient descent with infinitesimal step size (also known as gradient flow) applied on $L(\vect{w})$, which is captured by the differential inclusion:
\begin{equation} \label{eqn:gf-nn}
\frac{\d\mat{W}^{(h)}}{\dt} \in - \frac{\partial L(\vect{w})}{\partial \mat{W}^{(h)}}, \qquad h = 1, \ldots, N,
\end{equation}  
where $t$ is a continuous time index, and $\frac{\partial L(\vect{w})}{\partial \mat{W}^{(h)}}$ is the Clarke sub-differential \citep{clarke2008nonsmooth}. If curves  ${\mat{W}}^{(h)} = {\mat{W}}^{(h)} (t)$ ($h\in[N]$) evolve with time according to \eqref{eqn:gf-nn} they are said to be a solution of the gradient flow differential inclusion.

Our main result in this section is the following invariance imposed by gradient flow.
\begin{thm}[Balanced incoming and outgoing weights at every neuron] \label{thm:conserved-neuron}
	For any $h\in[N-1]$ and $i\in[n_h]$, we have
	\begin{equation} \label{eqn:conserved-neuron}
	\frac{\d}{\dt} \left( \|\mat{W}^{(h)}[i, :]\|^2 - \|\mat{W}^{(h+1)}[:,i]\|^2 \right) = 0.
	\end{equation}
\end{thm}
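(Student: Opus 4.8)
The plan is to reduce the whole statement to a single pointwise algebraic identity relating the two inner products $\langle \mat{W}^{(h)}[i,:],\, \partial L/\partial \mat{W}^{(h)}[i,:]\rangle$ and $\langle \mat{W}^{(h+1)}[:,i],\, \partial L/\partial \mat{W}^{(h+1)}[:,i]\rangle$, and then to read off the conserved quantity directly from the gradient flow equations. First I would fix a single training example $(\vect{x},\vect{y})$ and write out the forward pass: $\vect{x}^{(0)}=\vect{x}$, $\vect{z}^{(h)}=\mat{W}^{(h)}\vect{x}^{(h-1)}$ for $h\in[N]$, $\vect{x}^{(h)}=\phi_h(\vect{z}^{(h)})$ for $h\in[N-1]$, with output $\vect{z}^{(N)}=f_{\vect{w}}(\vect{x})$. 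Then I would introduce the backpropagated error $\vect{\delta}^{(h)}:=\partial \ell(\vect{z}^{(N)},\vect{y})/\partial \vect{z}^{(h)}$, which obeys the standard recursion $\vect{\delta}^{(h)}=\phi_h'(\vect{z}^{(h)})\odot\big((\mat{W}^{(h+1)})^\top\vect{\delta}^{(h+1)}\big)$ for $h<N$, together with $\partial \ell/\partial \mat{W}^{(h)}=\vect{\delta}^{(h)}(\vect{x}^{(h-1)})^\top$. Since $\phi_h$ may be nondifferentiable (ReLU at $0$), I would keep $\phi_h'(\cdot)$ as a fixed element of the Clarke subdifferential throughout, using only the homogeneity identity $\phi_h'(x)\,x=\phi_h(x)$ assumed in the excerpt.

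The core step is a short chain of equalities for the row/column attached to neuron $i$ of layer $h$. On one hand,
\[
\Big\langle \mat{W}^{(h)}[i,:],\, \tfrac{\partial \ell}{\partial \mat{W}^{(h)}[i,:]} \Big\rangle = \vect{\delta}^{(h)}[i]\,\big\langle \mat{W}^{(h)}[i,:],\vect{x}^{(h-1)}\big\rangle = \vect{\delta}^{(h)}[i]\,\vect{z}^{(h)}[i],
\]
and expanding $\vect{\delta}^{(h)}[i]=\phi_h'(\vect{z}^{(h)}[i])\,\big\langle \mat{W}^{(h+1)}[:,i],\vect{\delta}^{(h+1)}\big\rangle$ and using $\phi_h'(\vect{z}^{(h)}[i])\,\vect{z}^{(h)}[i]=\phi_h(\vect{z}^{(h)}[i])=\vect{x}^{(h)}[i]$ turns this into $\big\langle \mat{W}^{(h+1)}[:,i],\vect{\delta}^{(h+1)}\big\rangle\,\vect{x}^{(h)}[i]$. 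On the other hand, since $\partial\ell/\partial \mat{W}^{(h+1)}=\vect{\delta}^{(h+1)}(\vect{x}^{(h)})^\top$, its $i$-th column equals $\vect{\delta}^{(h+1)}\,\vect{x}^{(h)}[i]$, so $\langle \mat{W}^{(h+1)}[:,i],\, \partial\ell/\partial \mat{W}^{(h+1)}[:,i]\rangle=\big\langle \mat{W}^{(h+1)}[:,i],\vect{\delta}^{(h+1)}\big\rangle\,\vect{x}^{(h)}[i]$, which is the same expression. Averaging over the $m$ training points replaces $\ell$ by $L$ and gives $\langle \mat{W}^{(h)}[i,:],\, \partial L/\partial \mat{W}^{(h)}[i,:]\rangle=\langle \mat{W}^{(h+1)}[:,i],\, \partial L/\partial \mat{W}^{(h+1)}[:,i]\rangle$. (Conceptually, this identity is the derivative at $c=1$ of the rescaling invariance $L(\dots,c\,\mat{W}^{(h)}[i,:],\dots,c^{-1}\mat{W}^{(h+1)}[:,i],\dots)=L(\vect{w})$, which itself follows from homogeneity of $\phi_h$; I would mention this as an alternative route but carry out the direct backprop computation since it is cleaner with subgradients.)

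To finish, I would differentiate the squared norms along any solution of the differential inclusion~\eqref{eqn:gf-nn}: for almost every $t$,
\[
\frac{\d}{\dt}\|\mat{W}^{(h)}[i,:]\|^2 = 2\Big\langle \mat{W}^{(h)}[i,:],\, \frac{\d \mat{W}^{(h)}[i,:]}{\dt}\Big\rangle = -2\Big\langle \mat{W}^{(h)}[i,:],\, \frac{\partial L}{\partial \mat{W}^{(h)}[i,:]}\Big\rangle,
\]
and likewise for $\|\mat{W}^{(h+1)}[:,i]\|^2$ with the matching subgradient element; subtracting and invoking the identity above gives \eqref{eqn:conserved-neuron}. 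I expect the main obstacle to be purely the nonsmoothness bookkeeping: justifying that the backpropagation formulas hold verbatim for the Clarke subdifferential, that a solution of \eqref{eqn:gf-nn} is absolutely continuous so the chain rule for $\tfrac{\d}{\dt}\|\cdot\|^2$ is valid at a.e.\ $t$, and that the chain of equalities is insensitive to which subgradient element is selected along the trajectory — all of which hinge on the single structural assumption $\phi_h'(x)\,x=\phi_h(x)$. Everything else is a routine manipulation of the backprop recursion.
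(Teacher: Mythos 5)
Your proposal is correct and follows essentially the same route as the paper's proof: reduce to a single sample, express the row/column gradients via the backprop recursion, and cancel using the homogeneity identity $\phi_h'(x)x=\phi_h(x)$ to show the two inner products $\langle \mat{W}^{(h)}[i,:],\partial L/\partial \mat{W}^{(h)}[i,:]\rangle$ and $\langle \mat{W}^{(h+1)}[:,i],\partial L/\partial \mat{W}^{(h+1)}[:,i]\rangle$ coincide. Introducing explicit $\vect{\delta}^{(h)}$ notation and flagging the rescaling-invariance viewpoint are cosmetic; the substance matches.
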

Note that $\mat{W}^{(h)}[i, :]$ is a vector consisting of network weights coming into the $i$-th neuron in the $h$-th hidden layer, and $\mat{W}^{(h+1)}[:,i]$ is the vector of weights going out from the same neuron. 
 Therefore, Theorem~\ref{thm:conserved-neuron} shows that gradient flow exactly preserves the difference between the squared $\ell_2$-norms of incoming weights and outgoing weights at any neuron.

Taking sum of \eqref{eqn:conserved-neuron} over $i\in[n_h]$, we obtain the following corollary which says gradient flow preserves the difference between the squares of Frobenius norms of weight matrices.
\begin{cor}[Balanced weights across layers] \label{cor:conserved-F-norm}
	For any $h\in[N-1]$, we have
	\begin{equation*}
	\frac{\d}{\dt} \left( \|\mat{W}^{(h)}\|_F^2 - \|\mat{W}^{(h+1)}\|_F^2 \right) = 0.
	\end{equation*}
\end{cor}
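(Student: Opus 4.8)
The plan is to obtain the corollary as an immediate consequence of Theorem~\ref{thm:conserved-neuron} by summing the neuron-wise conservation law over all neurons in the $h$-th hidden layer. The only structural fact required is the decomposition of a squared Frobenius norm: for any matrix, $\|\cdot\|_F^2$ equals the sum of the squared Euclidean norms of its rows, and equally the sum over its columns. Since $\mat{W}^{(h)}\in\R^{n_h\times n_{h-1}}$ has exactly $n_h$ rows and $\mat{W}^{(h+1)}\in\R^{n_{h+1}\times n_h}$ has exactly $n_h$ columns, both decompositions are indexed by the same set $[n_h]$, which is precisely the range of the index $i$ in Theorem~\ref{thm:conserved-neuron}.

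Concretely, I would first write
\[
\|\mat{W}^{(h)}\|_F^2 - \|\mat{W}^{(h+1)}\|_F^2 = \sum_{i=1}^{n_h}\left( \|\mat{W}^{(h)}[i, :]\|^2 - \|\mat{W}^{(h+1)}[:, i]\|^2 \right).
\]
Because this is a finite sum of differentiable functions of the time index $t$, the derivative $\frac{\d}{\dt}$ commutes with the summation, so
\[
\frac{\d}{\dt}\left( \|\mat{W}^{(h)}\|_F^2 - \|\mat{W}^{(h+1)}\|_F^2 \right) = \sum_{i=1}^{n_h} \frac{\d}{\dt}\left( \|\mat{W}^{(h)}[i, :]\|^2 - \|\mat{W}^{(h+1)}[:, i]\|^2 \right).
\]
Theorem~\ref{thm:conserved-neuron} asserts that each summand on the right-hand side is identically zero, hence so is the left-hand side, which is exactly the claim. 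If an integrated form is desired, one notes that each neuron-wise difference is constant along any solution curve of the gradient flow inclusion \eqref{eqn:gf-nn}, so their sum is too, giving $\|\mat{W}^{(h)}(t)\|_F^2 - \|\mat{W}^{(h+1)}(t)\|_F^2 = \|\mat{W}^{(h)}(0)\|_F^2 - \|\mat{W}^{(h+1)}(0)\|_F^2$ for all $t\ge 0$.

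There is essentially no analytic obstacle: the corollary is a one-line bookkeeping consequence once Theorem~\ref{thm:conserved-neuron} is available. The only point that deserves a moment of care — and the only place an error could creep in — is checking the indexing, namely that the weights \emph{incoming} to the $i$-th neuron of layer $h$ are exactly the $i$-th row of $\mat{W}^{(h)}$ while the weights \emph{outgoing} from that neuron form the $i$-th column of $\mat{W}^{(h+1)}$, and that there are $n_h$ such neurons so that both Frobenius-norm decompositions and the statement of Theorem~\ref{thm:conserved-neuron} range over the same index set $[n_h]$. This matching was verified above, so the summation is legitimate and the proof closes.
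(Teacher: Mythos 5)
Your proof is correct and matches the paper's argument exactly: the corollary is obtained by summing the neuron-wise identity of Theorem~\ref{thm:conserved-neuron} over $i\in[n_h]$, using the row/column decomposition of the Frobenius norm. The extra care you take in checking that both decompositions are indexed by the same set $[n_h]$ is sound and is precisely the bookkeeping the paper leaves implicit.
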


Corollary~\ref{cor:conserved-F-norm} explains why in practice, trained multi-layer models usually have similar magnitudes on all the layers:
if we use a small initialization,  $\|\mat{W}^{(h)}\|_F^2 - \|\mat{W}^{(h+1)}\|_F^2$ is very small at the beginning, and Corollary~\ref{cor:conserved-F-norm} implies this difference remains small at all time.
This finding also partially explains why gradient descent converges.
Although the objective function like \eqref{eqn:nn_loss} may not be smooth over the entire parameter space, given that $\|\mat{W}^{(h)}\|_F^2 - \|\mat{W}^{(h+1)}\|_F^2$ is small for all $h$, the objective function may have smoothness.
Under this condition, standard theory shows that gradient descent converges.
We believe this finding serves as a key building block for understanding first order methods for training deep neural networks.

For linear activation, we have the following stronger invariance than Theorem~\ref{thm:conserved-neuron}:
\begin{thm}[Stronger balancedness property for linear activation] \label{thm:conserved-linear}
	If for some $h\in[N-1]$ we have $\phi_h(x) = x$, then
	\begin{equation*}
	\frac{\d}{\dt} \left( \mat{W}^{(h)} (\mat{W}^{(h)})^\top  - (\mat{W}^{(h+1)})^\top \mat{W}^{(h+1)} \right) = \mat{0}.
	\end{equation*}
\end{thm}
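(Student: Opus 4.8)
The plan is to imitate the argument for Theorem~\ref{thm:conserved-neuron}, but to track the full matrix $\mat{W}^{(h)} (\mat{W}^{(h)})^\top - (\mat{W}^{(h+1)})^\top \mat{W}^{(h+1)}$ instead of only its diagonal entries (which encode the per-neuron squared norms). First I would set up backpropagation notation: for a training input $\vect{x}_i$ let $\vect{x}^{(0)}_i = \vect{x}_i$ and $\vect{x}^{(h)}_i = \phi_h(f^{(h)}_{\vect{w}}(\vect{x}_i))$ be the post-activation output of layer $h$, so that $f^{(h)}_{\vect{w}}(\vect{x}_i) = \mat{W}^{(h)}\vect{x}^{(h-1)}_i$, and let $\vect{b}^{(h)}_i := \partial L/\partial f^{(h)}_{\vect{w}}(\vect{x}_i)$ be the error signal at the pre-activation of layer $h$. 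The chain rule then gives the standard formulas $\frac{\partial L}{\partial \mat{W}^{(h)}} = \sum_i \vect{b}^{(h)}_i (\vect{x}^{(h-1)}_i)^\top$ and $\frac{\partial L}{\partial \mat{W}^{(h+1)}} = \sum_i \vect{b}^{(h+1)}_i (\vect{x}^{(h)}_i)^\top$, valid for any fixed choice of subgradients in the non-differentiable layers.

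The crux is to use $\phi_h(x) = x$ \emph{twice}: once on the forward pass, where it gives $\vect{x}^{(h)}_i = \mat{W}^{(h)}\vect{x}^{(h-1)}_i$, and once on the backward pass, where the Jacobian of $\phi_h$ is the identity and hence $\vect{b}^{(h)}_i = (\mat{W}^{(h+1)})^\top\vect{b}^{(h+1)}_i$. Plugging these into the two gradient formulas, both $\frac{\partial L}{\partial \mat{W}^{(h)}}(\mat{W}^{(h)})^\top$ and $(\mat{W}^{(h+1)})^\top \frac{\partial L}{\partial \mat{W}^{(h+1)}}$ collapse to the same expression $\sum_i (\mat{W}^{(h+1)})^\top\vect{b}^{(h+1)}_i (\vect{x}^{(h)}_i)^\top$, yielding
\[
\frac{\partial L}{\partial \mat{W}^{(h)}}\,(\mat{W}^{(h)})^\top \;=\; (\mat{W}^{(h+1)})^\top \frac{\partial L}{\partial \mat{W}^{(h+1)}},
\]
and, transposing, $\mat{W}^{(h)}\big(\tfrac{\partial L}{\partial \mat{W}^{(h)}}\big)^\top = \big(\tfrac{\partial L}{\partial \mat{W}^{(h+1)}}\big)^\top \mat{W}^{(h+1)}$.

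To conclude I would differentiate the two matrix products along the flow, using the product rule together with \eqref{eqn:gf-nn}:
\begin{align*}
\frac{\d}{\dt}\big(\mat{W}^{(h)}(\mat{W}^{(h)})^\top\big) &= -\frac{\partial L}{\partial \mat{W}^{(h)}}(\mat{W}^{(h)})^\top \;-\; \mat{W}^{(h)}\Big(\frac{\partial L}{\partial \mat{W}^{(h)}}\Big)^\top, \\
\frac{\d}{\dt}\big((\mat{W}^{(h+1)})^\top\mat{W}^{(h+1)}\big) &= -\Big(\frac{\partial L}{\partial \mat{W}^{(h+1)}}\Big)^\top\mat{W}^{(h+1)} \;-\; (\mat{W}^{(h+1)})^\top\frac{\partial L}{\partial \mat{W}^{(h+1)}}.
\end{align*}
The two identities from the previous paragraph match the right-hand sides term by term, so the two time derivatives are equal and their difference vanishes, which is the claim.

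I do not expect a genuine obstacle beyond careful bookkeeping: the entire content is that linearity of $\phi_h$ makes the forward map $\vect{x}^{(h-1)}_i \mapsto \vect{x}^{(h)}_i$ and the adjoint of the backward map $\vect{b}^{(h+1)}_i \mapsto \vect{b}^{(h)}_i$ the same linear map, so that "right-multiply $\frac{\partial L}{\partial \mat{W}^{(h)}}$ by $(\mat{W}^{(h)})^\top$" and "left-multiply $\frac{\partial L}{\partial \mat{W}^{(h+1)}}$ by $(\mat{W}^{(h+1)})^\top$" produce the same matrix. The one point worth a sentence is that we never differentiate the other activations $\phi_{h'}$ ($h'\ne h$): only the identity Jacobian of $\phi_h$ enters, so the argument is insensitive to any measurable selection from their Clarke sub-differentials, exactly as in the proof of Theorem~\ref{thm:conserved-neuron}. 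Finally, I would remark that reading off the diagonal of the claimed matrix identity recovers \eqref{eqn:conserved-neuron} and taking its trace recovers Corollary~\ref{cor:conserved-F-norm}, so for a linear layer this matrix statement is the strongest of the three.
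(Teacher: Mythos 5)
Your proof is correct and follows essentially the same route as the paper's: compute $\partial L/\partial \mat W^{(h)}$ and $\partial L/\partial \mat W^{(h+1)}$ via the chain rule, exploit $\phi_h = \mathrm{id}$ (both in the forward relation $\vect{x}^{(h)} = \mat W^{(h)}\vect{x}^{(h-1)}$ and in the backward relation $\vect b^{(h)} = (\mat W^{(h+1)})^\top\vect b^{(h+1)}$) to establish $\frac{\partial L}{\partial \mat W^{(h)}}(\mat W^{(h)})^\top = (\mat W^{(h+1)})^\top\frac{\partial L}{\partial \mat W^{(h+1)}}$, and then conclude by the product rule along the flow. The paper reduces to a single sample and writes $\frac{\partial L}{\partial \vect{x}^{(h+1)}}$ where you write $\vect b^{(h+1)}_i$ and sum over $i$, but these are cosmetic differences.
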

This result was known for linear networks \citep{arora2018optimization}, but the proof there relies on the entire network being linear while Theorem~\ref{thm:conserved-linear}  only needs two consecutive layers to have no nonlinear activations in between.

While Theorem~\ref{thm:conserved-neuron} shows the invariance in a node-wise manner, Theorem~\ref{thm:conserved-linear} shows for linear activation, we can derive a layer-wise invariance.
Inspired by this strong invariance, in Section~\ref{sec:mf} we prove gradient descent with positive step sizes preserves this invariance approximately for matrix factorization.

\subsection{Convolutional Neural Networks}
\label{subsec:cnn}
Now we show that the conservation property in Corollary~\ref{cor:conserved-F-norm} can be generalized to convolutional neural networks.
In fact, we can allow \emph{arbitrary sparsity pattern and weight sharing structure} within a layer; convolutional layers are a special case.

\paragraph{Neural networks with sparse connections and shared weights.}
We use the same notation as in Section~\ref{subsec:conserved-fully-connected}, with the difference that some weights in a layer can be \emph{missing} or \emph{shared}.
Formally, the weight matrix $\mat W^{(h)} \in \R^{n_h\times n_{h-1}}$ in layer $h$ ($h\in [N]$) can be described by a vector $\vect{v}^{(h)} \in \R^{d_h}$  and a function $g_h: [n_h]\times[n_{h-1}] \to [d_h]\cup\{0\}$.
Here $\vect{v}^{(h)}$ consists of the actual \emph{free parameters} in this layer and $d_h$ is the number of free parameters (e.g. if there are $k$ convolutional filters in layer $h$ each with size $r$, we have $d_h = r\cdot k$).
The map $g_h$ represents the sparsity and weight sharing pattern:
\begin{align*}
\mat W^{(h)}[i, j] = \begin{cases}
0, & g_h(i, j) = 0, \\
\vect{v}^{(h)}[k], & g_h(i, j) =k > 0.
\end{cases}
\end{align*}
Denote by $\vect{v} = \left( \vect{v}^{(h)} \right)_{h=1}^N$ the collection of all the parameters in this network, and we consider gradient flow to learn the parameters:
\begin{equation*}
\frac{\d\vect{v}^{(h)}}{\dt} \in  - \frac{\partial L(\vect{v})}{\partial \vect{v}^{(h)}}, \qquad h = 1, \ldots, N.
\end{equation*}

The following theorem generalizes Corollary~\ref{cor:conserved-F-norm} to neural networks with sparse connections and shared weights:
\begin{thm}\label{thm:cnn}
	For any $h\in[N-1]$, we have
	\begin{equation*}
	\frac{\d}{\dt} \left( \|\vect{v}^{(h)}\|^2 - \|\vect{v}^{(h+1)}\|^2 \right) = 0.
	\end{equation*}
\end{thm}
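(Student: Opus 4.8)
The plan is to follow the same strategy as in the proof of Theorem~\ref{thm:conserved-neuron}: express the time derivative of each squared norm as an inner product with the corresponding block of the gradient, and then show the two inner products coincide by exploiting homogeneity. Concretely, from the gradient flow inclusion $\dot{\vect{v}}^{(h)} \in -\partial L(\vect{v})/\partial \vect{v}^{(h)}$ one gets
\[
\frac{\d}{\dt}\|\vect{v}^{(h)}\|^2 = 2\left\langle \vect{v}^{(h)}, \dot{\vect{v}}^{(h)}\right\rangle = -2\left\langle \vect{v}^{(h)}, \frac{\partial L(\vect{v})}{\partial \vect{v}^{(h)}}\right\rangle,
\]
and likewise for layer $h+1$, so it suffices to establish the Euler-type identity
\[
\left\langle \vect{v}^{(h)}, \frac{\partial L(\vect{v})}{\partial \vect{v}^{(h)}}\right\rangle = \left\langle \vect{v}^{(h+1)}, \frac{\partial L(\vect{v})}{\partial \vect{v}^{(h+1)}}\right\rangle
\]
at every point $\vect{v}$ on the trajectory.

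The key step is a joint homogeneity observation at the level of free parameters. For $c>0$, let $\vect{v}(c)$ be obtained from $\vect{v}$ by replacing $\vect{v}^{(h)}$ with $c\,\vect{v}^{(h)}$ and $\vect{v}^{(h+1)}$ with $c^{-1}\vect{v}^{(h+1)}$, leaving all other layers untouched. Since each entry $\mat{W}^{(h)}[i,j]$ is either $0$ or a single coordinate of $\vect{v}^{(h)}$, scaling $\vect{v}^{(h)}$ by $c$ scales the \emph{entire} matrix $\mat{W}^{(h)}$ by $c$ — the sparsity/sharing map $g_h$ and the multiplicities are irrelevant here. Hence the pre-activation $f_{\vect{v}}^{(h)}$ is scaled by $c$; by homogeneity of $\phi_h$ the post-activation $\phi_h(f_{\vect{v}}^{(h)})$ is scaled by $c$; this exactly cancels the factor $c^{-1}$ on $\mat{W}^{(h+1)}$, so $f_{\vect{v}}^{(h+1)}$, and therefore $f_{\vect{v}}$ and $L(\vect{v})$, are unchanged. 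Thus $c\mapsto L(\vect{v}(c))$ is constant on $(0,\infty)$, and differentiating it at $c=1$ via the chain rule yields precisely the Euler identity above. Substituting back into the two time-derivative expressions gives $\frac{\d}{\dt}\bigl(\|\vect{v}^{(h)}\|^2 - \|\vect{v}^{(h+1)}\|^2\bigr) = 0$.

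I expect the only genuine subtlety to be the non-smoothness of ReLU/Leaky ReLU: the manipulations must be carried out with Clarke sub-differentials rather than ordinary gradients, so I would reuse the chain-rule bookkeeping already set up in the proof of Theorem~\ref{thm:conserved-neuron}. Note that the identity $L(\vect{v}(c)) \equiv L(\vect{v})$ holds pointwise and exactly, independent of differentiability, so the main thing to verify is that applying the chain rule to this constant function produces the claimed inner-product identity for every element of the sub-differential. A secondary, purely bookkeeping point is to confirm that weight sharing is harmless: because $\|\vect{v}^{(h)}\|^2 \ne \|\mat{W}^{(h)}\|_F^2$ in general, Corollary~\ref{cor:conserved-F-norm} cannot be invoked directly, but the scaling argument above is linear in $\vect{v}^{(h)}$, so the multiplicities never enter and the statement is about $\|\vect{v}^{(h)}\|^2$ throughout.
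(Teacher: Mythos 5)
Your proof is correct, and it takes a genuinely different route from the paper's. The paper establishes the identity by a direct chain-rule computation through the weight-sharing map: it shows that both $\frac{\d}{\dt}\|\vect{v}^{(h)}\|^2$ and $\frac{\d}{\dt}\|\vect{v}^{(h+1)}\|^2$ equal $-2\big\langle \frac{\partial L(\vect{v})}{\partial \vect{x}^{(h+1)}}, \vect{x}^{(h+1)}\big\rangle$, unwinding the sums over $(k,i)$ with $g_{h+1}(k,i)=l$ and applying the entrywise identity $\phi_h'(x)\cdot x = \phi_h(x)$. You instead identify the global one-parameter rescaling symmetry $(\vect{v}^{(h)},\vect{v}^{(h+1)})\mapsto(c\,\vect{v}^{(h)},c^{-1}\vect{v}^{(h+1)})$ that leaves $L$ invariant and differentiate it at $c=1$ to get the Euler identity — a Noether-type ``symmetry implies conservation law'' derivation. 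This is cleaner in that it makes the sharing pattern manifestly irrelevant (as you note, scaling $\vect{v}^{(h)}$ by $c$ lifts to scaling $\mat{W}^{(h)}$ by $c$ regardless of $g_h$, precisely because each matrix entry is either $0$ or a single coordinate of $\vect{v}^{(h)}$), whereas the paper verifies the same cancellation by hand term-by-term; it also explains in one stroke why $\|\vect{v}^{(h)}\|^2$, rather than $\|\mat{W}^{(h)}\|_F^2$, is the conserved quantity in the shared-weight setting, since the scaling acts linearly on the free parameters and multiplicities never enter. The one place your argument hides work is the passage from ``$L(\vect{v}(c))$ is constant in $c$'' to the inner-product identity holding for every element of the Clarke sub-differential: the rescaling path is a curve rather than a line, so one must check that the generalized directional derivatives $L^\circ(\vect{v};\pm d)$ vanish for $d=(\vect{v}^{(h)},-\vect{v}^{(h+1)},\mat{0},\ldots)$, which does follow because the scaling flow is generated by the (linear, hence Lipschitz) vector field $\vect{w}\mapsto(\vect{w}^{(h)},-\vect{w}^{(h+1)},\mat{0},\ldots)$ and $L$ is locally Lipschitz. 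You flag this subtlety correctly; it is comparable in delicacy to the nonsmooth chain-rule issues the paper handles by citation.
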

Therefore, for a neural network with arbitrary sparsity pattern and weight sharing structure, gradient flow still balances the magnitudes of all layers.

\subsection{Proof of Theorem~\ref{thm:conserved-neuron}}
\label{subsec:proof_main}

The proofs of all theorems in this section are similar. They are based on the use of the chain rule (i.e. back-propagation) and the property of homogeneous activations.
Below we provide the proof of Theorem~\ref{thm:conserved-neuron} and defer the proofs of other theorems to Appendix~\ref{sec:proof-conserved}.

\begin{proof}[Proof of Theorem~\ref{thm:conserved-neuron}]
	First we note that we can without loss of generality assume $L$ is the loss associated with one data sample $(\vect{x}, \vect{y}) \in \R^d\times\R^p$, i.e., $L(\vect{w}) = \ell(f_{\vect{w}}(\vect{x}), \vect{y})$.
	In fact, for $L(\vect{w}) = \frac1m \sum_{k=1}^m L_k(\vect{w})$ where $L_k(\vect{w}) = \ell\left( f_{\vect{w}}(\vect{x}_k), \vect{y}_k \right)$, for any single weight $\mat W^{(h)}[i, j]$ in the network we can compute $\frac{\d}{\dt} (\mat{W}^{(h)}[i,j])^2 = 2 \mat{W}^{(h)}[i,j] \cdot \frac{\d \mat{W}^{(h)}[i,j]}{\dt} = -2 \mat{W}^{(h)}[i,j] \cdot \frac{\partial L(\vect{w})}{\partial \mat{W}^{(h)}[i,j]} = -2 \mat{W}^{(h)}[i,j] \cdot \frac1m \sum_{k=1}^m \frac{\partial L_k(\vect{w})}{\partial \mat{W}^{(h)}[i,j]}$, using the sharp chain rule of differential inclusions for tame functions \citep{drusvyatskiy2015curves,davis2018stochastic}.
	Thus, if we can prove the theorem for every individual loss $L_k$, we can prove the theorem for $L$ by taking average over $k\in[m]$.
	
	Therefore in the rest of proof we assume $L(\vect{w}) = \ell(f_{\vect{w}}(\vect{x}), \vect{y})$.
	For convenience, we denote $\vect{x}^{(h)} = f_{\vect{w}}^{(h)}(\vect{x})$ ($h\in[N]$), which is the input to the $h$-th hidden layer of neurons for $h\in[N-1]$ and is the output of the network for $h=N$.
	We also denote $\vect{x}^{(0)} = \vect{x}$ and $\phi_0(x)=x$ ($\forall x$).
	
	Now we prove \eqref{eqn:conserved-neuron}.
	Since $\mat{W}^{(h+1)}[k,i]$ ($k\in[n_{h+1}]$) can only affect $L(\vect{w})$ through $\vect{x}^{(h+1)}[k]$  , we have for $k\in[n_{h+1}]$,
	\begin{equation*}
	\frac{\partial L(\vect{w})}{ \partial \mat{W}^{(h+1)}[k,i] }
	=  \frac{\partial L(\vect{w})}{ \partial \vect{x}^{(h+1)}[k] } \cdot \frac{\partial \vect{x}^{(h+1)}[k]}{\partial \mat{W}^{(h+1)}[k,i]}
	= \frac{\partial L(\vect{w})}{ \partial \vect{x}^{(h+1)}[k] } \cdot \phi_{h}(\vect{x}^{(h)}[i]),
	\end{equation*}
	which can be rewritten as
	\begin{equation*}
	\frac{\partial L(\vect{w})}{ \partial \mat{W}^{(h+1)}[:,i] } = \phi_{h}(\vect{x}^{(h)}[i]) \cdot \frac{\partial L(\vect{w})}{ \partial \vect{x}^{(h+1)} }.
	\end{equation*}
	It follows that
	\begin{equation} \label{eqn:proof-conserved-neuron-1}
	\begin{aligned}
	\frac{\d}{\dt} \|\mat{W}^{(h+1)}[:,i]\|^2
	&= 2 \left\langle \mat{W}^{(h+1)}[:,i], \frac{\d}{\dt} \mat{W}^{(h+1)}[:,i]  \right\rangle = -2 \left\langle \mat{W}^{(h+1)}[:,i], \frac{\partial L(\vect{w})}{ \partial \mat{W}^{(h+1)}[:,i] }   \right\rangle \\
	&
	= -2 \phi_{h}(\vect{x}^{(h)}[i]) \cdot \left\langle \mat{W}^{(h+1)}[:,i],  \frac{\partial L(\vect{w})}{ \partial \vect{x}^{(h+1)} }  \right\rangle.
	\end{aligned}
	\end{equation}
	On the other hand, $\mat{W}^{(h)}[i, :]$ only affects $L(\vect{w})$ through $\vect x^{(h)}[i]$. Using the chain rule, we get
	\begin{align*}
	\frac{ \partial L(\vect{w}) }{ \partial \mat{W}^{(h)}[i, :] }
	&= \frac{ \partial L(\vect{w}) }{ \partial \vect x^{(h)}[i] } \cdot \phi_{h-1} (\vect x^{(h-1)})= \left\langle \frac{ \partial L(\vect{w}) }{ \partial \vect x^{(h+1)} } , \mat{W}^{(h+1)}[:, i] \right\rangle \cdot \phi_h'(\vect x^{(h)}[i]) \cdot \phi_{h-1} (\vect x^{(h-1)}),
	\end{align*}
	where $\phi'$ is interpreted as a set-valued mapping whenever it is applied at a non-differentiable point.\footnote{More precisely, the equalities  should be an inclusion whenever there is a sub-differential, but as we see in the next display the ambiguity in the choice of sub-differential does not affect later calculations.}
	
	It follows that\footnote{This holds for any choice of element of the sub-differential, since $\phi'(x) x = \phi(x)$ holds at $x=0$ for any choice of sub-differential.}
	\begin{align*}
&	\frac{\d}{\dt} 	\|\mat{W}^{(h)}[i, :]\|^2
= 2 \left\langle \mat{W}^{(h)}[i, :], \frac{\d}{\dt} \mat{W}^{(h)}[i, :]  \right\rangle
	= -2 \left\langle \mat{W}^{(h)}[i, :], \frac{\partial L(\vect{w})}{ \partial \mat{W}^{(h)}[i, :] }   \right\rangle \\
	 =\,& -2 \left\langle \frac{ \partial L(\vect{w}) }{ \partial \vect x^{(h+1)} } , \mat{W}^{(h+1)}[:, i] \right\rangle \cdot \phi_h'(\vect x^{(h)}[i]) \cdot \left\langle \mat{W}^{(h)}[i, :], \phi_{h-1} (\vect x^{(h-1)}) \right\rangle \\
	=\,& -2 \left\langle \frac{ \partial L(\vect{w}) }{ \partial \vect x^{(h+1)} } , \mat{W}^{(h+1)}[:, i] \right\rangle \cdot \phi_h'(\vect x^{(h)}[i]) \cdot \vect x^{(h)}[i] = -2 \left\langle \frac{ \partial L(\vect{w}) }{ \partial \vect x^{(h+1)} } , \mat{W}^{(h+1)}[:, i] \right\rangle \cdot \phi_{h}(\vect{x}^{(h)}[i]).
	\end{align*}
	Comparing the above expression to \eqref{eqn:proof-conserved-neuron-1}, we finish the proof.
\end{proof}

\section{Gradient Descent Converges to Global Minimum for Asymmetric Matrix Factorization}
\label{sec:mf}

In this section we constrain ourselves to the asymmetric matrix factorization problem and analyze the gradient descent algorithm with random initialization.
Our analysis is inspired by the auto-balancing properties presented in Section~\ref{sec:conserved}. We extend these properties from gradient flow to gradient descent with positive step size.

Formally, we study the following non-convex optimization problem:
\begin{equation} \label{eqn:mf}
\min_{\mat{U} \in \R^{d_1\times r}, \mat{V} \in \R^{d_2\times r}} f(\mat{U}, \mat{V}) = \frac12 \norm{\mat{U}\mat{V}^\top - \mat{M}^*}_F^2,
\end{equation}
where $\mat M^* \in \R^{d_1\times d_2}$ has rank $r$.
Note that we do not have any explicit regularization in \eqref{eqn:mf}.
The gradient descent dynamics for \eqref{eqn:mf} have the following form:
\begin{equation} \label{eqn:mf-gd-dynamics}
\mat U_{t+1} = \mat U_t - \eta_t (\mat U_t \mat V_t^\top - \mat M^*) \mat V_t, \qquad
\mat V_{t+1} = \mat V_t - \eta_t (\mat U_t \mat V_t^\top - \mat M^*)^\top \mat U_t.
\end{equation}

\subsection{The General Rank-$r$ Case}

First we consider the general case of $r\ge1$.
Our main theorem below says that if we use a random small initialization $(\mat U_0, \mat V_0)$, and set step sizes $\eta_t$ to be appropriately small, then gradient descent \eqref{eqn:mf-gd-dynamics} will converge to a solution close to the global minimum of \eqref{eqn:mf}.
To our knowledge, this is the first result showing that gradient descent with random initialization directly solves the un-regularized asymmetric matrix factorization problem~\eqref{eqn:mf}.

\begin{thm}\label{thm:mf-main}
	Let $0<\epsilon < \norm{\mat M^*}_F$.
	Suppose we initialize the entries in $\mat U_0$ and $\mat V_0$ i.i.d. from $\cN(0, \frac{\epsilon}{\poly(d)})$ ($d = \max\{d_1, d_2\}$), and run  \eqref{eqn:mf-gd-dynamics} with step sizes $\eta_t = \frac{\sqrt{\epsilon/r}}{100(t+1) \norm{\mat M^*}_F^{3/2}}$ ($t=0,1,\ldots$).\footnote{The dependency of $\eta_t$ on $t$ can be $\eta_t = \Theta\left( t^{-(1/2+\delta)} \right)$ for any constant $\delta \in (0, 1/2]$.}
	Then with high probability over the initialization, $\lim_{t\to\infty}(\mat U_t, \mat V_t) = (\bar{\mat U}, \bar{\mat V})$ exists and satisfies $\norm{\bar{\mat U} \bar{\mat V}^\top - \mat M^*}_F \le \epsilon$.
\end{thm}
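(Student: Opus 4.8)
# Proof Proposal for Theorem 3.1

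\textbf{Overall strategy.} The plan is to mimic the gradient flow invariance $\mat{U}^\top\mat{U} - \mat{V}^\top\mat{V} = \text{const}$ (Theorem~\ref{thm:conserved-linear} specialized to two layers) in the discrete setting, showing it holds \emph{approximately}, then exploit this approximate balancedness to control the iterates and force convergence. Concretely I would track three quantities along the trajectory: the \emph{imbalance} $\mat{E}_t := \mat{U}_t^\top\mat{U}_t - \mat{V}_t^\top\mat{V}_t$, the \emph{loss} $f(\mat{U}_t,\mat{V}_t)$, and the \emph{norms} $\norm{\mat{U}_t}_F, \norm{\mat{V}_t}_F$. The choice $\eta_t = \Theta(t^{-(1/2+\delta)})$ is designed so that $\sum_t \eta_t^2 < \infty$ while $\sum_t \eta_t = \infty$: the first keeps the accumulated discretization error in $\mat{E}_t$ bounded, the second provides enough "total learning" to reach a global optimum.

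\textbf{Step 1: Discrete evolution of the imbalance.} From the update \eqref{eqn:mf-gd-dynamics}, compute $\mat{E}_{t+1}$ in terms of $\mat{E}_t$. The first-order (in $\eta_t$) terms should cancel exactly, just as in the continuous case, leaving $\mat{E}_{t+1} = \mat{E}_t + \eta_t^2 \mat{R}_t$ where $\mat{R}_t$ is a residual built from products like $(\mat{U}_t\mat{V}_t^\top - \mat{M}^*)$ and the factors. Hence $\norm{\mat{E}_{t+1} - \mat{E}_t}_F \le \eta_t^2 \cdot \poly(\norm{\mat{U}_t}_F, \norm{\mat{V}_t}_F, \norm{\mat{M}^*}_F)$.

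\textbf{Step 2: A joint induction.} The core of the argument is an induction over $t$ establishing simultaneously that (a) $\norm{\mat{E}_t}_F \le \epsilon$ (say), (b) the loss $f(\mat{U}_t,\mat{V}_t)$ is non-increasing up to lower-order terms, hence bounded by $f(\mat{U}_0,\mat{V}_0) \approx \frac12\norm{\mat{M}^*}_F^2$, and (c) consequently $\norm{\mat{U}_t}_F^2, \norm{\mat{V}_t}_F^2$ stay bounded by some $B = \poly(\norm{\mat{M}^*}_F, \epsilon)$. The key coupling: in the region where $\norm{\mat{E}_t}_F$ is small, the loss $f$ is \emph{coercive} — small loss plus small imbalance forces $\mat{U}_t, \mat{V}_t$ bounded (if $\mat{U}\mat{V}^\top \approx \mat{M}^*$ and $\mat{U}^\top\mat{U} \approx \mat{V}^\top\mat{V}$, then neither factor can blow up). Given (c) with bound $B$, Step 1 gives $\norm{\mat{E}_t}_F \le \norm{\mat{E}_0}_F + \poly(B,\norm{\mat{M}^*}_F)\sum_{s<t}\eta_s^2$; since $\sum\eta_s^2 = O(1)$ and $\norm{\mat{E}_0}_F$ is tiny by the small initialization, choosing the constant in $\eta_t$ small enough (relative to $B$) closes (a). For (b), a standard descent-lemma-type estimate is needed: along the bounded region the objective restricted to the trajectory is effectively smooth with some constant $L(B)$, so $f(\mat{U}_{t+1},\mat{V}_{t+1}) \le f(\mat{U}_t,\mat{V}_t) - \eta_t\norm{\nabla f}_F^2 + L(B)\eta_t^2\norm{\nabla f}_F^2$, and for $t$ large enough that $\eta_t \le 1/(2L(B))$ this is a genuine decrease; the finitely many early steps only inflate $B$ by a constant.

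\textbf{Step 3: Convergence to a global minimum.} Having fixed the iterates in a compact set with $f$ non-increasing, summing the descent inequality gives $\sum_t \eta_t \norm{\nabla f(\mat{U}_t,\mat{V}_t)}_F^2 < \infty$; since $\sum_t \eta_t = \infty$, we get $\liminf_t \norm{\nabla f(\mat{U}_t,\mat{V}_t)}_F = 0$. To upgrade this to convergence of $\mat{U}_t\mat{V}_t^\top$ to $\mat{M}^*$ I would invoke that, on the balanced region, the only first-order stationary points with $\norm{\mat{E}}_F$ small are global minima — this uses the analysis that for asymmetric factorization the saddles are strict and, crucially, that the random small initialization (with high probability) keeps the trajectory away from the measure-zero stable manifolds of saddles, e.g. via a stable-manifold / \citet{lee2016gradient}-type argument or by a direct rank argument showing $\mat{U}_0, \mat{V}_0$ are full rank and this persists. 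Finally, $\sum_t\eta_t\norm{\mat{U}_{t+1}-\mat{U}_t}$-type summability (again from $\sum\eta_t^2<\infty$ and bounded gradients) yields that the limit $(\bar{\mat U},\bar{\mat V})$ actually exists, not merely along a subsequence, and $\norm{\bar{\mat U}\bar{\mat V}^\top - \mat{M}^*}_F \le \epsilon$.

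\textbf{Main obstacle.} I expect the hardest part to be the joint induction in Step 2 — in particular, making the coercivity argument quantitative and circular-dependency-free: bounding $\norm{\mat{E}_t}_F$ needs a bound on the factor norms, which needs the loss to be controlled, which needs the descent lemma, which needs the iterates in a region where the smoothness constant is uniformly bounded, which again needs the factor norms bounded. Setting up the induction hypothesis with the right slack (and the right dependence of the step-size constant on $\norm{\mat{M}^*}_F$, $r$, $\epsilon$) so that all three claims reinforce rather than chase each other is the delicate engineering. A secondary subtlety is handling the non-global-minimum stationary points to guarantee convergence to the \emph{global} optimum rather than just to \emph{some} stationary point — this is where randomness of the initialization genuinely enters.
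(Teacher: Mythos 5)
Your proposal tracks the paper's proof very closely: the discrete $O(\eta_t^2)$ drift of the imbalance $\mat{E}_t = \mat{U}_t^\top\mat{U}_t - \mat{V}_t^\top\mat{V}_t$, the three-way induction on balancedness/decreasing loss/boundedness (Lemma 3.1), the observation that $\sum_t \eta_t^2 < \infty$ accumulates only a small total imbalance drift, the coercivity-in-the-balanced-region argument to establish boundedness, the local smoothness and descent lemma, the conclusion that $\liminf \|\nabla f\| = 0$, and finally Lemma 3.2's dichotomy (near-global-minimum or strict saddle) combined with the Lee et al./Panageas et al. avoidance of strict saddles. All of the key structural ideas are present and in the right order.

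There is one genuine gap, in the last step of Step 3 where you claim the limit $(\bar{\mat U},\bar{\mat V})$ exists. You appeal to ``$\sum_t\eta_t\|\mat U_{t+1}-\mat U_t\|$-type summability (again from $\sum\eta_t^2<\infty$ and bounded gradients).'' This does not close the argument: from bounded gradients we get $\|\mat U_{t+1}-\mat U_t\|_F = \eta_t \|\nabla_{\mat U} f\|_F = O(\eta_t)$, and while $\sum_t \eta_t\|\mat U_{t+1}-\mat U_t\|_F = O(\sum_t\eta_t^2) < \infty$ is true, it is not the relevant quantity --- to get a Cauchy sequence you need $\sum_t \|\mat U_{t+1}-\mat U_t\|_F < \infty$, and the only bound available is $O(\sum_t\eta_t)$, which diverges since $\sum\eta_t = \infty$. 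Without more, you only have a bounded sequence with $\liminf\|\nabla f\| = 0$, i.e., convergence of the objective and subsequential convergence of the iterates, but not existence of $\lim_t(\mat U_t,\mat V_t)$. The paper closes this by invoking the result of Absil, Mahony, and Andrews (2005): under the sufficient-decrease condition $f(\mat W_{t+1}) \le f(\mat W_t) - \tfrac12\|\nabla f(\mat W_t)\|_F\,\|\mat W_{t+1}-\mat W_t\|_F$ (which follows from the descent lemma combined with $\|\mat W_{t+1}-\mat W_t\|_F = \eta_t\|\nabla f(\mat W_t)\|_F$), the iterates of a descent method either diverge to infinity or converge to a single point; boundedness from the induction rules out the former. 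Some argument of this Łojasiewicz/capture-theorem flavor, rather than raw summability, is what the proof actually needs. A lesser imprecision: ``the only first-order stationary points with $\|\mat E\|_F$ small are global minima'' is not literally true (e.g.\ $(\mat 0,\mat 0)$ is balanced and stationary); the correct statement, which you in fact go on to use, is the paper's Lemma 3.2 dichotomy that such a stationary point is either $\epsilon$-close to a global optimum or a strict saddle.
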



\paragraph{Proof sketch of Theorem~\ref{thm:mf-main}.}
	First let's imagine that we are using infinitesimal step size in GD. Then according to Theorem~\ref{thm:conserved-linear} (viewing problem~\eqref{eqn:mf} as learning a two-layer linear network where the inputs are all the standard unit vectors in $\R^{d_2}$), we know that $\mat U^\top \mat U - \mat V^\top \mat V$ will stay invariant throughout the algorithm.
	Hence when $\mat U$ and $\mat V$ are initialized to be small, $\mat U^\top \mat U - \mat V^\top \mat V$ will stay small forever.
	Combined with the fact that the objective $f(\mat U, \mat V)$ is decreasing over time (which means $\mat U \mat V^\top$ cannot be too far from $\mat M^*$), we can show that $\mat U$ and $\mat V$ will always stay bounded.
	
	Now we are using positive step sizes $\eta_t$, so we no longer have the invariance of $\mat U^\top \mat U - \mat V^\top \mat V$.
	Nevertheless, by a careful analysis of the updates, we can still prove that $\mat U_t^\top \mat U_t - \mat V_t^\top \mat V_t$ is small, the objective $f(\mat U_t, \mat V_t)$ decreases, and $\mat U_t$ and $\mat V_t$ stay bounded.
	Formally, we have the following lemma:
	\begin{lem} \label{lem:mf-balance}
		With high probability over the initialization $(\mat U_0, \mat V_0)$, for all $t$ we have:
		\begin{enumerate}[(i)]
			\item Balancedness: $\norm{\mat U_t^\top \mat U_t - \mat V_t^\top \mat V_t}_F \le \epsilon$;
			\item Decreasing objective: $f(\mat U_{t}, \mat V_{t}) \le f(\mat U_{t-1}, \mat V_{t-1}) \le \cdots \le f(\mat U_{0}, \mat V_{0}) \le 2\norm{\mat M^*}_F^2$;
			\item Boundedness: $\norm{\mat U_{t}}_F^2 \le 5\sqrt{r} \norm{\mat M^*}_F, \norm{\mat V_t}_F^2 \le  5\sqrt{r} \norm{\mat M^*}_F$.
		\end{enumerate}
	\end{lem}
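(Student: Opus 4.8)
}
The plan is to prove the three properties (i)--(iii) simultaneously by induction on $t$, since they reinforce one another: the balancedness and objective bounds control every matrix norm that appears, boundedness follows from balancedness together with the objective bound via a coercivity estimate, and these in turn let us control how much the balancedness gap and the objective can change in a single gradient step. Since the iterates are a deterministic function of the initialization, ``with high probability'' will just mean that the random $(\mat U_0, \mat V_0)$ lies in a favorable set. For the base case I would note that, with variance $\epsilon/\poly(d)$ and $\poly(d)$ large enough, standard Gaussian concentration gives $\|\mat U_0\|_F^2, \|\mat V_0\|_F^2 \ll \epsilon$ with high probability; then (iii) is immediate, $\|\mat U_0^\top\mat U_0 - \mat V_0^\top\mat V_0\|_F \le \|\mat U_0\|_F^2 + \|\mat V_0\|_F^2 \ll \epsilon$ gives (i) with room to spare, and $f(\mat U_0,\mat V_0) \le \tfrac12\left(\|\mat U_0\mat V_0^\top\|_F + \|\mat M^*\|_F\right)^2 < 2\|\mat M^*\|_F^2$ (using $\|\mat U_0\mat V_0^\top\|_F \ll \epsilon < \|\mat M^*\|_F$) gives (ii).

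For the inductive step, assume (i)--(iii) at time $t$ and establish them at $t+1$ in order. The balancedness update is the discrete analogue of Theorem~\ref{thm:conserved-linear}: writing $\mat E_t = \mat U_t\mat V_t^\top - \mat M^*$, expanding \eqref{eqn:mf-gd-dynamics} yields
\[
\mat U_{t+1}^\top\mat U_{t+1} - \mat V_{t+1}^\top\mat V_{t+1} = \left(\mat U_t^\top\mat U_t - \mat V_t^\top\mat V_t\right) + \eta_t^2\left(\mat V_t^\top\mat E_t^\top\mat E_t\mat V_t - \mat U_t^\top\mat E_t\mat E_t^\top\mat U_t\right),
\]
so the terms linear in $\eta_t$ cancel exactly and only an $O(\eta_t^2)$ perturbation remains. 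Bounding $\|\mat E_t\|_F^2 = 2f(\mat U_t,\mat V_t) \le 4\|\mat M^*\|_F^2$ and $\|\mat U_t\|_F^2,\|\mat V_t\|_F^2 \le 5\sqrt r\|\mat M^*\|_F$ by the inductive hypothesis, this perturbation has Frobenius norm at most $40\sqrt r\,\eta_t^2\|\mat M^*\|_F^3$. With the prescribed step sizes one checks $40\sqrt r\|\mat M^*\|_F^3\sum_{t\ge0}\eta_t^2 < \epsilon/100$ (the series converges for every $\delta>0$), so telescoping from the negligible initial gap keeps $\|\mat U_t^\top\mat U_t - \mat V_t^\top\mat V_t\|_F \le \epsilon$ at all times, which is (i) at $t+1$.

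For the descent step I would avoid a global smoothness constant and expand $\|\mat E_{t+1}\|_F^2$ directly. With $\nabla_U = \mat E_t\mat V_t$ and $\nabla_V = \mat E_t^\top\mat U_t$ the gradient blocks, $\mat E_{t+1} = \mat E_t - \eta_t\left(\mat U_t\mat U_t^\top\mat E_t + \mat E_t\mat V_t\mat V_t^\top\right) + \eta_t^2\,\nabla_U\nabla_V^\top$, and $\langle\mat E_t,\ \mat U_t\mat U_t^\top\mat E_t + \mat E_t\mat V_t\mat V_t^\top\rangle = \|\nabla_U\|_F^2 + \|\nabla_V\|_F^2$ is the full squared gradient norm. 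Every remaining term is $O(\eta_t^2)$, $O(\eta_t^3)$ or $O(\eta_t^4)$, and using (ii)--(iii) at time $t$ (so $\|\mat E_t\|_F \le 2\|\mat M^*\|_F$, $\|\mat U_t\|_F^2,\|\mat V_t\|_F^2 \le 5\sqrt r\|\mat M^*\|_F$, and hence $\|\nabla_U\|_F^2 + \|\nabla_V\|_F^2 \le 40\sqrt r\|\mat M^*\|_F^3$) each is bounded by a small absolute constant times $\sqrt r\|\mat M^*\|_F\,\eta_t\left(\|\nabla_U\|_F^2 + \|\nabla_V\|_F^2\right)$; since $\eta_t \le \eta_0 < \tfrac{1}{100\sqrt r\|\mat M^*\|_F}$, these collectively come to strictly less than the leading term $2\eta_t\left(\|\nabla_U\|_F^2 + \|\nabla_V\|_F^2\right)$, giving $\|\mat E_{t+1}\|_F^2 \le \|\mat E_t\|_F^2$ and hence (ii) at $t+1$. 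Finally, boundedness at $t+1$ follows from (i) and (ii) at $t+1$ by coercivity: $f \le 2\|\mat M^*\|_F^2$ gives $\|\mat U_{t+1}\mat V_{t+1}^\top\|_F \le 3\|\mat M^*\|_F$, and since $\|\mat U_{t+1}^\top\mat U_{t+1} - \mat V_{t+1}^\top\mat V_{t+1}\|_F \le \epsilon$ implies $\mat U_{t+1}^\top\mat U_{t+1} \succeq \mat V_{t+1}^\top\mat V_{t+1} - \epsilon\mat I$, we get $9\|\mat M^*\|_F^2 \ge \|\mat U_{t+1}\mat V_{t+1}^\top\|_F^2 = \tr\!\left(\mat U_{t+1}^\top\mat U_{t+1}\,\mat V_{t+1}^\top\mat V_{t+1}\right) \ge \|\mat V_{t+1}^\top\mat V_{t+1}\|_F^2 - \epsilon\|\mat V_{t+1}^\top\mat V_{t+1}\|_F$; with $\epsilon < \|\mat M^*\|_F$ this forces $\|\mat V_{t+1}^\top\mat V_{t+1}\|_F < 5\|\mat M^*\|_F$, hence $\|\mat V_{t+1}\|_F^2 = \tr\!\left(\mat V_{t+1}^\top\mat V_{t+1}\right) \le \sqrt r\,\|\mat V_{t+1}^\top\mat V_{t+1}\|_F \le 5\sqrt r\|\mat M^*\|_F$, and symmetrically for $\mat U_{t+1}$. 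This closes the induction.

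I expect the main obstacle to be the bookkeeping in the descent step: keeping the $O(\eta_t^2)$, $O(\eta_t^3)$ and $O(\eta_t^4)$ contributions under control with absolute constants small enough that the specific step-size constant makes them collectively smaller than the first-order decrease. A secondary subtlety is the coercivity estimate, where one must pass through $\|\mat V^\top\mat V\|_F$ (bounding it by $O(\|\mat M^*\|_F)$) and only then convert via $\|\mat V\|_F^2 \le \sqrt r\,\|\mat V^\top\mat V\|_F$; attempting to bound $\|\mat V\|_F^2$ directly loses a factor of $\sqrt r$ and the claimed $5\sqrt r$ bound would fail.
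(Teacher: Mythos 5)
Your overall architecture — proving (i)--(iii) simultaneously by induction, with the base case coming from small Gaussian initialization, (i) from exact cancellation of the linear-in-$\eta_t$ term followed by a telescoping bound on $\sum_t \eta_t^2$, (ii) from a one-step descent estimate, and (iii) from a coercivity estimate using (i) and (ii) — is the same as the paper's. For (i) the calculation is identical (the paper has a typo and writes $\mat U_t^\top\mat R_t^\top\mat R_t\mat U_t$ where the correct expression is $\mat U_t^\top\mat R_t\mat R_t^\top\mat U_t$, as in your version, but this does not affect the Frobenius bound). For (ii) you take a genuinely different route: the paper first proves a separate local smoothness lemma on the convex set $\{\|\mat U\|_F^2,\|\mat V\|_F^2\le 11\sqrt r\|\mat M^*\|_F\}$, bounds $\|\mat U_{t+1}\|_F^2,\|\mat V_{t+1}\|_F^2$ to place the whole segment in that set, and then invokes the standard descent lemma; you instead expand $\|\mat E_{t+1}\|_F^2$ directly, which avoids the intermediate bound on the next iterate and is more self-contained, at the cost of the extra bookkeeping you already anticipate. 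Both are valid. For (iii) the paper diagonalizes $\mat U$ by SVD and works with singular values, whereas you stay with trace inequalities; this is a cleaner route and also reaches the $5\sqrt r$ constant.

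However, there is a genuine (though small) gap in your justification of (iii). You derive
$\tr(\mat U^\top\mat U\,\mat V^\top\mat V) \ge \|\mat V^\top\mat V\|_F^2 - \epsilon\|\mat V^\top\mat V\|_F$
from the PSD relation $\mat U^\top\mat U \succeq \mat V^\top\mat V - \epsilon\mat I$. But the PSD relation only gives
$\tr(\mat U^\top\mat U\,\mat V^\top\mat V) \ge \tr\bigl((\mat V^\top\mat V - \epsilon\mat I)\,\mat V^\top\mat V\bigr) = \|\mat V^\top\mat V\|_F^2 - \epsilon\,\tr(\mat V^\top\mat V) = \|\mat V^\top\mat V\|_F^2 - \epsilon\|\mat V\|_F^2,$
and since $\tr(\mat V^\top\mat V)$ can be a factor $\sqrt r$ larger than $\|\mat V^\top\mat V\|_F$, this weaker inequality together with $\|\mat V\|_F^2\le\sqrt r\,\|\mat V^\top\mat V\|_F$ only yields a bound of order $r\|\mat M^*\|_F$, not $5\sqrt r\|\mat M^*\|_F$. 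The inequality you actually need \emph{is} true, but the right tool is Cauchy--Schwarz for the Frobenius inner product rather than PSD monotonicity: writing $\mat D = \mat U^\top\mat U - \mat V^\top\mat V$,
$\tr(\mat U^\top\mat U\,\mat V^\top\mat V) = \|\mat V^\top\mat V\|_F^2 + \langle\mat D,\mat V^\top\mat V\rangle \ge \|\mat V^\top\mat V\|_F^2 - \|\mat D\|_F\|\mat V^\top\mat V\|_F \ge \|\mat V^\top\mat V\|_F^2 - \epsilon\|\mat V^\top\mat V\|_F.$
With this one-line replacement your argument closes; as written, the PSD step does not support the claimed bound.
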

	
	Now that we know the GD algorithm automatically constrains $(\mat U_t, \mat V_t)$ in a bounded region, we can use the smoothness of $f$ in this region and a standard analysis of GD to show that $(\mat U_t, \mat V_t)$ converges to a stationary point $(\bar{\mat U}, \bar{\mat V})$ of $f$ (Lemma~\ref{lem:mf-convergence}).
	Furthermore, using the results of \citep{lee2016gradient, panageas2016gradient} we know that $(\bar{\mat U}, \bar{\mat V})$ is almost surely not a strict saddle point.
	Then the following lemma implies that $(\bar{\mat U}, \bar{\mat V})$ has to be close to a global optimum since we know $\norm{\bar{\mat U}^\top \bar{\mat U} - \bar{\mat V}^\top \bar{\mat V}}_F \le \epsilon$ from Lemma~\ref{lem:mf-balance} (i). This would complete the proof of Theorem~\ref{thm:mf-main}.

\begin{lem} \label{lem:mf-strict-saddle}
	Suppose $(\mat{U}, \mat{V})$ is a stationary point of $f$ such that $\norm{\mat U^\top \mat U - \mat V^\top \mat V}_F \le \epsilon$.
	Then either $\norm{\mat U \mat V^\top - \mat M^*}_F \le \epsilon$, or $(\mat{U}, \mat{V})$ is a strict saddle point of $f$.
\end{lem}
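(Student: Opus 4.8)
The plan is to prove the dichotomy directly. Write $\mat{\Delta} := \mat U\mat V^\top - \mat M^*$. If $\mat{\Delta} = \mat 0$ the first alternative holds trivially (and is in fact stronger than $\norm{\mat{\Delta}}_F\le\epsilon$), so from now on I assume $\mat{\Delta}\ne\mat 0$ and must show that $(\mat U,\mat V)$ is a strict saddle. Stationarity of $f$ is the pair of normal equations $\mat{\Delta}\mat V=\mat 0$ and $\mat{\Delta}^\top\mat U=\mat 0$, and expanding $f(\mat U+\mat A,\mat V+\mat B)=\tfrac12\norm{\mat{\Delta}+\mat A\mat V^\top+\mat U\mat B^\top+\mat A\mat B^\top}_F^2$ and using the normal equations to kill the term linear in $(\mat A,\mat B)$, the Hessian quadratic form at $(\mat U,\mat V)$ is
\[
\nabla^2 f[(\mat A,\mat B),(\mat A,\mat B)] \;=\; \norm{\mat A\mat V^\top+\mat U\mat B^\top}_F^2 \;+\; 2\langle \mat{\Delta},\,\mat A\mat B^\top\rangle .
\]
So it suffices to exhibit one nonzero $(\mat A,\mat B)$ making this negative.

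The first step is a structural fact: when $\mat{\Delta}\ne\mat 0$, at least one of $\mat U,\mat V$ is column-rank-deficient. From $\mat{\Delta}^\top\mat U=\mat 0$ we get $\range(\mat U)\perp\range(\mat{\Delta})$, and from $\mat{\Delta}\mat V=\mat 0$ we get $\range(\mat V)\perp\range(\mat{\Delta}^\top)$; hence in $\mat M^*=\mat U\mat V^\top-\mat{\Delta}$ the two summands have orthogonal column spaces and orthogonal row spaces, so their ranks add: $r=\rank(\mat M^*)=\rank(\mat U\mat V^\top)+\rank(\mat{\Delta})$, giving $\rank(\mat U\mat V^\top)=r-\rank(\mat{\Delta})<r$. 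If both $\mat U$ and $\mat V$ had full column rank $r$, then $\mat U\mat V^\top$ would have rank $r$ — contradiction. So, possibly after swapping the roles of $\mat U$ and $\mat V$ (and transposing everything), I may assume $\mat V$ is rank-deficient.

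The second step constructs the negative-curvature direction, and this is where the absence of any regularizer is exploited. Pick a unit vector $\vect f\in\Kernel(\mat V)\subseteq\R^r$ and a leading singular triple $(\vect a,\vect b,\sigma)$ of $\mat{\Delta}$, so $\norm{\vect a}=\norm{\vect b}=1$, $\sigma=\sigma_{\max}(\mat{\Delta})>0$, and $\vect a^\top\mat{\Delta}\vect b=\sigma$. Consider the one-parameter family $(\mat A_s,\mat B_s):=(s\,\vect a\vect f^\top,\ \vect b\vect f^\top)$ for $s\in\R$. Since $\vect f\in\Kernel(\mat V)$ we have $\mat A_s\mat V^\top=s\,\vect a(\mat V\vect f)^\top=\mat 0$, so the first Hessian term collapses to the $s$-independent constant $\norm{\mat A_s\mat V^\top+\mat U\mat B_s^\top}_F^2=\norm{\mat U\vect f\,\vect b^\top}_F^2=\norm{\mat U\vect f}^2$, while $\mat A_s\mat B_s^\top=s\,\vect a\vect b^\top$ makes the bilinear term $2\langle\mat{\Delta},s\,\vect a\vect b^\top\rangle=2s\sigma$. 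Therefore
\[
\nabla^2 f[(\mat A_s,\mat B_s),(\mat A_s,\mat B_s)] \;=\; \norm{\mat U\vect f}^2 + 2s\sigma ,
\]
which is strictly negative for $s<-\norm{\mat U\vect f}^2/(2\sigma)$, and $(\mat A_s,\mat B_s)\ne(\mat 0,\mat 0)$. Hence the Hessian has a negative eigenvalue and $(\mat U,\mat V)$ is a strict saddle, completing the dichotomy. (The balancedness hypothesis is not actually needed for this qualitative statement; it can be folded in only to get an explicit curvature bound, by replacing $\norm{\mat U\vect f}^2=\vect f^\top\mat U^\top\mat U\vect f=\vect f^\top(\mat U^\top\mat U-\mat V^\top\mat V)\vect f$ with the bound $\le\norm{\mat U^\top\mat U-\mat V^\top\mat V}_F\le\epsilon$.)

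The main obstacle is a temptation to resist rather than a hard computation. The "standard" route takes a bounded perturbation $\mat A=\vect a\vect\phi^\top$, $\mat B=-\vect b\vect\phi^\top$ with $\vect\phi$ the bottom eigenvector of $\mat U^\top\mat U+\mat V^\top\mat V$; this certifies a strict saddle only when $\sigma_{\max}(\mat{\Delta})>\tfrac12\lambda_{\min}(\mat U^\top\mat U+\mat V^\top\mat V)$, and since rank-deficiency plus balancedness only give $\lambda_{\min}(\mat U^\top\mat U+\mat V^\top\mat V)\le\epsilon$, this bounds $\norm{\mat{\Delta}}_F$ only up to a $\sqrt{\rank(\mat{\Delta})}$ factor — too lossy for a clean Frobenius-norm conclusion. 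The resolution is to notice that the unregularized objective is flat along kernel directions of $\mat V$ (or of $\mat U$), so scaling the perturbation there freezes the quadratic penalty term while the bilinear term runs off to $-\infty$; once that decoupling is in place the only remaining work is the rank-additivity bookkeeping guaranteeing such a kernel direction exists.
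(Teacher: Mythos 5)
Your proof is correct, and it takes a genuinely different and in one sense stronger route than the paper's. The paper constructs a global minimizer $\mat W^* = \begin{pmatrix}\mat U^*\\ \mat V^*\end{pmatrix}$ with $\mat U^{*\top}\mat U^* = \mat V^{*\top}\mat V^*$, optimally rotates it to the current iterate, and tests the Hessian along the residual direction $\mat\Delta = \mat W - \mat W^*\mat R$; a careful expansion using two algebraic identities plus the inequality $\norm{\mat\Delta\mat\Delta^\top}_F^2 \le 2\norm{\mat W\mat W^\top - \mat W^*\mat W^{*\top}}_F^2$ (borrowed from \citep{ge2017no}) yields the quantitative bound $[\nabla^2 f](\mat\Delta,\mat\Delta) \le -\norm{\mat M-\mat M^*}_F^2 + \tfrac12\epsilon^2$, so the balancedness hypothesis is used essentially. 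You instead read a rank obstruction out of the first-order conditions --- $\mat\Delta\mat V = \mat 0$ and $\mat\Delta^\top\mat U = \mat 0$ force $\mat U\mat V^\top$ and $\mat\Delta$ to have orthogonal row and column spaces, so their ranks add and one of the factors is column-rank-deficient --- and then send a perturbation down a kernel direction of $\mat V$, which kills the positive term $\norm{\mat A\mat V^\top + \mat U\mat B^\top}_F^2$ while the cross term $2\langle\mat\Delta,\mat A\mat B^\top\rangle$ is made arbitrarily negative by scaling. This actually establishes the stronger dichotomy ``$\mat U\mat V^\top = \mat M^*$ or strict saddle,'' with no balancedness needed, and it exposes the structural reason why: the unregularized objective is non-coercive along kernel directions, and any non-global stationary point necessarily has such a direction. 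What the paper's heavier machinery buys, which yours does not, is an explicit lower bound $\tfrac12(\norm{\mat M-\mat M^*}_F^2 - \epsilon^2)$ on the magnitude of the negative curvature in a direction of \emph{bounded} norm; your negative-curvature direction has curvature that vanishes after normalization as $s \to -\infty$ (the ratio behaves like $2s\sigma/s^2 \to 0^-$), so while it certifies a strict saddle, it does not by itself give a uniform escape rate. For the purposes of Lemma~\ref{lem:mf-strict-saddle} as stated --- which only asserts strictness, not a curvature margin --- your argument is sufficient and cleaner.
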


The full proof of Theorem~\ref{thm:mf-main} and the proofs of Lemmas~\ref{lem:mf-balance} and \ref{lem:mf-strict-saddle} are given in Appendix~\ref{sec:proof-mf}.

\subsection{The Rank-$1$ Case}


We have shown in Theorem~\ref{thm:mf-main} that GD with small and diminishing step sizes converges to a global minimum for matrix factorization.
Empirically, it is observed that a constant step size $\eta_t \equiv \eta$ is enough for GD to converge quickly to global minimum.
Therefore,
some natural questions are how to prove convergence of GD with a constant step size, how fast it converges, and how the discretization affects the invariance we derived in Section~\ref{sec:conserved}.


While these questions remain challenging for the general rank-$r$ matrix factorization, we resolve them for the case of $r=1$.
Our main finding is that with constant step size, the norms of two layers are always within a constant factor of each other (although we may no longer have the stronger balancedness property as in Lemma~\ref{lem:mf-balance}), and 
 we utilize this property to prove the \emph{linear convergence} of GD to a global minimum.

When $r=1$, the asymmetric matrix factorization problem and its GD dynamics become
$$
\min_{\vect{u} \in \mathbb{R}^{d_1}, \vect{v} \in \mathbb{R}^{d_2}} \frac12 \norm{\vect{u}\vect{v}^\top - \mat{M}^*}_F^2
$$
and
\begin{align*}
\vect{u}_{t+1} = \vect{u}_t - \eta (\vect{u}_t\vect{v}_t^\top - \mat{M}^*)\vect{v}_t, \qquad
\vect{v}_{t+1} = \vect{v}_t - \eta\left(\vect{v}_t\vect{u}_t^\top - \mat{M}^{*\top} \right)\vect{u}_t.
\end{align*}
Here we assume $\mat{M}^*$ has rank $1$, i.e., it can be factorized as $\mat{M}^* = \sigma_1 \vect{u}^*\vect{v}^{*\top}$ where $\vect{u}^*$ and $\vect{v}^*$ are unit vectors and $\sigma_1>0$.

Our main theoretical result is the following.
\begin{thm}[Approximate balancedness and linear convergence of GD for rank-$1$ matrix factorization]
	\label{thm:rank_1}
	Suppose $\vect{u}_0 \sim \cN(\vect 0,\delta\mat{I})$, $\vect{v}_0 \sim \cN(\vect 0,\delta \mat I)$ with $\delta = c_{init} \sqrt{\frac{\sigma_1}{d}}  $ ($d = \max\{d_1, d_2\}$) 
	 for some sufficiently small constant $c_{init} >0$,
	 and $\eta = \frac{c_{step}}{\sigma_1}$ for some sufficiently small constant $ c_{step} >0$. 
	Then with constant probability over the initialization, for all $t$ we have $c_0\le \frac{\abs{\vect{u}_t^\top \vect u^*}}{\abs{\vect{v}_t^\top \vect{v}^*}}\le C_0$ for some universal constants $ c_0,C_0>0$.
	Furthermore, for any $0<\epsilon< 1$, after $t = O\left( \log\frac{d}{\epsilon} \right)$ iterations, we have $\norm{\vect{u}_t\vect{v}_t^\top - \mat{M}^*}_F \le \epsilon \sigma_1$.
\end{thm}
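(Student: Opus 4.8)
The plan is to decompose each iterate into its \emph{signal} part along the target and an \emph{orthogonal} part, and track the two separately. Write $a_t := \vect{u}_t^\top \vect{u}^*$, $b_t := \vect{v}_t^\top \vect{v}^*$, $\vect{u}_t^\perp := \vect{u}_t - a_t\vect{u}^*$, $\vect{v}_t^\perp := \vect{v}_t - b_t\vect{v}^*$. Plugging $\vect{u}_t = a_t\vect{u}^* + \vect{u}_t^\perp$, $\vect{v}_t = b_t\vect{v}^* + \vect{v}_t^\perp$ into the rank-$1$ updates and using $\mat{M}^* = \sigma_1\vect{u}^*\vect{v}^{*\top}$, the dynamics decouple into
\begin{align*}
a_{t+1} &= (1-\eta\norm{\vect{v}_t}^2)\,a_t + \eta\sigma_1 b_t, & b_{t+1} &= (1-\eta\norm{\vect{u}_t}^2)\,b_t + \eta\sigma_1 a_t, \\
\vect{u}_{t+1}^\perp &= (1-\eta\norm{\vect{v}_t}^2)\,\vect{u}_t^\perp, & \vect{v}_{t+1}^\perp &= (1-\eta\norm{\vect{u}_t}^2)\,\vect{v}_t^\perp,
\end{align*}
with $\norm{\vect{u}_t}^2 = a_t^2 + \norm{\vect{u}_t^\perp}^2$ and $\norm{\vect{v}_t}^2 = b_t^2 + \norm{\vect{v}_t^\perp}^2$. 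Orthogonality of the four pieces also gives the exact identity
\[
\norm{\vect{u}_t\vect{v}_t^\top - \mat{M}^*}_F^2 = (a_tb_t - \sigma_1)^2 + a_t^2\norm{\vect{v}_t^\perp}^2 + b_t^2\norm{\vect{u}_t^\perp}^2 + \norm{\vect{u}_t^\perp}^2\norm{\vect{v}_t^\perp}^2,
\]
so the entire task reduces to driving $a_tb_t \to \sigma_1$ and $\norm{\vect{u}_t^\perp},\norm{\vect{v}_t^\perp} \to 0$. The crucial structural point is that, as long as $\eta\norm{\vect{u}_t}^2,\eta\norm{\vect{v}_t}^2 \le 1$, the orthogonal parts are merely multiplied by numbers in $[0,1]$, hence are non-increasing in norm, and once $\norm{\vect{u}_t}^2,\norm{\vect{v}_t}^2 = \Theta(\sigma_1)$ these factors equal $1 - \Theta(c_{step})$, so the orthogonal parts then contract geometrically.

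First I would handle the initialization by conditioning on a constant-probability event: that $a_0$ and $b_0$ have the same sign (WLOG both positive, by the symmetry $\vect{u}^*\mapsto-\vect{u}^*$), that $\abs{a_0},\abs{b_0} = \Theta(\delta)$ with $a_0/b_0\in[\tfrac12,2]$, and (with high probability) $\norm{\vect{u}_0}^2,\norm{\vect{v}_0}^2 \in [\tfrac12 c_{init}^2\sigma_1, 2c_{init}^2\sigma_1]$. Thus $a_0b_0 = \Theta(\sigma_1/d)$ and $\norm{\vect{u}_0^\perp},\norm{\vect{v}_0^\perp} = \Theta(c_{init}\sqrt{\sigma_1})$. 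I would then run one induction on $t$ maintaining: (a) $a_t,b_t>0$; (b) $\norm{\vect{u}_t}^2,\norm{\vect{v}_t}^2 \le 2\sigma_1$, which since $\eta=c_{step}/\sigma_1$ with $c_{step}$ small forces $\eta\norm{\vect{u}_t}^2,\eta\norm{\vect{v}_t}^2\le1$ and hence the monotonicity/contraction of the orthogonal parts above; and (c) the ratio bound $c_0\le a_t/b_t\le C_0$. Positivity (a) is immediate from the $a_{t+1}$ update once (b) holds. For (c) the update has a built-in restoring force: if $a_t>b_t$ then (using that the orthogonal parts stay balanced by the identical contraction and the balanced start) $\norm{\vect{u}_t}^2 > \norm{\vect{v}_t}^2$, so $a_t$ is damped by the larger factor and gets the smaller additive push $\eta\sigma_1 b_t$, while $b_t$ is damped less and pushed by the larger $\eta\sigma_1 a_t$; quantifying this shows $a_{t+1}/b_{t+1}$ moves back toward $1$, modulo $O(\eta^2)$ discretization terms controlled via (b).

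To maintain (b) I would split into two regimes. In the \emph{growth regime}, while $a_tb_t \le c\sigma_1$ for a small constant $c$, we have $\eta\norm{\vect{u}_t}^2,\eta\norm{\vect{v}_t}^2 \ll 1$, so $a_{t+1}\ge a_t + \tfrac12\eta\sigma_1 b_t$ and symmetrically for $b$; multiplying and using the bounded ratio to compare cross terms with $a_tb_t$ gives $a_{t+1}b_{t+1} \ge (1+\Omega(c_{step}))\,a_tb_t$, so the signal product grows geometrically from $\Theta(\sigma_1/d)$ and reaches $\Theta(\sigma_1)$ within $O(\log d)$ steps, while $\norm{\vect{u}_t}^2,\norm{\vect{v}_t}^2 \le O(1)\,a_tb_t + \norm{\vect{u}_0^\perp}^2 = O(\sigma_1)$ throughout. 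In the \emph{convergence regime}, once $a_tb_t = \Theta(\sigma_1)$, I would expand $a_{t+1}b_{t+1}-\sigma_1$ in terms of $a_tb_t-\sigma_1$ and the (now small) orthogonal parts to get $a_{t+1}b_{t+1} - \sigma_1 = (1-\Theta(c_{step}))(a_tb_t-\sigma_1) + (\text{small})$, with no overshoot past $2\sigma_1$ for $c_{step}$ small; combined with the geometric decay of $\norm{\vect{u}_t^\perp},\norm{\vect{v}_t^\perp}$ by $1-\Theta(c_{step})$ and the error identity, after $O(\log(d/\epsilon))$ total iterations every term of $\norm{\vect{u}_t\vect{v}_t^\top-\mat{M}^*}_F^2$ is at most $\tfrac14\epsilon^2\sigma_1^2$, which gives the claimed rate; invariant (c) has held throughout, which is precisely the first assertion.

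I expect the main obstacle to be the simultaneous control of the ratio invariant (c) and the norm invariant (b) through the discrete dynamics: with a \emph{constant} step size the $O(\eta^2)$ correction terms appearing in $\norm{\vect{u}_{t+1}}^2-\norm{\vect{v}_{t+1}}^2$ and in $a_{t+1}/b_{t+1}$ do not individually telescope to something negligible, so one must absorb them using the contraction of the orthogonal parts and the geometric growth (then decay) of the signal, and pin down $c_{init},c_{step}$ small enough that the induction closes with explicit universal constants $c_0,C_0$. A secondary subtlety is probabilistic rather than analytic: the ratio $a_0/b_0$ of two independent Gaussians is heavy-tailed, so the bounded-ratio conclusion genuinely needs conditioning on a constant-probability initialization event and cannot be upgraded to ``with high probability'' here.
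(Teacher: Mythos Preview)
Your proposal is correct and mirrors the paper's proof almost exactly: same signal/orthogonal decomposition (the paper's $\alpha_t,\beta_t,\alpha_{t,\perp},\beta_{t,\perp}$ are your $a_t,b_t,\|\vect{u}_t^\perp\|,\|\vect{v}_t^\perp\|$), same two-phase split into a growth stage until $\alpha_t^2+\beta_t^2\ge\tfrac12\sigma_1$ and then a local linear-convergence stage for $h_t=\alpha_t\beta_t-\sigma_1$ and $\xi_t=\alpha_{t,\perp}^2+\beta_{t,\perp}^2$, and the same induction invariants.

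One small correction to your heuristic for the ratio invariant (c): the orthogonal contractions are \emph{not} identical ($\vect{u}_t^\perp$ is multiplied by $1-\eta\|\vect{v}_t\|^2$ while $\vect{v}_t^\perp$ is multiplied by $1-\eta\|\vect{u}_t\|^2$), and the multiplicative damping actually goes the \emph{wrong} way for your argument---when $\|\vect{u}_t\|^2>\|\vect{v}_t\|^2$ it is $b_t$ that is damped more, since $b_{t+1}=(1-\eta\|\vect{u}_t\|^2)b_t+\cdots$, which by itself pushes the ratio \emph{away} from $1$; the entire restoring force comes from the additive $\eta\sigma_1$ terms. The paper sidesteps this by computing directly
\[
\alpha_{t+1}-\beta_{t+1}=(1-\eta\sigma_1+\eta\alpha_t\beta_t)(\alpha_t-\beta_t)-\eta\beta_{t,\perp}^2\alpha_t+\eta\alpha_{t,\perp}^2\beta_t
\]
and bounding $|\alpha_{t+1}-\beta_{t+1}|\le(1-\tfrac34\eta\sigma_1)|\alpha_t-\beta_t|+\eta\xi_t(\alpha_t+\beta_t)$, which keeps $|\alpha_t-\beta_t|/(\alpha_t+\beta_t)\le\tfrac{99}{101}$ throughout Stage~1 without any assumption on the balance of the orthogonal parts.
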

Theorem~\ref{thm:rank_1} shows for $\vect{u}_t$ and $\vect{v}_t$, their strengths in the signal space, $\abs{\vect{u}_t^\top \vect{u}^*}$ and $\abs{\vect{v}_t^\top \vect{v}^*}$, are of the same order.
This approximate balancedness helps us prove the linear convergence of GD.
We refer readers to Appendix~\ref{sec:proof-rank-1} for the proof of Theorem~\ref{thm:rank_1}.


\section{Empirical Verification}
\label{sec:exp}

We perform experiments to verify the auto-balancing properties of gradient descent in neural networks with ReLU activation.
Our results below show that for GD with small step size and small initialization: (1) the difference between the squared Frobenius norms of any two layers remains small in all iterations, and (2)  the ratio between the squared Frobenius norms of any two layers becomes close to $1$.
Notice that our theorems in Section~\ref{sec:conserved} hold for gradient flow (step size $\rightarrow0$) but in practice we can only choose a (small) positive step size, so we cannot hope the difference between the squared Frobenius norms to remain exactly the same but can only hope to observe that the differences remain small.

We consider a 3-layer fully connected network of the form $f(x)= W_3 \phi(W_2\phi(W_1 x))$ where $x \in \mathbb{R}^{1\text{,}000}$ is the input, $W_1 \in \mathbb{R}^{100 \times 1\text{,}000}$, $W_2 \in \mathbb{R}^{100 \times 100}$, $W_3 \in \mathbb{R}^{10 \times 100}$, and $\phi(\cdot)$ is ReLU activation.
We use 1,000 data points and the quadratic loss function,
and run  GD. 
We first test a balanced initialization: $W_1[i,j] \sim  N(0,\frac{10^{-4}}{100})$, $W_2[i,j] \sim N(0,\frac{10^{-4}}{10})$ and $W_3[i,j] \sim   N(0,10^{-4})$,
which ensures $\|W_1\|_F^2\approx \|W_2\|_F^2 \approx \|W_3\|_F^2$. 
After 10,000 iterations we have $\|W_1\|_F^2 = 42.90$, $\|W_2\|_F^2 = 43.76$ and $\|W_3\|_F^2 = 43.68$.
Figure~\ref{fig:ba_norm_dff} shows that in all iterations $\left|\|W_1\|_F^2-\|W_2\|_F^2\right|$ and $\left|\|W_2\|_F^2-\|W_3\|_F^2\right|$ are bounded by $0.14$ which is much smaller than the magnitude of each $\|W_h\|_F^2$.
Figures~\ref{fig:ba_norm_ratios} shows that the ratios between norms approach $1$.
We then test an unbalanced initialization: $W_1[i,j] \sim  N(0,10^{-4})$, $W_2[i,j] \sim N(0,10^{-4})$ and $W_3[i,j] \sim   N(0,10^{-4})$.
After 10,000 iterations we have $\|W_1\|_F^2 = 55.50$, $\|W_2\|_F^2 = 45.65$ and $\|W_3\|_F^2 = 45.46$. 
Figure~\ref{fig:imba_norm_dff} shows that $\left|\|W_1\|_F^2-\|W_2\|_F^2\right|$ and $\left|\|W_2\|_F^2-\|W_3\|_F^2\right|$ are bounded by $9$ (and indeed change very little throughout the process), and Figures~\ref{fig:imba_norm_ratios} shows that the ratios become close to  $1$ after about 1,000 iterations.

\begin{figure*}[t!]
	\centering
	\begin{subfigure}[t]{0.22\textwidth}
		\includegraphics[width=\textwidth]{./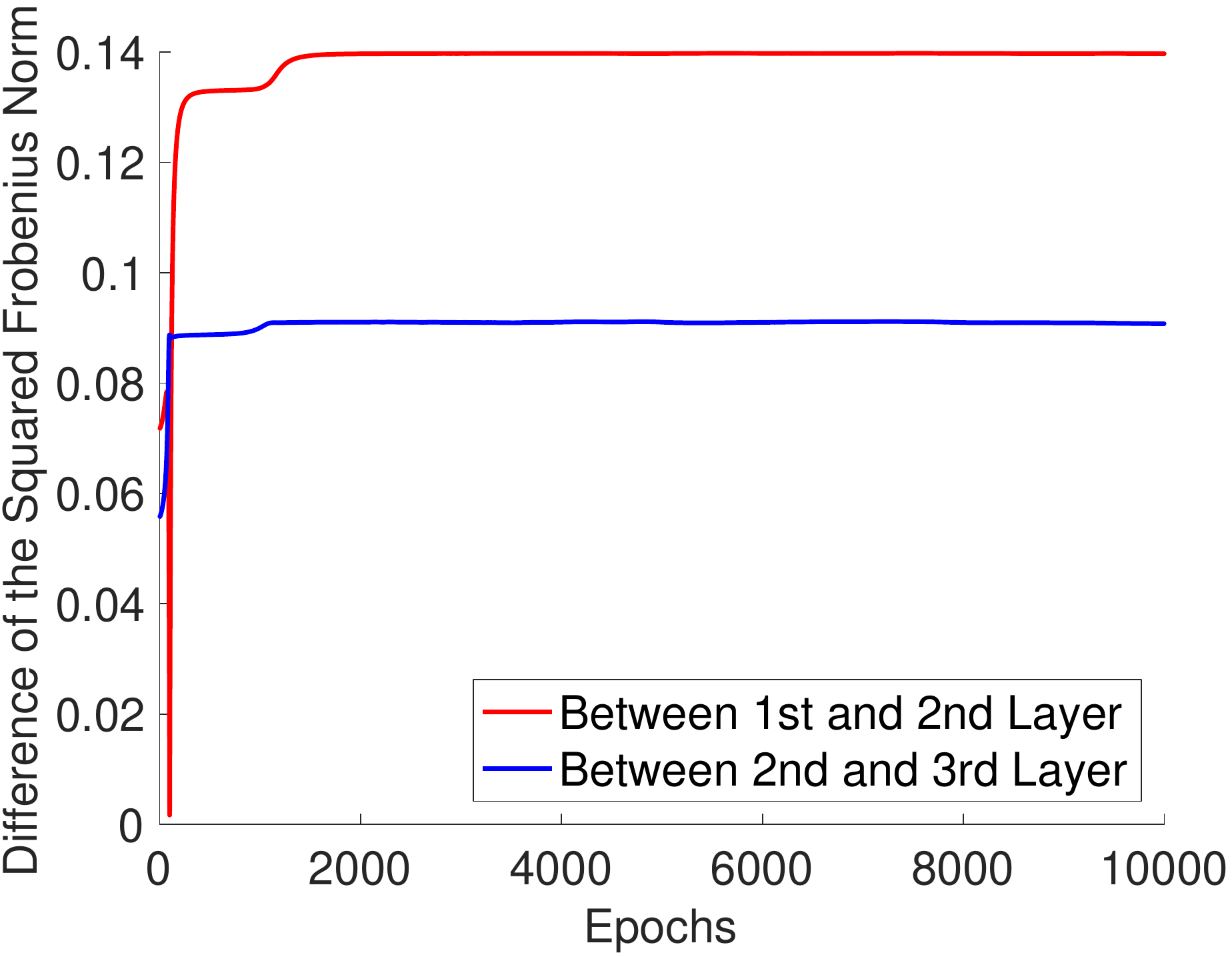}
		\caption{Balanced initialization, squared norm differences.}
		\label{fig:ba_norm_dff}
	\end{subfigure}	
	\quad
	\begin{subfigure}[t]{0.22\textwidth}
		\includegraphics[width=\textwidth]{./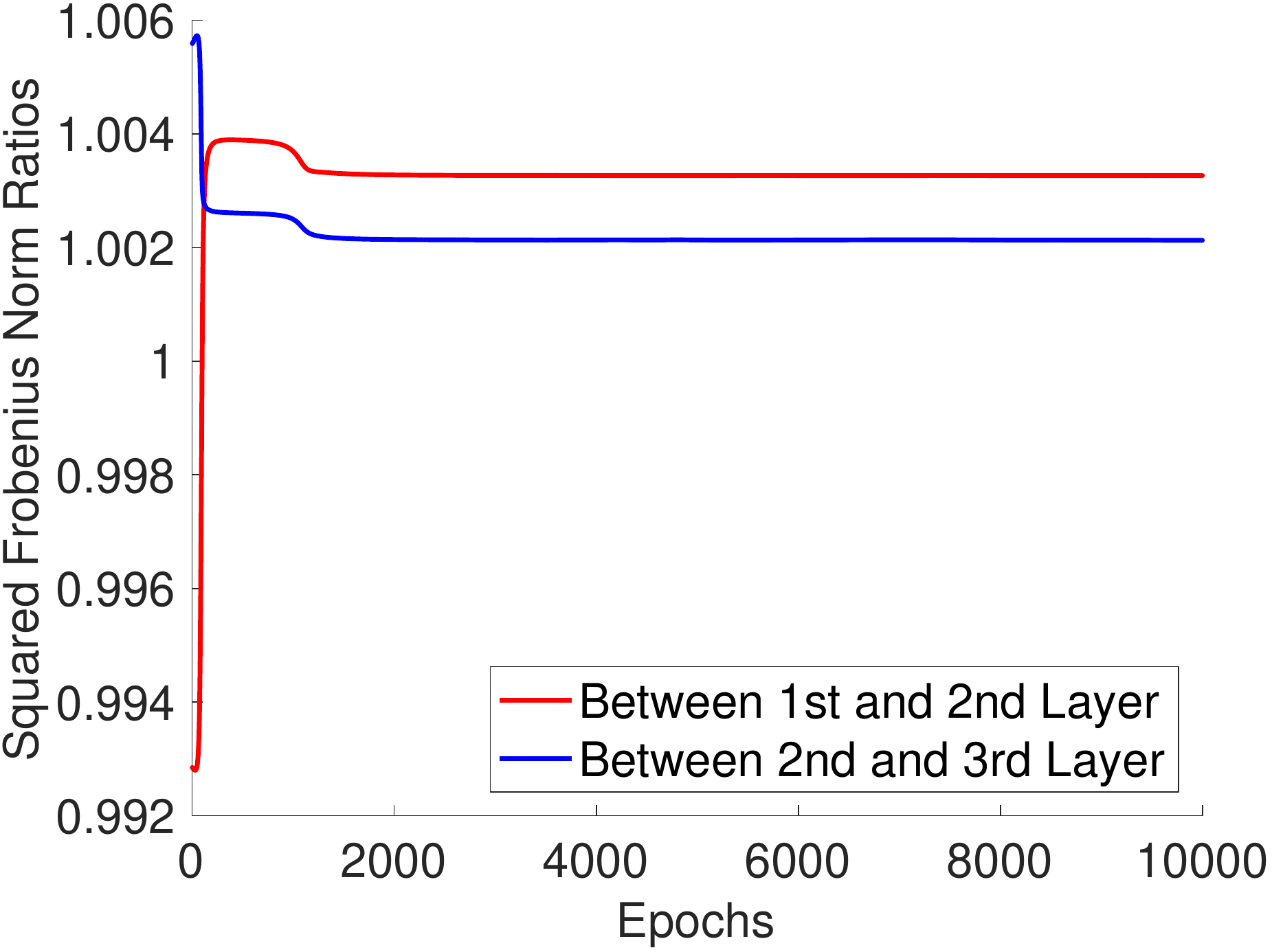}
		\caption{Balanced initialization,  squared norm ratios.}
		\label{fig:ba_norm_ratios}
	\end{subfigure}
	\quad
	\begin{subfigure}[t]{0.22\textwidth}
		\includegraphics[width=\textwidth]{./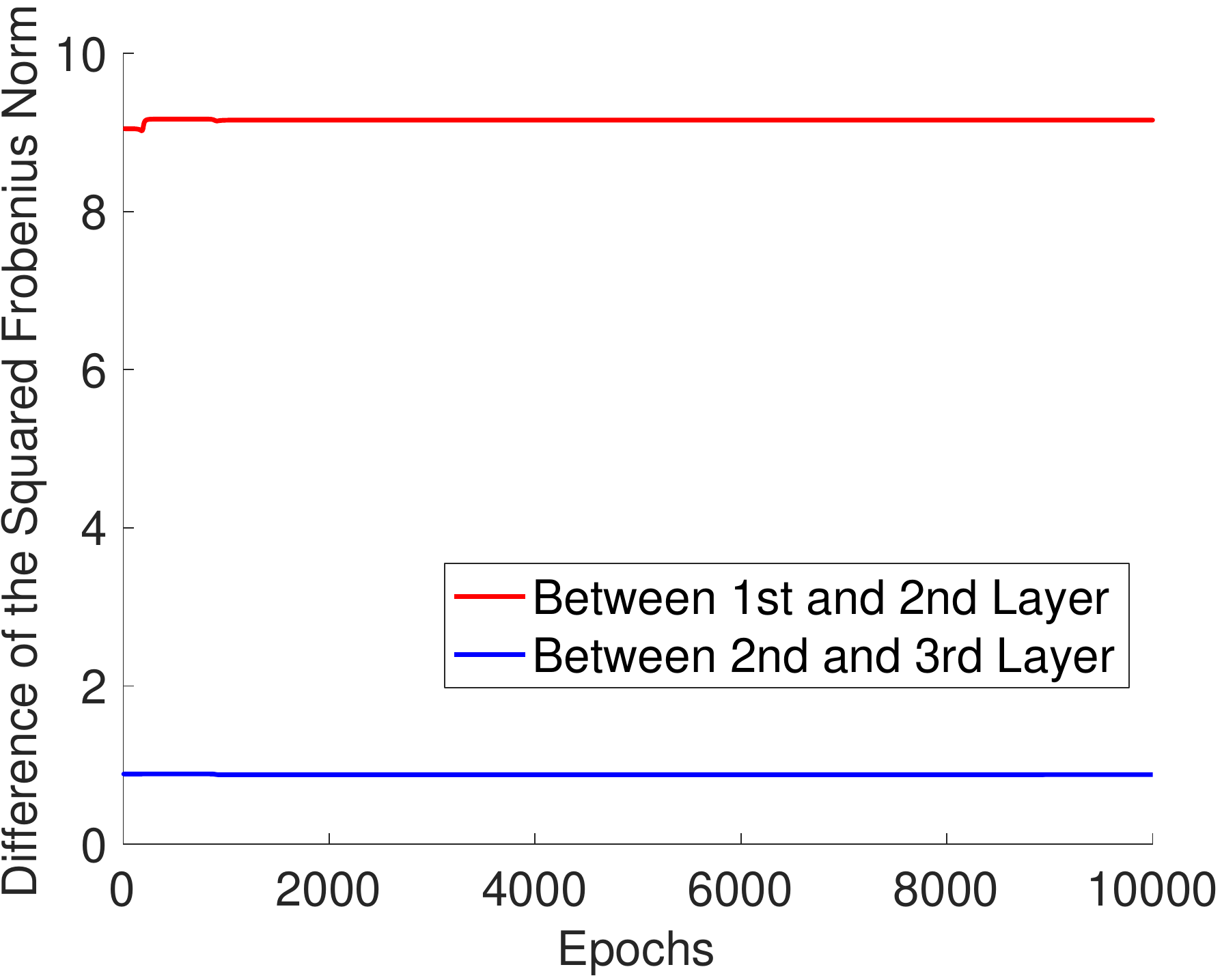}
		\caption{Unbalanced Initialization, squared norm differences.}
		\label{fig:imba_norm_dff}
	\end{subfigure}	
	\quad
	\begin{subfigure}[t]{0.22\textwidth}
		\includegraphics[width=\textwidth]{./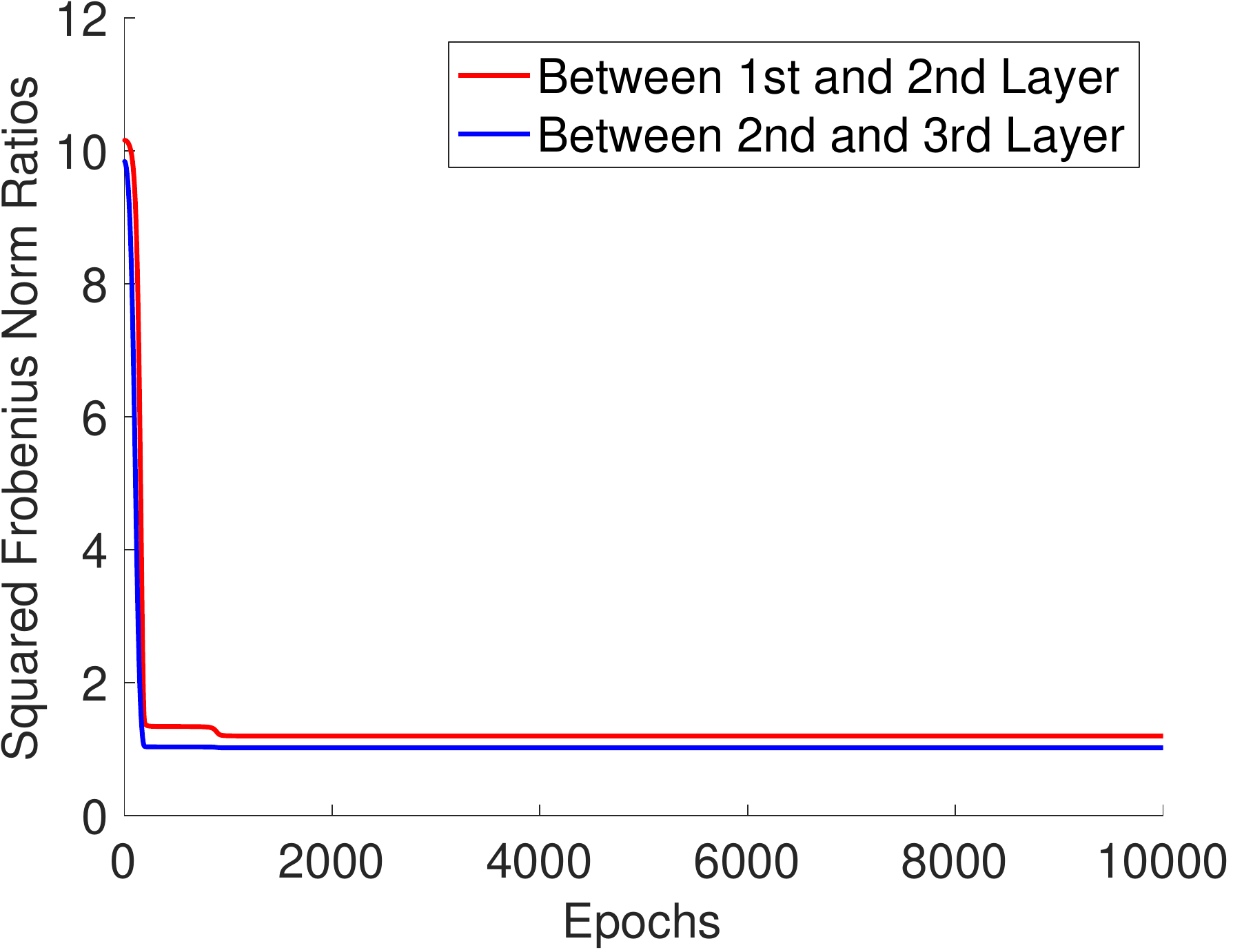}
		\caption{Unbalanced initialization,  squared norm ratios.}
		\label{fig:imba_norm_ratios}
	\end{subfigure}
	\caption{Balancedness of a 3-layer neural network.}
	\label{fig:experiment}
\end{figure*}

\section{Conclusion and Future Work}
\label{sec:con}

In this paper we take a step towards characterizing the invariance imposed by first order algorithms.
We show that gradient flow automatically balances the magnitudes of all layers in a deep neural network with homogeneous activations. 
For the concrete model of asymmetric matrix factorization, we further use the balancedness property to show that gradient descent converges to global minimum.
We believe our findings on the invariance in deep models could serve as a fundamental building block for understanding optimization in deep learning.
Below we list some future directions.

\paragraph{Other first-order methods.}
In this paper we focus on the invariance induced by gradient descent.
In practice, different acceleration and adaptive methods are also used.
A natural future direction is how to characterize the invariance properties of these algorithms.

\paragraph{From gradient flow to gradient descent: a generic analysis?}
As discussed in Section~\ref{sec:mf}, while strong invariance properties hold for gradient flow, in practice one uses gradient descent with positive step sizes and the invariance may only hold approximately because positive step sizes discretize the dynamics.
We use specialized techniques for analyzing asymmetric matrix factorization.
It would be very interesting to develop a generic approach to analyze the discretization.
Recent findings on the connection between optimization and ordinary differential equations~\citep{su2014differential,zhang2018direct} might be useful for this purpose.

\section*{Acknowledgements}
\label{sec:ack}
We thank Phil Long for his helpful comments on an earlier draft of this paper. JDL acknowledges support from ARO W911NF-11-1-0303.

\bibliography{simonduref}
\bibliographystyle{plainnat}
\newpage

\appendix
\section*{\Large Appendix}
\section{Proofs for Section~\ref{sec:conserved}}
\label{sec:proof-conserved}
\begin{proof}[Proof of Theorem~\ref{thm:conserved-linear}]
	Same as the proof of Theorem~\ref{thm:conserved-neuron}, we assume without loss of generality that $L(\vect{w}) = \ell(f_{\vect{w}}(\vect{x}), \vect{y})$ for some $(\vect{x}, \vect{y}) \in \R^d\times\R^p$.
	We also denote $\vect{x}^{(h)} = f_{\vect{w}}^{(h)}(\vect{x})$ ($\forall h\in[N]$), $\vect x^{(0)} = \vect x$ and $\phi_0(x) = x$.
	
	Now we suppose $\phi_h(x) = x$ for some $h\in[N-1]$.
	Denote $\vect{u} = \phi_{h-1}(\vect{x}^{(h-1)})$. Then we have $\vect{x}^{(h+1)} = \mat{W}^{(h+1)} \vect{x}^{(h)} = \mat{W}^{(h+1)} \mat{W}^{(h)} \vect{u}$.
	Using the chain rule, we can directly compute
	\begin{align*}
	\frac{\partial L(\vect{w})}{\partial \mat{W}^{(h)}}
	&= \frac{\partial L(\vect{w})}{\partial \vect{x}^{(h)}} \vect{u}^\top
	= (\mat{W}^{(h+1)})^\top \frac{\partial L(\vect{w})}{\partial \vect{x}^{(h+1)}} \vect{u}^\top,\\
	\frac{\partial L(\vect{w})}{\partial \mat{W}^{(h+1)}}
	&= \frac{\partial L(\vect{w})}{\partial \vect{x}^{(h+1)}} (\vect{x}^{(h)})^\top
	= \frac{\partial L(\vect{w})}{\partial \vect{x}^{(h+1)}} ( \mat{W}^{(h)} \vect{u})^\top.
	\end{align*}
	Then we have
	\begin{align*}
		\frac{\d}{\dt} \left( \mat{W}^{(h)} (\mat{W}^{(h)})^\top \right)
		&= \mat{W}^{(h)} \left( \frac{\d}{\dt} \mat{W}^{(h)} \right)^\top + \left( \frac{\d}{\dt} \mat{W}^{(h)} \right) (\mat{W}^{(h)})^\top \\
		&= \mat{W}^{(h)} \vect{u} \left(\frac{\partial L(\vect{w})}{\partial \vect{x}^{(h+1)}}\right)^\top \mat{W}^{(h+1)} + (\mat{W}^{(h+1)})^\top \frac{\partial L(\vect{w})}{\partial \vect{x}^{(h+1)}} \vect{u}^\top (\mat{W}^{(h)})^\top,\\
		\frac{\d}{\dt} \left( (\mat{W}^{(h+1)})^\top \mat{W}^{(h+1)} \right)
		&= (\mat{W}^{(h+1)})^\top \left( \frac{\d}{\dt} \mat{W}^{(h+1)} \right) + \left( \frac{\d}{\dt} \mat{W}^{(h+1)} \right)^\top \mat{W}^{(h+1)} \\
		&= (\mat{W}^{(h+1)})^\top \frac{\partial L(\vect{w})}{\partial \vect{x}^{(h+1)}} \vect{u}^\top (\mat{W}^{(h)})^\top +  \mat{W}^{(h)} \vect{u} \left(\frac{\partial L(\vect{w})}{\partial \vect{x}^{(h+1)}}\right)^\top \mat{W}^{(h+1)}. 
	\end{align*}
	Comparing the above two equations we know $\frac{\d}{\dt} \left( \mat{W}^{(h)} (\mat{W}^{(h)})^\top  - (\mat{W}^{(h+1)})^\top \mat{W}^{(h+1)} \right) = \mat{0}$.
\end{proof}

\begin{proof}[Proof of Theorem~\ref{thm:cnn}]
	Same as the proof of Theorem~\ref{thm:conserved-neuron}, we assume without loss of generality that $L(\vect{v}) = L(\vect{w}) = \ell(f_{\vect{w}}(\vect{x}), \vect{y})$ for $(\vect{x}, \vect{y}) \in \R^d\times\R^p$, and denote $\vect{x}^{(h)} = f_{\vect{w}}^{(h)}(\vect{x})$ ($\forall h\in[N]$), $\vect x^{(0)} = \vect x$ and $\phi_0(x) = x$.
	
	Using the chain rule, we have
	\begin{align*}
	\frac{\partial L(\vect v)}{\partial \vect v^{(h+1)}[l]}
	= \sum_{(k, i): g_{h+1}(k, i) = l} \frac{\partial L(\vect v)}{\partial \vect x^{(h+1)}[k]} \cdot \phi_h(\vect x^{(h)}[i]), \qquad l\in[d_{h+1}].
	\end{align*}
	Then we have using the sharp chain rule,
	\begin{equation} \label{eqn:proof-conserved-cnn-1}
	\begin{aligned}
	\frac{\d}{\dt}  \|\vect{v}^{(h+1)}\|^2
	&= 2 \left\langle \vect{v}^{(h+1)}, \frac{\d}{\dt} \vect{v}^{(h+1)} \right\rangle
	= -2 \left\langle \vect{v}^{(h+1)}, \frac{\partial L(\vect v)}{\partial \vect v^{(h+1)}} \right\rangle \\
	&= -2 \sum_l \sum_{(k, i): g_{h+1}(k, i) = l}  \frac{\partial L(\vect v)}{\partial \vect x^{(h+1)}[k]} \cdot \vect{v}^{(h+1)}[l] \cdot \phi_h(\vect x^{(h)}[i]) \\
	&= -2  \sum_{(k, i)}  \frac{\partial L(\vect v)}{\partial \vect x^{(h+1)}[k]} \cdot \mat{W}^{(h+1)}[k, i] \cdot \phi_h(\vect x^{(h)}[i]) \\
	&= -2  \sum_{k}  \frac{\partial L(\vect v)}{\partial \vect x^{(h+1)}[k]} \cdot \vect x^{(h+1)}[k] \\
	&= -2 \left\langle \frac{\partial L(\vect v)}{\partial \vect x^{(h+1)}} , \vect x^{(h+1)} \right\rangle.
	\end{aligned}
	\end{equation}
	Substituting $h$ with $h-1$ in \eqref{eqn:proof-conserved-cnn-1} gives $\frac{\d}{\dt}  \|\vect{v}^{(h)}\|^2 = -2 \left\langle \frac{\partial L(\vect v)}{\partial \vect x^{(h)}} , \vect x^{(h)} \right\rangle$, which further implies
	\begin{equation}\label{eqn:proof-conserved-cnn-2}
	\begin{aligned}
	\frac{\d}{\dt}  \|\vect{v}^{(h)}\|^2 &= -2 \left\langle \frac{\partial L(\vect v)}{\partial \vect x^{(h)}} , \vect x^{(h)} \right\rangle
	= -2 \sum_{i} \frac{\partial L(\vect v)}{\partial \vect x^{(h)}[i]} \cdot \vect x^{(h)}[i] \\
	&= -2 \sum_{i} \sum_k \frac{\partial L(\vect v)}{\partial \vect x^{(h+1)}[k]}\cdot \mat{W}^{(h+1)}[k, i] \cdot \phi_h'(\vect x^{(h)}[i]) \cdot \vect x^{(h)}[i] \\
	&= -2 \sum_k \frac{\partial L(\vect v)}{\partial \vect x^{(h+1)}[k]} \sum_{i} \mat{W}^{(h+1)}[k, i] \cdot \phi_h(\vect x^{(h)}[i]) \\
	&= -2  \sum_{k}  \frac{\partial L(\vect v)}{\partial \vect x^{(h+1)}[k]} \cdot \vect x^{(h+1)}[k] \\
	&= -2 \left\langle \frac{\partial L(\vect v)}{\partial \vect x^{(h+1)}} , \vect x^{(h+1)} \right\rangle.
	\end{aligned}
	\end{equation}
	The proof is finished by combining \eqref{eqn:proof-conserved-cnn-1} and \eqref{eqn:proof-conserved-cnn-2}.
\end{proof}

\section{Proof for Rank-$r$ Matrix Factorization (Theorem~\ref{thm:mf-main})}
\label{sec:proof-mf}

In this section we give the full proof of Theorem~\ref{thm:mf-main}.

First we recall the gradient of our objective function $f(\mat{U}, \mat{V}) = \frac12 \norm{\mat{U}\mat{V}^\top - \mat{M}^*}_F^2$:
\begin{align*}
\frac{\partial f(\mat U, \mat V)}{\partial \mat U} = (\mat U \mat V^\top - \mat M^*) \mat V, \qquad
\frac{\partial f(\mat U, \mat V)}{\partial \mat V} = (\mat U \mat V^\top - \mat M^*)^\top \mat U.
\end{align*}

We also need to calculate the Hessian $\nabla^2 f(\mat U, \mat V)$.
The Hessian can be viewed as a matrix that operates on vectorized matrices of dimension $(d_1+d_2)\times r$ (i.e., the same shape as $\begin{pmatrix}
\mat U\\ \mat V
\end{pmatrix}$).
Then, for any $\mat W \in \R^{(d_1+d_2)\times r}$, the Hessian $\nabla^2 f(\mat W)$ defines a quadratic form
\[
[\nabla^2 f(\mat W)] (\mat A, \mat B) = \sum_{i, j, k, l} \frac{\partial^2 f(\mat W)}{\partial \mat W[i, j] \partial \mat W[k, l]} \mat A[i, j] \mat B[k, l], \qquad \forall \mat A, \mat B \in \R^{(d_1+d_2)\times r}.
\]
With this notation, we can express the Hessian $\nabla^2 f(\mat U, \mat V)$ as follows:
\begin{equation} \label{eqn:mf-hessian}
\begin{aligned}  \
[\nabla^2 f(\mat U, \mat V)] (\mat \Delta, \mat \Delta)
= 2 \left\langle \mat U \mat V^\top - \mat M^*, \mat\Delta_{\mat U} \mat{\Delta}_{\mat V}^\top \right\rangle + \norm{\mat U \mat\Delta_{\mat V}^\top + \mat\Delta_{\mat U} \mat V^\top }_F^2, &
\\
\forall \mat \Delta = \begin{pmatrix}
\mat \Delta_{\mat U} \\ \mat{\Delta}_{\mat V}
\end{pmatrix}, 
\mat \Delta_{\mat U} \in \R^{d_1\times r}, \ &\mat \Delta_{\mat V} \in \R^{d_2\times r}.
\end{aligned}
\end{equation}

Now we use the expression of the Hessian to prove that $f(\mat U, \mat V)$ is locally smooth when both arguments $\mat U$ and $\mat V$ are bounded.

\begin{lem}[Smoothness over a bounded set]\label{lem:mf-local-smooth}
	For any $c>0$, constrained on the set $\cS = \{(\mat U, \mat V): \mat U \in \R^{d_1\times r}, \mat V \in \R^{d_2\times r}, \norm{\mat U}_F^2 \le c\norm{\mat M^*}_F, \norm{\mat V}_F^2 \le c\norm{\mat M^*}_F \}$, the function $f$ is $\left( (6c+2) \norm{\mat M^*}_F \right)$-smooth.
\end{lem}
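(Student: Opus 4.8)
The goal is to bound the operator norm of the Hessian $\nabla^2 f(\mat U, \mat V)$ uniformly over the set $\cS$, and the natural vehicle is the explicit quadratic form \eqref{eqn:mf-hessian}. I would show that for every direction $\mat\Delta = \begin{pmatrix}\mat\Delta_{\mat U} \\ \mat\Delta_{\mat V}\end{pmatrix}$ with $\norm{\mat\Delta}_F^2 = \norm{\mat\Delta_{\mat U}}_F^2 + \norm{\mat\Delta_{\mat V}}_F^2 = 1$, we have $\bigl| [\nabla^2 f(\mat U, \mat V)](\mat\Delta,\mat\Delta) \bigr| \le (6c+2)\norm{\mat M^*}_F$. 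Smoothness of $f$ on the convex set $\cS$ then follows, since an upper bound on the spectral norm of the Hessian on a convex set implies the gradient is Lipschitz there (by the mean value theorem / integral form of the fundamental theorem of calculus along segments).

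\textbf{Key steps.} First, estimate the first term $2\langle \mat U\mat V^\top - \mat M^*, \mat\Delta_{\mat U}\mat\Delta_{\mat V}^\top\rangle$. By Cauchy--Schwarz this is at most $2\norm{\mat U\mat V^\top - \mat M^*}_F \cdot \norm{\mat\Delta_{\mat U}\mat\Delta_{\mat V}^\top}_F$. For the first factor, $\norm{\mat U\mat V^\top}_F \le \norm{\mat U}_F \norm{\mat V}_F \le c\norm{\mat M^*}_F$ on $\cS$ (using the submultiplicativity $\norm{AB^\top}_F \le \norm{A}_F\norm{B}_F$, which follows from $\norm{AB^\top}_F \le \norm{A}_{op}\norm{B}_F \le \norm{A}_F\norm{B}_F$), so by the triangle inequality $\norm{\mat U\mat V^\top - \mat M^*}_F \le (c+1)\norm{\mat M^*}_F$. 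For the second factor, $\norm{\mat\Delta_{\mat U}\mat\Delta_{\mat V}^\top}_F \le \norm{\mat\Delta_{\mat U}}_F\norm{\mat\Delta_{\mat V}}_F \le \frac12(\norm{\mat\Delta_{\mat U}}_F^2 + \norm{\mat\Delta_{\mat V}}_F^2) = \frac12$. So the first term is at most $2 \cdot (c+1)\norm{\mat M^*}_F \cdot \frac12 = (c+1)\norm{\mat M^*}_F$.

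\textbf{Second step.} Bound the second term $\norm{\mat U\mat\Delta_{\mat V}^\top + \mat\Delta_{\mat U}\mat V^\top}_F^2$. Using $(a+b)^2 \le 2a^2 + 2b^2$ and submultiplicativity,
\begin{align*}
\norm{\mat U\mat\Delta_{\mat V}^\top + \mat\Delta_{\mat U}\mat V^\top}_F^2
&\le 2\norm{\mat U\mat\Delta_{\mat V}^\top}_F^2 + 2\norm{\mat\Delta_{\mat U}\mat V^\top}_F^2 \\
&\le 2\norm{\mat U}_F^2\norm{\mat\Delta_{\mat V}}_F^2 + 2\norm{\mat\Delta_{\mat U}}_F^2\norm{\mat V}_F^2 \\
&\le 2c\norm{\mat M^*}_F\norm{\mat\Delta_{\mat V}}_F^2 + 2c\norm{\mat M^*}_F\norm{\mat\Delta_{\mat U}}_F^2 = 2c\norm{\mat M^*}_F.
\end{align*}
Adding the two bounds gives $\bigl|[\nabla^2 f](\mat\Delta,\mat\Delta)\bigr| \le (c+1)\norm{\mat M^*}_F + 2c\norm{\mat M^*}_F = (3c+1)\norm{\mat M^*}_F$, which is even a bit sharper than the claimed $(6c+2)\norm{\mat M^*}_F$; in any case it suffices. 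Finally, I would convert this spectral-norm bound on a convex set into the Lipschitz-gradient statement: for $(\mat U_1,\mat V_1),(\mat U_2,\mat V_2)\in\cS$, write $\nabla f$ at the two points as an integral of $\nabla^2 f$ along the connecting segment (which stays in $\cS$ by convexity) to conclude $\norm{\nabla f(\mat U_1,\mat V_1) - \nabla f(\mat U_2,\mat V_2)}_F \le (6c+2)\norm{\mat M^*}_F \cdot \norm{(\mat U_1,\mat V_1)-(\mat U_2,\mat V_2)}_F$.

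\textbf{Main obstacle.} There is no serious obstacle here — the proof is a routine sequence of Cauchy--Schwarz and submultiplicativity estimates. The only point requiring mild care is bookkeeping the normalization $\norm{\mat\Delta}_F^2 = \norm{\mat\Delta_{\mat U}}_F^2 + \norm{\mat\Delta_{\mat V}}_F^2 = 1$ and the AM--GM step $\norm{\mat\Delta_{\mat U}}_F\norm{\mat\Delta_{\mat V}}_F \le \frac12$, plus the standard (but worth stating) fact that a uniform bound on $\|\nabla^2 f\|_{op}$ over a \emph{convex} set yields Lipschitzness of $\nabla f$ on that set.
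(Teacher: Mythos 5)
Your proposal is correct and follows essentially the same route as the paper: bound the Hessian quadratic form \eqref{eqn:mf-hessian} by Cauchy--Schwarz and submultiplicativity of the Frobenius norm, then translate the Hessian bound on the convex set $\cS$ into Lipschitzness of the gradient. The only difference is that you use the AM--GM inequality $\norm{\mat\Delta_{\mat U}}_F\norm{\mat\Delta_{\mat V}}_F \le \tfrac12\norm{\mat\Delta}_F^2$ and $(a+b)^2\le 2a^2+2b^2$ where the paper uses slightly looser estimates, yielding the sharper constant $(3c+1)\norm{\mat M^*}_F$ in place of $(6c+2)\norm{\mat M^*}_F$; this is a bookkeeping improvement, not a different argument.
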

\begin{proof}
	We prove smoothness by giving an upper bound on $\lambda_{\max}(\nabla^2 f(\mat U, \mat V)) $ for any $(\mat U, \mat V) \in \cS$.
	
	For any $(\mat U, \mat V) \in \cS$ and any $ \mat \Delta = \begin{pmatrix}
	\mat \Delta_{\mat U} \\ \mat{\Delta}_{\mat V}
	\end{pmatrix}$ ($\mat \Delta_{\mat U} \in \R^{d_1\times r}, \mat \Delta_{\mat V} \in \R^{d_2\times r}$),
	from \eqref{eqn:mf-hessian} we have
	\begin{align*}
	& [\nabla^2 f(\mat U, \mat V)] (\mat \Delta, \mat \Delta) \\
	\le\,& 2 \norm{ \mat U \mat V^\top - \mat M^*}_F \norm{ \mat\Delta_{\mat U} \mat{\Delta}_{\mat V}^\top}_F  + \norm{\mat U \mat\Delta_{\mat V}^\top + \mat\Delta_{\mat U} \mat V^\top }_F^2 \\
	\le\,& 2 \left( \norm{ \mat U}_F \norm{\mat V^\top}_F + \norm{\mat M^*}_F \right) \norm{ \mat\Delta_{\mat U}}_F \norm{\mat{\Delta}_{\mat V}^\top}_F  + \left( \norm{\mat U}_F \norm {\mat\Delta_{\mat V}^\top }_F + \norm{\mat\Delta_{\mat U}}_F \norm{\mat V^\top}_F \right)^2 \\
	\le\,& 2\left( c\norm{\mat M^*}_F + \norm{\mat M^*}_F  \right) \norm{\mat{\Delta}}_F^2 + \left( 2\sqrt{c\norm{\mat M^*}_F} \cdot  \norm{\mat{\Delta}}_F \right)^2 \\
	=\,& (6c+2) \norm{\mat M^*}_F  \norm{\mat{\Delta}}_F^2.
	\end{align*}
	This implies $\lambda_{\max}(\nabla^2 f(\mat U, \mat V)) \le (6c+2) \norm{\mat M^*}_F$.
\end{proof}

\subsection{Proof of Lemma~\ref{lem:mf-balance}}

Recall the following three properties we want to prove in Lemma~\ref{lem:mf-balance}, which we call $\cA(t)$, $\cB(t)$ and $\cC(t)$, respectively:
\begin{align*}
	\cA(t):\qquad & \norm{\mat U_t^\top \mat U_t - \mat V_t^\top \mat V_t}_F \le \epsilon, \\
	\cB(t):\qquad & f(\mat U_{t}, \mat V_{t}) \le f(\mat U_{t-1}, \mat V_{t-1}) \le \cdots \le f(\mat U_{0}, \mat V_{0}) \le 2 \norm{\mat M^*}_F^2, \\
	\cC(t):\qquad & \norm{\mat U_{t}}_F^2 \le  5\sqrt{r} \norm{\mat M^*}_F, \norm{\mat V_t}_F^2 \le  5\sqrt{r} \norm{\mat M^*}_F.
\end{align*}

We use induction to prove these statements.
For $t=0$, we can make the Gaussian variance in the initialization sufficiently small such that with high probability we have $$\norm{\mat U_{0}}_F^2 \le  \epsilon,\qquad \norm{\mat V_0}_F^2 \le  \epsilon, \qquad \norm{\mat U_0^\top \mat U_0 - \mat V_0^\top \mat V_0}_F \le \frac\epsilon2.$$
From now on we assume they are all satisfied.
Then $\cA(0)$ is already satisfied, $\cC(0)$ is satisfied because $\epsilon < \norm{\mat M^*}_F$,
and $\cB(0)$ can be verified by $f(\mat U_{0}, \mat V_{0}) = \frac12 \norm{\mat U_0 \mat V_0^\top - \mat M^*}_F^2 \le \norm{\mat U_0 \mat V_0^\top}_F^2 + \norm{\mat M^*}_F^2 \le \norm{\mat U_0}_F^2 \norm{\mat V_0^\top}_F^2 + \norm{\mat M^*}_F^2 \le \epsilon^2+\norm{\mat M^*}_F^2\le 2\norm{\mat M^*}_F^2$.

To prove $\cA(t)$, $\cB(t)$ and $\cC(t)$ for all $t$, we prove the following three claims. Since we have $\cA(0)$, $\cB(0)$ and $\cC(0)$, if the following claims are all true, the proof will be completed by induction.
\begin{enumerate}[(i)]
	\item $\cB(0), \ldots, \cB(t), \cC(0),  \ldots, \cC(t) \implies \cA(t+1)$;
	\item $\cB(0), \ldots, \cB(t), \cC(t) \implies \cB(t+1)$;
	\item $\cA(t), \cB(t) \implies \cC(t)$.
\end{enumerate}

\begin{claim}
	$\cB(0), \ldots, \cB(t), \cC(0),  \ldots, \cC(t) \implies \cA(t+1)$.
\end{claim}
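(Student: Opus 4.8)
The plan is to track how the quantity $\mat U_s^\top \mat U_s - \mat V_s^\top \mat V_s$ changes in a single gradient step and sum the error over $s = 0, 1, \ldots, t$. First I would write out the update explicitly: from \eqref{eqn:mf-gd-dynamics}, with $\mat E_s := \mat U_s \mat V_s^\top - \mat M^*$, we have $\mat U_{s+1} = \mat U_s - \eta_s \mat E_s \mat V_s$ and $\mat V_{s+1} = \mat V_s - \eta_s \mat E_s^\top \mat U_s$. Then
\begin{align*}
\mat U_{s+1}^\top \mat U_{s+1} &= \mat U_s^\top \mat U_s - \eta_s \left( \mat U_s^\top \mat E_s \mat V_s + \mat V_s^\top \mat E_s^\top \mat U_s \right) + \eta_s^2 \mat V_s^\top \mat E_s^\top \mat E_s \mat V_s, \\
\mat V_{s+1}^\top \mat V_{s+1} &= \mat V_s^\top \mat V_s - \eta_s \left( \mat V_s^\top \mat E_s^\top \mat U_s + \mat U_s^\top \mat E_s \mat V_s \right) + \eta_s^2 \mat U_s^\top \mat E_s \mat E_s^\top \mat U_s.
\end{align*}
The crucial observation is that the first-order (in $\eta_s$) terms are \emph{identical} in the two expressions, so they cancel in the difference — this is exactly the discrete analogue of the gradient-flow invariance from Theorem~\ref{thm:conserved-linear}. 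Hence
\[
\mat U_{s+1}^\top \mat U_{s+1} - \mat V_{s+1}^\top \mat V_{s+1} = \left( \mat U_s^\top \mat U_s - \mat V_s^\top \mat V_s \right) + \eta_s^2 \left( \mat V_s^\top \mat E_s^\top \mat E_s \mat V_s - \mat U_s^\top \mat E_s \mat E_s^\top \mat U_s \right).
\]

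Next I would bound the Frobenius norm of the second-order error term. Using submultiplicativity, $\norm{\mat V_s^\top \mat E_s^\top \mat E_s \mat V_s}_F \le \norm{\mat V_s}_F^2 \norm{\mat E_s}_F^2$ and similarly for the other piece, so the error at step $s$ is at most $\eta_s^2 \left( \norm{\mat U_s}_F^2 + \norm{\mat V_s}_F^2 \right) \norm{\mat E_s}_F^2$. Now invoke the hypotheses: $\cC(s)$ gives $\norm{\mat U_s}_F^2, \norm{\mat V_s}_F^2 \le 5\sqrt{r}\norm{\mat M^*}_F$, and $\cB(s)$ gives $\norm{\mat E_s}_F^2 = 2 f(\mat U_s, \mat V_s) \le 4\norm{\mat M^*}_F^2$. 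Therefore the per-step error is at most $\eta_s^2 \cdot 10\sqrt{r}\norm{\mat M^*}_F \cdot 4\norm{\mat M^*}_F^2 = 40\sqrt{r}\,\eta_s^2 \norm{\mat M^*}_F^3$. Telescoping from $0$ to $t$ and using $\cA(0)$ (i.e. $\norm{\mat U_0^\top \mat U_0 - \mat V_0^\top \mat V_0}_F \le \epsilon/2$),
\[
\norm{\mat U_{t+1}^\top \mat U_{t+1} - \mat V_{t+1}^\top \mat V_{t+1}}_F \le \frac{\epsilon}{2} + 40\sqrt{r}\,\norm{\mat M^*}_F^3 \sum_{s=0}^{t} \eta_s^2.
\]

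Finally I would plug in $\eta_s = \frac{\sqrt{\epsilon/r}}{100(s+1)\norm{\mat M^*}_F^{3/2}}$, so $\eta_s^2 = \frac{\epsilon/r}{10^4 (s+1)^2 \norm{\mat M^*}_F^3}$, and use $\sum_{s=0}^{\infty} \frac{1}{(s+1)^2} = \frac{\pi^2}{6} < 2$ to get $40\sqrt{r}\norm{\mat M^*}_F^3 \sum_s \eta_s^2 \le 40\sqrt{r}\norm{\mat M^*}_F^3 \cdot \frac{2\epsilon/r}{10^4 \norm{\mat M^*}_F^3} = \frac{80\epsilon}{10^4 \sqrt{r}} \le \frac{\epsilon}{2}$, which yields $\norm{\mat U_{t+1}^\top \mat U_{t+1} - \mat V_{t+1}^\top \mat V_{t+1}}_F \le \epsilon$, i.e. $\cA(t+1)$. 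The main thing to get right is the bookkeeping of constants so that the accumulated discretization error stays below $\epsilon/2$ — this is where the specific choice $\eta_s = \Theta(s^{-1})$ (more generally $\Theta(s^{-(1/2+\delta)})$, so that $\sum \eta_s^2 < \infty$) is essential; nothing else in the argument is delicate, since the first-order cancellation is exact.
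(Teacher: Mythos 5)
Your proposal follows exactly the same route as the paper's proof: expand $\mat U_{s+1}^\top \mat U_{s+1} - \mat V_{s+1}^\top \mat V_{s+1}$ using the GD update, observe that the $O(\eta_s)$ cross-terms cancel exactly (the discrete shadow of Theorem~\ref{thm:conserved-linear}), bound the $O(\eta_s^2)$ remainder via submultiplicativity and the hypotheses $\cB(s), \cC(s)$, then telescope and use summability of $\sum_s \eta_s^2$. Your constant bookkeeping matches the paper's ($40\sqrt{r}\,\eta_s^2 \norm{\mat M^*}_F^3$ per step, $\sum 1/(s+1)^2 < 2$), and your version of the quadratic remainder $\mat V_s^\top \mat E_s^\top \mat E_s \mat V_s - \mat U_s^\top \mat E_s \mat E_s^\top \mat U_s$ is actually the dimensionally correct form where the paper has a small transpose typo.
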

\begin{proof}
	Using the update rule \eqref{eqn:mf-gd-dynamics} we can calculate
	\begin{align*}
	& \mat U_{t+1}^\top \mat U_{t+1} - \mat V_{t+1}^\top \mat V_{t+1} \\ 
	=\, & \left(\mat U_t - \eta_t (\mat U_t \mat V_t^\top - \mat M^*) \mat V_t\right)^\top \left(\mat U_t - \eta_t (\mat U_t \mat V_t^\top - \mat M^*) \mat V_t\right) \\ & - \left(\mat V_t - \eta_t (\mat U_t \mat V_t^\top - \mat M^*)^\top \mat U_t\right)^\top \left(\mat V_t - \eta_t (\mat U_t \mat V_t^\top - \mat M^*)^\top \mat U_t\right) \\
	=\, & \mat U_t^\top \mat U_t - \mat V_t^\top \mat V_t + \eta_t^2 \left( \mat V_t^\top \mat R_t^\top \mat R_t \mat V_t - \mat U_t^\top \mat R_t^\top \mat R_t \mat U_t \right),
	\end{align*}
	where $\mat R_t = \mat U_t \mat V_t^\top - \mat M^*$.
	Then we have
	\begin{equation} \label{eqn:mf-inproof-1}
	\begin{aligned}
	&\norm{\mat U_{t+1}^\top \mat U_{t+1} - \mat V_{t+1}^\top \mat V_{t+1}}_F\\
	\le\,& \norm{\mat U_t^\top \mat U_t - \mat V_t^\top \mat V_t}_F + \eta_t^2 \left( \norm{\mat V_t^\top \mat R_t^\top \mat R_t \mat V_t }_F + \norm{ \mat U_t^\top \mat R_t^\top \mat R_t \mat U_t}_F \right) \\
	\le\,& \norm{\mat U_t^\top \mat U_t - \mat V_t^\top \mat V_t}_F + \eta_t^2 \left( \norm{\mat V_t}_F^2 \norm{\mat R_t}_F^2  + \norm{\mat U_t}_F^2 \norm{\mat R_t}_F^2  \right) \\
	=\,& \norm{\mat U_t^\top \mat U_t - \mat V_t^\top \mat V_t}_F + 2\eta_t^2 \left( \norm{\mat V_t}_F^2   + \norm{\mat U_t}_F^2   \right) f(\mat U_t, \mat V_t) \\
	\le\,& \norm{\mat U_t^\top \mat U_t - \mat V_t^\top \mat V_t}_F + 2\eta_t^2 \cdot 10\sqrt{r} \norm{\mat M^*}_F \cdot 2 \norm{\mat M^*}_F^2,
	\end{aligned}
	\end{equation}
	where the last line is due to $\cB(t)$ and $\cC(t)$.
	
	Since we have $\cB(t')$ and $\cC(t')$ for all $t' \le t$, \eqref{eqn:mf-inproof-1} is still true when substituting $t$ with any $t'\le t$. Summing all of them and noting $\norm{\mat U_0^\top \mat U_0 - \mat V_0^\top \mat V_0}_F \le \frac{\epsilon}{2}$, we get
	\begin{align*}
	&\norm{\mat U_{t+1}^\top \mat U_{t+1} - \mat V_{t+1}^\top \mat V_{t+1}}_F \\
	\le\,& \norm{\mat U_0^\top \mat U_0 - \mat V_0^\top \mat V_0}_F + 40\sqrt{r} \norm{\mat M^*}_F^3 \sum_{i=0}^t \eta_i^2 \\
	\le\,& \frac\epsilon2 + 40\sqrt{r} \norm{\mat M^*}_F^3 \sum_{i=0}^t \frac{1}{(i+1)^2} \cdot \frac{\epsilon/r}{100^2\norm{\mat M^*}_F^3}\\ 
	\le\,&\epsilon.
	\end{align*}
	Therefore we have proved $\cA(t+1)$.
\end{proof}

\begin{claim}
	$\cB(0), \ldots, \cB(t), \cC(t) \implies \cB(t+1)$.
\end{claim}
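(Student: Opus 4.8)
The plan is to establish the single-step descent inequality $f(\mat U_{t+1},\mat V_{t+1}) \le f(\mat U_t,\mat V_t)$, since the rest of the chain $f(\mat U_t,\mat V_t) \le \cdots \le f(\mat U_0,\mat V_0) \le 2\norm{\mat M^*}_F^2$ is exactly $\cB(t)$. The natural tool is the standard descent lemma for a $\beta$-smooth function: if $f$ is $\beta$-smooth along the segment joining the iterates, then $f(\mat U_{t+1},\mat V_{t+1}) \le f(\mat U_t,\mat V_t) - \eta_t\left(1 - \frac{\beta\eta_t}{2}\right)\norm{\nabla f(\mat U_t,\mat V_t)}_F^2$, which gives decrease as soon as $\eta_t \le \frac{1}{\beta}$.

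First I would invoke $\cC(t)$ (and, crucially, a bound on the $next$ iterate as well) to place the relevant iterates in the bounded set $\cS = \{\norm{\mat U}_F^2 \le c\norm{\mat M^*}_F,\ \norm{\mat V}_F^2 \le c\norm{\mat M^*}_F\}$ for a suitable constant $c$, so that Lemma~\ref{lem:mf-local-smooth} yields $\beta = (6c+2)\norm{\mat M^*}_F$-smoothness of $f$ on $\cS$. To make the descent lemma applicable I need the whole segment from $(\mat U_t,\mat V_t)$ to $(\mat U_{t+1},\mat V_{t+1})$ to lie in $\cS$; one clean way is to slightly enlarge the constant (say use $\cS$ with $c = 6\sqrt r$ rather than $5\sqrt r$) and observe that the gradient step has length $\eta_t\norm{\nabla f(\mat U_t,\mat V_t)}_F$, which by $\cB(t)$, $\cC(t)$ and the gradient formulas is at most $\eta_t\cdot O(\sqrt r\,\norm{\mat M^*}_F^{3/2})$; since $\eta_t = \frac{\sqrt{\epsilon/r}}{100(t+1)\norm{\mat M^*}_F^{3/2}}$ and $\epsilon < \norm{\mat M^*}_F$, this displacement is tiny compared to $\sqrt{\norm{\mat M^*}_F}$, so the segment stays inside the enlarged $\cS$. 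Then I would check $\beta\eta_t \le 1$: with $\beta = O(\sqrt r\,\norm{\mat M^*}_F)$ and $\eta_t = \frac{\sqrt{\epsilon/r}}{100(t+1)\norm{\mat M^*}_F^{3/2}}$, we get $\beta\eta_t = O\!\left(\frac{\sqrt\epsilon}{(t+1)\sqrt{\norm{\mat M^*}_F}}\right) \le 1$ for the chosen small constants. The descent lemma then gives $f(\mat U_{t+1},\mat V_{t+1}) \le f(\mat U_t,\mat V_t)$, establishing $\cB(t+1)$.

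The main obstacle is the circularity around boundedness: the descent lemma requires smoothness along the segment between consecutive iterates, but knowing $(\mat U_{t+1},\mat V_{t+1})$ stays bounded is almost what we are trying to prove (that is the job of claim (iii), $\cA(t),\cB(t)\implies\cC(t)$, which needs $\cB$ at the same index). I expect to resolve this by the displacement estimate above — bounding the step length $a$ $priori$ from $\cB(t)$ and $\cC(t)$ alone, with room to spare, so that the one-step excursion cannot leave a slightly enlarged bounded region even before $\cC(t+1)$ is known; this decouples the induction. An alternative, if one prefers to avoid enlarging $\cS$, is to argue directly with the Hessian expression \eqref{eqn:mf-hessian}: expand $f(\mat U_{t+1},\mat V_{t+1})$ via a second-order Taylor expansion with integral remainder, bound the remainder using \eqref{eqn:mf-hessian} and the step-length estimate, and collect terms to extract the $-\eta_t(1-O(\eta_t))\norm{\nabla f}_F^2$ decrease. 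Either route reduces to the same routine smoothness-plus-small-step-size bookkeeping once the displacement estimate is in hand.
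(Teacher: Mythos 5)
Your proposal is correct and follows essentially the same route as the paper: bound the next iterate (or the step length) a priori from $\cB(t)$, $\cC(t)$, and the update rule so that the whole step lies in a slightly enlarged convex bounded region; invoke Lemma~\ref{lem:mf-local-smooth} for $\beta$-smoothness there; check $\eta_t \le 1/\beta$; and apply the standard descent inequality. The paper resolves the circularity you flag exactly as you anticipate — it computes $\norm{\mat U_{t+1}}_F^2 \le 2\norm{\mat U_t}_F^2 + 2\eta_t^2\norm{\mat U_t\mat V_t^\top - \mat M^*}_F^2\norm{\mat V_t}_F^2 \le 11\sqrt{r}\norm{\mat M^*}_F$ directly before invoking smoothness, which is the same idea as your displacement bound with a different (cruder) constant.
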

\begin{proof}
	Note that we only need to show $f(\mat U_{t+1}, \mat V_{t+1}) \le f(\mat U_{t}, \mat V_{t})$.
	We prove this using the standard analysis of gradient descent, for which we need the smoothness of the objective function $f$ (Lemma~\ref{lem:mf-local-smooth}).
	We first need to bound $\norm{\mat U_t}_F$, $\norm{\mat V_t}_F$, $\norm{\mat U_{t+1}}_F$ and $\norm{\mat V_{t+1}}_F$. We know from $\cC(t)$ that $\norm{\mat U_{t}}_F^2 \le  5\sqrt{r} \norm{\mat M^*}_F$ and $\norm{\mat V_t}_F^2 \le  5\sqrt{r} \norm{\mat M^*}_F$.
	We can also bound $\norm{\mat U_{t+1}}_F^2$ and $\norm{\mat V_{t+1}}_F^2$ easily from the GD update rule:
	\begin{align*}
		&\norm{\mat U_{t+1}}_F^2 \\
		=\,& \norm{\mat U_t - \eta_t (\mat U_t \mat V_t^\top - \mat M^*) \mat V_t}_F^2 \\
		\le\,& 2 \norm{\mat U_t}_F^2 + 2\eta_t^2 \norm{\mat U_t \mat V_t^\top - \mat M^*}_F^2 \norm{\mat V_t}_F^2 \\
		\le\,& 2\cdot 5\sqrt{r} \norm{\mat M^*}_F + 2 \eta_t^2 \cdot 2f(\mat U_t, \mat V_t) \cdot 5\sqrt{r} \norm{\mat M^*}_F\\
		\le\,& 10\sqrt{r} \norm{\mat M^*}_F + 2 \cdot \frac{\epsilon/r}{100^2 (t+1)^2 \norm{\mat M^*}_F^3} \cdot 4\norm{\mat M^*}_F^2 \cdot 5\sqrt{r} \norm{\mat M^*}_F & \text{(using $\cB(t)$)} \\
		\le \,& 10\sqrt{r} \norm{\mat M^*}_F + \frac{\epsilon}{100} \\
		\le\, & 11\sqrt{r} \norm{\mat M^*}_F. & \text{(using $\epsilon < \norm{\mat M^*}_F$)}
	\end{align*}
	Let $\beta =  (66\sqrt{r}+2) \norm{\mat M^*}_F $.
	 From Lemma~\ref{lem:mf-local-smooth}, $f$ is $\beta$-smooth over $\cS = \{(\mat U, \mat V): \norm{\mat U}_F^2 \le 11\sqrt{r} \norm{\mat M^*}_F, \norm{\mat V}_F^2 \le 11\sqrt{r} \norm{\mat M^*}_F \}$.
	 Also note that $\eta_t < \frac{1}{\beta}$ by our choice. Then using smoothness we have
	 \begin{equation} \label{eqn:mf-obj-decrease}
	 \begin{aligned}
	 &f(\mat U_{t+1}, \mat V_{t+1}) \\
	 \le\,& f(\mat U_{t}, \mat V_{t}) + \left\langle  \nabla f(\mat U_t, \mat V_t), \begin{pmatrix}
		\mat U_{t+1}\\ \mat V_{t+1}
	 	\end{pmatrix} - \begin{pmatrix}
	 	\mat U_{t}\\ \mat V_{t}
	 	\end{pmatrix} \right\rangle + \frac{\beta}{2} \norm{ \begin{pmatrix}
	 		\mat U_{t+1}\\ \mat V_{t+1}
	 		\end{pmatrix} - \begin{pmatrix}
	 		\mat U_{t}\\ \mat V_{t}
	 		\end{pmatrix}  }_F^2 \\
 	=\,& f(\mat U_{t}, \mat V_{t}) - \eta_t \norm{ \nabla f(\mat U_t, \mat V_t) }_F^2 + \frac{\beta}{2}\eta_t^2 \norm{ \nabla f(\mat U_t, \mat V_t) }_F^2 \\
 	\le\,& f(\mat U_{t}, \mat V_{t}) - \frac{\eta_t}{2} \norm{ \nabla f(\mat U_t, \mat V_t) }_F^2.
	 \end{aligned}
	 \end{equation}
	 Therefore we have shown $\cB(t+1)$.
\end{proof}

\begin{claim}
	$\cA(t), \cB(t) \implies \cC(t)$.
\end{claim}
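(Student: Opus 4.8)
The plan is to bound $\norm{\mat{U}_t}_F^2$ and $\norm{\mat{V}_t}_F^2$ directly from the two hypotheses $\cA(t)$ and $\cB(t)$, without reference to the dynamics. The key observation is that $\cA(t)$ controls how far apart $\norm{\mat{U}_t}_F^2$ and $\norm{\mat{V}_t}_F^2$ can be — indeed $\abs{\norm{\mat{U}_t}_F^2 - \norm{\mat{V}_t}_F^2} = \abs{\tr(\mat{U}_t^\top\mat{U}_t - \mat{V}_t^\top\mat{V}_t)} \le \sqrt{r}\,\norm{\mat{U}_t^\top\mat{U}_t - \mat{V}_t^\top\mat{V}_t}_F \le \sqrt{r}\,\epsilon$ — while $\cB(t)$ controls the product $\norm{\mat{U}_t\mat{V}_t^\top}_F$ via $\norm{\mat{U}_t\mat{V}_t^\top}_F \le \norm{\mat{U}_t\mat{V}_t^\top - \mat{M}^*}_F + \norm{\mat{M}^*}_F = \sqrt{2 f(\mat{U}_t,\mat{V}_t)} + \norm{\mat{M}^*}_F \le 2\norm{\mat{M}^*}_F + \norm{\mat{M}^*}_F = 3\norm{\mat{M}^*}_F$.

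First I would make precise the step linking the product of Frobenius norms to $\norm{\mat{U}_t\mat{V}_t^\top}_F$. This is not automatic for general rectangular factors, so the balancedness is essential: write $\mat{U}_t = \mat{A}\mat{\Sigma}_U\mat{B}^\top$-type reasoning is overkill; instead, note that since $\mat{U}_t^\top\mat{U}_t$ and $\mat{V}_t^\top\mat{V}_t$ are close in Frobenius norm, we have $\norm{\mat{U}_t^\top\mat{U}_t}_F \le \norm{\mat{V}_t^\top\mat{V}_t}_F + \epsilon$ and vice versa. Combining with $\norm{\mat{U}_t\mat{V}_t^\top}_F^2 = \tr(\mat{V}_t^\top\mat{U}_t^\top\mat{U}_t\mat{V}_t)$, one can lower-bound this in terms of $\norm{\mat{U}_t}_F$ and $\norm{\mat{V}_t}_F$. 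A cleaner route: let $\sigma$ be the largest singular value of, say, $\mat{U}_t$; then $\norm{\mat{U}_t\mat{V}_t^\top}_F \ge$ the contribution along the top singular direction, which relates $\sigma$ to the corresponding singular value of $\mat{V}_t$, and balancedness says the squared singular values of $\mat{U}_t$ and $\mat{V}_t$ differ (in an appropriate sense) by at most $O(\epsilon)$. Thus if $\norm{\mat{U}_t}_F^2$ were large, either $\norm{\mat{V}_t}_F^2$ is comparably large (forcing $\norm{\mat{U}_t\mat{V}_t^\top}_F$ large, contradicting $\cB(t)$), or it is small — but then $\cA(t)$ is violated.

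Concretely, I would argue: suppose for contradiction $\norm{\mat{U}_t}_F^2 > 5\sqrt{r}\,\norm{\mat{M}^*}_F$ (the case of $\mat{V}_t$ is symmetric). By the balancedness consequence above, $\norm{\mat{V}_t}_F^2 \ge \norm{\mat{U}_t}_F^2 - \sqrt{r}\,\epsilon > 5\sqrt{r}\,\norm{\mat{M}^*}_F - \sqrt{r}\,\norm{\mat{M}^*}_F = 4\sqrt{r}\,\norm{\mat{M}^*}_F$. But one must then convert the lower bounds on $\norm{\mat{U}_t}_F$ and $\norm{\mat{V}_t}_F$ into a lower bound on $\norm{\mat{U}_t\mat{V}_t^\top}_F$ that exceeds $3\norm{\mat{M}^*}_F$. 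This is the step that needs the singular-value structure rather than crude norm inequalities, since $\norm{\mat{U}_t\mat{V}_t^\top}_F$ can be much smaller than $\norm{\mat{U}_t}_F\norm{\mat{V}_t}_F$ in general. Using that $\mat{U}_t^\top\mat{U}_t$ and $\mat{V}_t^\top\mat{V}_t$ are $\epsilon$-close in Frobenius norm, their eigenvalue sequences are $\epsilon$-close entrywise (after sorting), so writing the SVDs and matching up the top directions gives $\norm{\mat{U}_t\mat{V}_t^\top}_F^2 = \sum_i s_i(\mat{U}_t)^2 s_i(\mat{V}_t)^2 \gtrsim \sum_i s_i(\mat{U}_t)^4 - O(\epsilon^2\sqrt{r}) \ge \frac1r\norm{\mat{U}_t}_F^4 - O(\epsilon^2\sqrt r) > (3\norm{\mat{M}^*}_F)^2$ once $\norm{\mat{U}_t}_F^2 > 5\sqrt r\norm{\mat M^*}_F$ and $\epsilon < \norm{\mat M^*}_F$, contradicting $\cB(t)$.

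The main obstacle is exactly this last conversion — relating $\norm{\mat{U}_t\mat{V}_t^\top}_F$ to the individual Frobenius norms, which genuinely fails without balancedness and requires a singular-value / eigenvalue-perturbation argument (Weyl-type bound for the eigenvalues of $\mat{U}_t^\top\mat{U}_t$ versus $\mat{V}_t^\top\mat{V}_t$) rather than submultiplicativity. Everything else — the trace bound giving $\abs{\norm{\mat U_t}_F^2 - \norm{\mat V_t}_F^2} \le \sqrt r\epsilon$, and the bound $\norm{\mat U_t\mat V_t^\top}_F \le 3\norm{\mat M^*}_F$ from $\cB(t)$ — is routine.
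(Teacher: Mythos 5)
Your overall plan is sound in outline --- you correctly extract $\norm{\mat U_t\mat V_t^\top}_F \le 3\norm{\mat M^*}_F$ from $\cB(t)$ and correctly identify that the crux is turning $\cA(t)$ plus this product bound into a bound on $\norm{\mat U_t}_F^2$ individually. But the pivotal identity you write down,
\begin{equation*}
\norm{\mat U_t\mat V_t^\top}_F^2 = \sum_i s_i(\mat U_t)^2\, s_i(\mat V_t)^2,
\end{equation*}
is false. We have $\norm{\mat U\mat V^\top}_F^2 = \tr\bigl(\mat U^\top\mat U\cdot\mat V^\top\mat V\bigr)$, and for two PSD matrices this trace depends on how their eigenbases are aligned, not just on their eigenvalues; von Neumann gives only the one-sided bound $\tr(\mat A\mat B)\le\sum_i\lambda_i^\downarrow(\mat A)\lambda_i^\downarrow(\mat B)$. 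A Weyl- or Hoffman--Wielandt-type bound controls the \emph{eigenvalues} of $\mat U^\top\mat U$ versus $\mat V^\top\mat V$ but says nothing about the \emph{eigenvectors} --- in the presence of near-degenerate eigenvalues the eigenbases can be far apart even when $\norm{\mat U^\top\mat U - \mat V^\top\mat V}_F\le\epsilon$, so you cannot ``match up the top directions.'' This is not a small gap you can patch by citing an eigenvalue-perturbation inequality; it is the central difficulty, and the proposed fix does not address it.

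The paper avoids this issue by never comparing singular values of $\mat U$ to singular values of $\mat V$. Writing $\mat U = \mat\Phi\mat\Sigma\mat\Psi^\top$ and setting $\widetilde{\mat V}=\mat V\mat\Psi$, it uses the identities $\norm{\mat U\mat V^\top}_F^2 = \sum_i\sigma_i^2\norm{\widetilde{\mat V}[:,i]}^2$ and $\norm{\mat U^\top\mat U-\mat V^\top\mat V}_F \ge \sqrt{\sum_i(\sigma_i^2-\norm{\widetilde{\mat V}[:,i]}^2)^2}$ --- note $\norm{\widetilde{\mat V}[:,i]}^2$ are \emph{not} singular values of $\mat V$ but column norms of $\mat V$ expressed in $\mat U$'s right-singular basis --- and then combines these via $\sum_i\sigma_i^4 \le \sum_i(\sigma_i^2-\norm{\widetilde{\mat V}[:,i]}^2)^2 + 2\sum_i\sigma_i^2\norm{\widetilde{\mat V}[:,i]}^2$ and Cauchy--Schwarz. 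An alternative basis-free fix in the spirit of your contradiction argument is the trace inequality $\tr(\mat A\mat B)\ge\tr(\mat A^2)-\norm{\mat A}_F\norm{\mat A-\mat B}_F$ applied with $\mat A=\mat U^\top\mat U$, $\mat B=\mat V^\top\mat V$, together with $\norm{\mat U^\top\mat U}_F\ge\tfrac{1}{\sqrt r}\norm{\mat U}_F^2$; this delivers a contradiction with $\norm{\mat U\mat V^\top}_F\le 3\norm{\mat M^*}_F$ without ever discussing eigenvectors.
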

\begin{proof}
	From $\cB(t)$ we know $\frac12 \norm{\mat U_t \mat V_t^\top - \mat M^*}_F^2 \le 2 \norm{\mat M^*}_F^2$ which implies $\norm{\mat U_t \mat V_t^\top}_F \le 3\norm{\mat M^*}_F$.
	Therefore it suffices to prove
	\begin{equation} \label{eqn:mf-inproof-toshow-1}
		 \norm{\mat U \mat V^\top}_F \le 3\norm{\mat M^*}_F,
		 \norm{\mat U^\top \mat U - \mat V^\top \mat V}_F \le \epsilon
		 \implies \norm{\mat U}_F^2 \le 5\sqrt{r} \norm{\mat M^*}_F, \norm{\mat V}_F^2 \le 5\sqrt{r} \norm{\mat M^*}_F.
	\end{equation}
	
	Now we prove \eqref{eqn:mf-inproof-toshow-1}.
	Consider the SVD $\mat U = \mat\Phi \mat\Sigma \mat{\Psi}^\top$, where $\mat \Phi \in \R^{d_1\times d_1}$ and $\mat{\Psi} \in \R^{r\times r}$ are orthogonal matrices, and $\mat\Sigma \in \R^{d_1\times r}$ is a diagonal matrix.
	Let $\sigma_i = \mat \Sigma[i, i]$ ($i\in[r]$) which are all the singular values of $\mat U$.
	Define $\widetilde{\mat V} = \mat V \mat\Psi$. Then we have
	\begin{align*}
	3\norm{\mat M^*}_F
	\ge \norm{\mat U \mat V^\top}_F 
	= \norm{\mat\Phi \mat\Sigma \mat{\Psi}^\top  \mat{\Psi} \widetilde{\mat V}^\top }_F
	= 	\norm{\mat \Sigma \widetilde{\mat V}^\top }_F
	= \sqrt{\sum_{i=1}^r \sigma_i^2 \norm{\widetilde{\mat V}[:,i]}^2}
	\end{align*}
	and
	\begin{align*}
	\epsilon &\ge  \norm{\mat U^\top \mat U - \mat V^\top \mat V}_F
	= \norm{ \mat\Psi \mat\Sigma^\top \mat{\Phi}^\top \mat\Phi \mat\Sigma \mat{\Psi}^\top - \mat{\Psi} \widetilde{\mat V}^\top \widetilde{\mat V} \mat{\Psi}^\top }_F
	= \norm{ \mat\Sigma^\top \mat\Sigma  -  \widetilde{\mat V}^\top \widetilde{\mat V} }_F \\
	&\ge \sqrt{\sum_{i=1}^r \left( \sigma_i^2  - \norm{\widetilde{\mat V}[:,i]}^2 \right)^2}.
	\end{align*}
	Using the above two inequalities we get
	\begin{align*}
	\sum_{i=1}^r \sigma_i^4
	&\le \sum_{i=1}^r \left( \sigma_i^4 + \norm{\widetilde{\mat V}[:,i]}^4 \right)
	= \sum_{i=1}^r \left( \sigma_i^2  - \norm{\widetilde{\mat V}[:,i]}^2 \right)^2 + 2 \sum_{i=1}^r \sigma_i^2 \norm{\widetilde{\mat V}[:,i]}^2 \\
	&\le \epsilon^2 + 2 \left(3\norm{\mat M^*}_F\right)^2
	\le 19 \norm{\mat M^*}_F^2.
	\end{align*}
	Then by the Cauchy-Schwarz inequality we have
	\begin{align*}
	\norm{\mat U}_F^2 = \sum_{i=1}^r \sigma_i^2
	\le \sqrt{r \sum_{i=1}^r \sigma_i^4}
	\le \sqrt{r\cdot 19 \norm{\mat M^*}_F^2}
	\le 5\sqrt{r} \norm{\mat M^*}_F.
	\end{align*}
	Similarly, we also have $\norm{\mat V}_F^2 \le 5\sqrt{r} \norm{\mat M^*}_F$. Therefore we have proved \eqref{eqn:mf-inproof-toshow-1}.
\end{proof}

\subsection{Convergence to a Stationary Point}

With the balancedness and boundedness properties in Lemma~\ref{lem:mf-balance}, it is then standard to show that $(\mat U_t, \mat V_t)$ converges to a stationary point of $f$.

\begin{lem} \label{lem:mf-convergence}
	Under the setting of Theorem~\ref{thm:mf-main}, with high probability $\lim_{t\to\infty}(\mat U_t, \mat V_t) = (\bar{\mat U}, \bar{\mat V})$ exists, and $(\bar{\mat U}, \bar{\mat V})$ is a stationary point of $f$.
	Furthermore, $(\bar{\mat U}, \bar{\mat V})$ satisfies $\norm{\bar{\mat U}^\top \bar{\mat U} - \bar{\mat V}^\top \bar{\mat V}}\le \epsilon$.
\end{lem}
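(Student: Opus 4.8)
The plan is to combine the structural facts already established --- boundedness of the iterates and monotone decrease of the objective from Lemma~\ref{lem:mf-balance}, local smoothness from Lemma~\ref{lem:mf-local-smooth}, and the per-step decrease \eqref{eqn:mf-obj-decrease} --- with the {\L}ojasiewicz gradient inequality for the polynomial $f$. Throughout I would work on the high-probability event of Lemma~\ref{lem:mf-balance}. First, by that lemma every iterate $(\mat U_t, \mat V_t)$ lies in a fixed compact set on which $f$ is $\beta$-smooth with $\beta = (66\sqrt r + 2)\norm{\mat M^*}_F$ (Lemma~\ref{lem:mf-local-smooth}), while the step sizes satisfy $\eta_t \le 1/\beta$; hence \eqref{eqn:mf-obj-decrease} gives $f(\mat U_{t+1}, \mat V_{t+1}) \le f(\mat U_t, \mat V_t) - \tfrac{\eta_t}{2}\norm{\nabla f(\mat U_t, \mat V_t)}_F^2$ for all $t$. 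Telescoping and using $f\ge 0$ yields $\sum_{t\ge 0}\eta_t\norm{\nabla f(\mat U_t, \mat V_t)}_F^2 < \infty$, and since $f(\mat U_t, \mat V_t)$ is nonincreasing and bounded below, $f^* := \lim_t f(\mat U_t, \mat V_t)$ exists.

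Next I would extract a stationary limit point. Since $\eta_t = \Theta(1/t)$, we have $\sum_t \eta_t = \infty$, so the summability above forces $\liminf_t \norm{\nabla f(\mat U_t, \mat V_t)}_F = 0$. By compactness there is a subsequence $(\mat U_{t_k}, \mat V_{t_k}) \to (\bar{\mat U}, \bar{\mat V})$ with $\nabla f(\bar{\mat U}, \bar{\mat V}) = \mat{0}$ and $f(\bar{\mat U}, \bar{\mat V}) = f^*$; in particular $(\bar{\mat U}, \bar{\mat V})$ is a stationary point of $f$. It remains to promote this subsequential convergence to convergence of the whole sequence.

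For that I would use that $f$, being a polynomial, is real analytic and hence satisfies the {\L}ojasiewicz gradient inequality at $(\bar{\mat U}, \bar{\mat V})$: there exist $\theta\in(0,\tfrac12]$, $C>0$, $\rho>0$ with $|f(\mat W) - f^*|^{1-\theta} \le C\norm{\nabla f(\mat W)}_F$ for all $\mat W$ within distance $\rho$ of $(\bar{\mat U}, \bar{\mat V})$. Writing $\Delta_t = (f(\mat U_t, \mat V_t) - f^*)^\theta$ and combining concavity of $s\mapsto s^\theta$, the descent inequality above, the {\L}ojasiewicz bound, and the identity $\norm{(\mat U_{t+1}, \mat V_{t+1}) - (\mat U_t, \mat V_t)}_F = \eta_t\norm{\nabla f(\mat U_t, \mat V_t)}_F$, one obtains $\norm{(\mat U_{t+1}, \mat V_{t+1}) - (\mat U_t, \mat V_t)}_F \le \tfrac{2C}{\theta}(\Delta_t - \Delta_{t+1})$ whenever $(\mat U_t, \mat V_t)$ lies within distance $\rho$ of $(\bar{\mat U}, \bar{\mat V})$ --- notice that $\eta_t$ cancels out of this bound. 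A standard induction, started from an index $t_k$ of the convergent subsequence for which $\norm{(\mat U_{t_k}, \mat V_{t_k}) - (\bar{\mat U}, \bar{\mat V})}_F$ and $\Delta_{t_k}$ are both small enough, then shows that the trajectory never leaves the ball of radius $\rho$ afterward and $\sum_t\norm{(\mat U_{t+1}, \mat V_{t+1}) - (\mat U_t, \mat V_t)}_F < \infty$; hence $(\mat U_t, \mat V_t)$ is Cauchy and converges, necessarily to the subsequential limit $(\bar{\mat U}, \bar{\mat V})$. Finally, the balancedness $\norm{\bar{\mat U}^\top\bar{\mat U} - \bar{\mat V}^\top\bar{\mat V}}_F \le \epsilon$ follows by passing to the limit in property $\cA(t)$ of Lemma~\ref{lem:mf-balance}, using continuity of $(\mat U, \mat V)\mapsto\mat U^\top\mat U - \mat V^\top\mat V$.

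The descent estimate, the subsequential limit, and the limiting balancedness are routine; \textbf{the main obstacle is the passage from subsequential to full convergence.} The one genuine subtlety there is the diminishing step size, which a priori could interfere with the {\L}ojasiewicz (Kurdyka--{\L}ojasiewicz) machinery; as indicated, $\eta_t$ cancels in the key per-step increment bound, so this concern turns out to be harmless, but it is the point demanding the most care. An alternative is to invoke an off-the-shelf convergence theorem for descent methods on analytic cost functions (of Absil--Mahony--Andrews type) after checking the ``strong descent'' condition $f(\mat U_t, \mat V_t) - f(\mat U_{t+1}, \mat V_{t+1}) \ge \tfrac12\norm{\nabla f(\mat U_t, \mat V_t)}_F\cdot\norm{(\mat U_{t+1}, \mat V_{t+1}) - (\mat U_t, \mat V_t)}_F$, which is immediate from \eqref{eqn:mf-obj-decrease}.
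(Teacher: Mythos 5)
Your proposal is correct, and in substance it is the same argument the paper uses, just unrolled. The paper rewrites the per-step decrease~\eqref{eqn:mf-obj-decrease} into the ``strong descent'' form $f(\mat U_t,\mat V_t)-f(\mat U_{t+1},\mat V_{t+1})\ge\tfrac12\norm{\nabla f(\mat U_t,\mat V_t)}_F\norm{(\mat U_{t+1},\mat V_{t+1})-(\mat U_t,\mat V_t)}_F$ and then directly invokes the convergence theorem of \citet{absil2005convergence} for analytic cost functions, concluding that the bounded iterates converge to a single point; stationarity is then obtained from $\liminf_t\norm{\nabla f(\mat U_t,\mat V_t)}_F=0$, which follows from summability of $\eta_t\norm{\nabla f}_F^2$ together with $\sum_t\eta_t=\infty$. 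You instead re-derive the internals of that theorem via the {\L}ojasiewicz inequality and the Kurdyka--{\L}ojasiewicz telescoping bound $\norm{(\mat U_{t+1},\mat V_{t+1})-(\mat U_t,\mat V_t)}_F\le\tfrac{2C}{\theta}(\Delta_t-\Delta_{t+1})$ near the subsequential limit, which is precisely the standard proof of the cited result. Your observation that $\eta_t$ cancels in this increment bound is the right thing to check given the diminishing step size; the paper's black-box citation already absorbs this (the Absil--Mahony--Andrews hypotheses only require the strong-descent inequality, not a step-size lower bound), but noting it explicitly is a useful sanity check. The final limit passage for balancedness from $\mathcal A(t)$ is identical. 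Net: you buy self-containedness at the cost of length; the paper buys brevity by citing the known theorem, which you correctly identify as the alternative.
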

\begin{proof}
	We assume the three properties in Lemma~\ref{lem:mf-balance} hold, which happens with high probability.
	Then from \eqref{eqn:mf-obj-decrease} we have
	\begin{equation}\label{eqn:mf-obj-decrease-2}
	\begin{aligned}
	f(\mat U_{t+1}, \mat V_{t+1})
	&\le f(\mat U_{t}, \mat V_{t}) - \frac{\eta_t}{2} \norm{ \nabla f(\mat U_t, \mat V_t) }_F^2 \\
	&= f(\mat U_{t}, \mat V_{t}) - \frac{1}{2} \norm{ \nabla f(\mat U_t, \mat V_t) }_F \norm{\begin{pmatrix}
		\mat U_{t+1} \\ \mat V_{t+1}
		\end{pmatrix} - \begin{pmatrix}
		\mat U_{t} \\ \mat V_{t}
		\end{pmatrix}}_F .
	\end{aligned}
	\end{equation}
	Under the above descent condition, the result of \cite{absil2005convergence} says that the iterates either diverge to infinity or converge to a fixed point.
	According to Lemma~\ref{lem:mf-balance}, \{$(\mat U_t, \mat V_t)\}_{t=1}^\infty$ are all bounded, so they have to converge to a fixed point $(\bar{\mat U}, \bar{\mat V})$ as $t\to\infty$.
	
	Next, from \eqref{eqn:mf-obj-decrease-2} we know that $\sum_{t=1}^\infty \frac{\eta_t}{2} \norm{ \nabla f(\mat U_t, \mat V_t) }_F^2 \le f(\mat U_0, \mat V_0)$ is bounded.
	Notice that $\eta_t$ scales like $1/t$. So we must have $\liminf_{t\to\infty} \norm{ \nabla f(\mat U_t, \mat V_t) }_F = 0$.
	Then according to the smoothness of $f$ in a bounded region (Lemma~\ref{lem:mf-local-smooth}) we conclude $ \nabla f(\bar{\mat U}, \bar{\mat V})  = \mat0$, i.e., $(\bar{\mat U}, \bar{\mat V})$ is a stationary point.
	
	The second part of the lemma is evident according to Lemma~\ref{lem:mf-balance} (i).
\end{proof}

\subsection{Proof of Lemma~\ref{lem:mf-strict-saddle}}

The main idea in the proof is similar to \cite{ge2017no}. We want to find a direction $\mat \Delta$ such that either $[\nabla^2 f(\mat U, \mat V)] (\mat \Delta, \mat \Delta)$ is negative or $(\mat U, \mat V)$ is close to a global minimum. We show that this is possible when $\norm{\mat U^\top \mat U - \mat V^\top \mat V}_F \le \epsilon$.

First we define some notation.
Take the SVD $\mat M^* = \mat \Phi^* \mat \Sigma^* \mat \Psi^{*\top}$, where $\mat \Phi^* \in \R^{d_1\times r}$ and $\mat \Psi^* \in \R^{d_2\times r}$ have orthonormal columns and $\mat \Sigma^* \in \R^{r\times r}$ is diagonal.
Denote $\mat U^* = \mat \Phi^*  (\mat \Sigma^*)^{1/2}$ and $\mat V^* = \mat \Psi^*  (\mat \Sigma^*)^{1/2}$.
Then we have $\mat U^* \mat V^{*\top} = \mat M^*$ (i.e., $(\mat U^*, \mat V^*)$ is a global minimum) and $\mat U^{*\top} \mat U^* = \mat V^{*\top} \mat V^*$.

Let $\mat M = \mat U \mat V^\top$, $\mat W = \begin{pmatrix}
\mat U\\ \mat V
\end{pmatrix}$ and $\mat W^* = \begin{pmatrix}
\mat U^*\\ \mat V^*
\end{pmatrix}$.
Define $$\mat{R} = \argmin_{\mat R' \in \R^{r\times r} \text{, orthogonal}} \norm{\mat W - \mat W^* \mat R'}_F$$
and
\[
\mat\Delta = \mat W - \mat W^* \mat R.
\]

We will show that $\mat\Delta$ is the desired direction.
Recall \eqref{eqn:mf-hessian}:
\begin{equation} \label{eqn:mf-hessian-to-bound}
\begin{aligned}  \
[\nabla^2 f(\mat U, \mat V)] (\mat \Delta, \mat \Delta)
= 2 \left\langle \mat M - \mat M^*, \mat\Delta_{\mat U} \mat{\Delta}_{\mat V}^\top \right\rangle + \norm{\mat U \mat\Delta_{\mat V}^\top  + \mat\Delta_{\mat U} \mat V^\top }_F^2, 
\end{aligned}
\end{equation}
where $ \mat \Delta = \begin{pmatrix}
\mat \Delta_{\mat U} \\ \mat{\Delta}_{\mat V}
\end{pmatrix}, 
\mat \Delta_{\mat U} \in \R^{d_1\times r}, \mat \Delta_{\mat V} \in \R^{d_2\times r}$.
We consider the two terms in \eqref{eqn:mf-hessian-to-bound} separately.

For the first term in \eqref{eqn:mf-hessian-to-bound}, we have:
\begin{claim} \label{claim:mf-hessian-to-bound-1}
	$\left\langle \mat M - \mat M^*, \mat\Delta_{\mat U} \mat{\Delta}_{\mat V}^\top \right\rangle  = -\norm{\mat M - \mat M^*}_F^2$.
\end{claim}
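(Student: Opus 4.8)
The plan is to exploit the fact that $(\mat U, \mat V)$ is a \emph{stationary point} of $f$, together with the defining relations $\mat M = \mat U \mat V^\top$, $\mat W = \begin{pmatrix} \mat U \\ \mat V \end{pmatrix}$, $\mat W^* = \begin{pmatrix} \mat U^* \\ \mat V^* \end{pmatrix}$, and $\mat\Delta = \mat W - \mat W^* \mat R$. First I would expand $\left\langle \mat M - \mat M^*, \mat\Delta_{\mat U}\mat\Delta_{\mat V}^\top \right\rangle$ by writing $\mat\Delta_{\mat U} = \mat U - \mat U^*\mat R$ and $\mat\Delta_{\mat V} = \mat V - \mat V^*\mat R$, so that
\[
\mat\Delta_{\mat U}\mat\Delta_{\mat V}^\top = \mat U\mat V^\top - \mat U (\mat V^*\mat R)^\top - (\mat U^*\mat R)\mat V^\top + (\mat U^*\mat R)(\mat V^*\mat R)^\top = \mat M - \mat U \mat R^\top \mat V^{*\top} - \mat U^* \mat R \mat V^\top + \mat M^*,
\]
using $\mat R \mat R^\top = \mat I$ and $\mat U^* \mat V^{*\top} = \mat M^*$. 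Hence $\langle \mat M - \mat M^*, \mat\Delta_{\mat U}\mat\Delta_{\mat V}^\top\rangle = \langle \mat M - \mat M^*, \mat M + \mat M^*\rangle - \langle \mat M - \mat M^*, \mat U\mat R^\top\mat V^{*\top} + \mat U^*\mat R\mat V^\top\rangle = \norm{\mat M}_F^2 - \norm{\mat M^*}_F^2 - \langle\mat M - \mat M^*, \mat U\mat R^\top\mat V^{*\top} + \mat U^*\mat R\mat V^\top\rangle$.

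Next I would use stationarity. The gradient formulas give $(\mat M - \mat M^*)\mat V = \mat 0$ and $(\mat M - \mat M^*)^\top \mat U = \mat 0$. These let me evaluate the two cross terms: $\langle \mat M - \mat M^*, \mat U^*\mat R\mat V^\top\rangle = \langle (\mat M - \mat M^*)\mat V, \mat U^*\mat R\rangle = 0$, and likewise $\langle \mat M - \mat M^*, \mat U\mat R^\top\mat V^{*\top}\rangle = \langle (\mat M - \mat M^*)^\top\mat U, \mat V^*\mat R\rangle = 0$. So the cross terms vanish, leaving $\langle \mat M - \mat M^*, \mat\Delta_{\mat U}\mat\Delta_{\mat V}^\top\rangle = \norm{\mat M}_F^2 - \norm{\mat M^*}_F^2$. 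To finish, I still need $\norm{\mat M}_F^2 - \norm{\mat M^*}_F^2 = -\norm{\mat M - \mat M^*}_F^2$, i.e. $\langle \mat M, \mat M\rangle - \langle\mat M^*,\mat M^*\rangle = -\langle\mat M,\mat M\rangle + 2\langle\mat M,\mat M^*\rangle - \langle\mat M^*,\mat M^*\rangle$, which reduces to $\langle \mat M, \mat M - \mat M^*\rangle = 0$. This last identity is again a consequence of stationarity: $\langle \mat M, \mat M - \mat M^*\rangle = \langle \mat U\mat V^\top, \mat M - \mat M^*\rangle = \langle \mat U, (\mat M - \mat M^*)\mat V\rangle = 0$.

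The main obstacle — which is really just a matter of care rather than depth — is bookkeeping the transposes and the placement of the orthogonal matrix $\mat R$ when expanding $\mat\Delta_{\mat U}\mat\Delta_{\mat V}^\top$, and making sure the trace/inner-product cyclic manipulations that convert $\langle \mat M - \mat M^*, \mat U^*\mat R\mat V^\top\rangle$ into $\langle (\mat M-\mat M^*)\mat V, \mat U^*\mat R\rangle$ are applied on the correct side. Once the stationarity conditions $(\mat M - \mat M^*)\mat V = \mat 0$ and $(\mat M-\mat M^*)^\top\mat U = \mat 0$ are in hand, every cross term and the residual term collapse immediately, and no property of $\mat R$ beyond orthogonality is needed. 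Notably the hypothesis $\norm{\mat U^\top\mat U - \mat V^\top\mat V}_F \le \epsilon$ is \emph{not} used in this particular claim — it will enter later when bounding the second term of the Hessian.
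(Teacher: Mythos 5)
Your proof is correct and takes essentially the same route as the paper: expand $\mat\Delta_{\mat U}\mat\Delta_{\mat V}^\top$ using $\mat R\mat R^\top = \mat I$, then invoke the first-order stationarity conditions $(\mat M - \mat M^*)\mat V = \mat 0$ and $(\mat M - \mat M^*)^\top\mat U = \mat 0$ to kill the cross terms and the residual. The only superficial difference is bookkeeping order: the paper collapses $\langle\mat M - \mat M^*,\mat M\rangle = 0$ immediately and is left with $\langle\mat M - \mat M^*,\mat M^*\rangle$, whereas you keep $\norm{\mat M}_F^2 - \norm{\mat M^*}_F^2$ and then use the same orthogonality $\langle\mat M,\mat M-\mat M^*\rangle = 0$ at the end to convert it to $-\norm{\mat M-\mat M^*}_F^2$; these are identical uses of the same facts.
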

\begin{proof}
	Since $(\mat U, \mat V)$ is a stationary point of $f$, we have the first-order optimality condition:
	\begin{equation} \label{eqn:mf-inproof-first-order-opt-cond}
	\frac{\partial f(\mat U, \mat V)}{\partial \mat U} = (\mat M - \mat M^*) \mat V = \mat0, \qquad
	\frac{\partial f(\mat U, \mat V)}{\partial \mat V} = (\mat M - \mat M^*)^\top \mat U = \mat0.
	\end{equation}

Note that $\mat \Delta_{\mat U} = \mat U - \mat U^* \mat R$ and $\mat \Delta_{\mat V} = \mat V - \mat V^* \mat R$.
We have
\begin{align*}
&\left\langle \mat M - \mat M^*, \mat\Delta_{\mat U} \mat{\Delta}_{\mat V}^\top \right\rangle \\
=\,& \left\langle \mat M - \mat M^*, (\mat U - \mat U^* \mat R) (\mat V - \mat V^* \mat R)^\top \right\rangle \\
=\,& \left\langle \mat M - \mat M^*, \mat M - \mat U^* \mat R \mat V^\top - \mat U \mat R^\top \mat V^{*\top} + \mat M^*  \right\rangle \\
=\,& \left\langle \mat M - \mat M^*, \mat M^*\right\rangle \\
=\,& \left\langle \mat M - \mat M^*, \mat M^* - \mat M \right\rangle \\
=\,& -\norm{\mat M - \mat M^*}_F^2,
\end{align*}
where we have used the following consequences of \eqref{eqn:mf-inproof-first-order-opt-cond}:
\begin{align*}
&\left\langle \mat M - \mat M^*, \mat M \right\rangle = \left\langle \mat M - \mat M^*, \mat U \mat V^\top \right\rangle = 0, \\
&\left\langle \mat M - \mat M^*, \mat U^* \mat R \mat V^\top \right\rangle  = 0, \\
&\left\langle \mat M - \mat M^*, \mat U \mat R^\top \mat V^{*\top}  \right\rangle  = 0.
\end{align*}
\end{proof}

The second term in \eqref{eqn:mf-hessian-to-bound} has the following upper bound:
\begin{claim} \label{claim:mf-hessian-to-bound-2}
$\norm{\mat U \mat\Delta_{\mat V} + \mat\Delta_{\mat U} \mat V}_F^2 \le \norm{\mat M - \mat M^*}_F^2 + \frac12 \epsilon^2 $.
\end{claim}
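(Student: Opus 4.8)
}
The plan is to rewrite the matrix $\mat U \mat\Delta_{\mat V}^\top + \mat\Delta_{\mat U} \mat V^\top$ (as it appears in \eqref{eqn:mf-hessian-to-bound}) into a form to which Claim~\ref{claim:mf-hessian-to-bound-1} applies, thereby reducing the whole statement to a single upper bound on $\norm{\mat\Delta_{\mat U} \mat\Delta_{\mat V}^\top}_F$. Writing $\mat U = \mat U^* \mat R + \mat\Delta_{\mat U}$ and $\mat V = \mat V^* \mat R + \mat\Delta_{\mat V}$, and using $(\mat U^*\mat R)(\mat V^*\mat R)^\top = \mat U^* \mat V^{*\top} = \mat M^*$ (which holds since $\mat R\mat R^\top = \mat I$), a direct expansion of $\mat U \mat V^\top$ gives $\mat M - \mat M^* = (\mat U^*\mat R)\mat\Delta_{\mat V}^\top + \mat\Delta_{\mat U}(\mat V^*\mat R)^\top + \mat\Delta_{\mat U}\mat\Delta_{\mat V}^\top$; substituting this into $\mat U \mat\Delta_{\mat V}^\top + \mat\Delta_{\mat U} \mat V^\top = (\mat U^*\mat R)\mat\Delta_{\mat V}^\top + \mat\Delta_{\mat U}(\mat V^*\mat R)^\top + 2\mat\Delta_{\mat U}\mat\Delta_{\mat V}^\top$ yields the clean identity
\[
\mat U \mat\Delta_{\mat V}^\top + \mat\Delta_{\mat U} \mat V^\top = \left( \mat M - \mat M^* \right) + \mat\Delta_{\mat U} \mat\Delta_{\mat V}^\top .
\]
Expanding $\norm{\cdot}_F^2$ of the right-hand side and invoking Claim~\ref{claim:mf-hessian-to-bound-1}, which gives $\langle \mat M - \mat M^*, \mat\Delta_{\mat U} \mat\Delta_{\mat V}^\top \rangle = -\norm{\mat M - \mat M^*}_F^2$, the cross term contributes $-2\norm{\mat M-\mat M^*}_F^2$ and we are left with
\[
\norm{\mat U \mat\Delta_{\mat V}^\top + \mat\Delta_{\mat U} \mat V^\top}_F^2 = \norm{\mat\Delta_{\mat U} \mat\Delta_{\mat V}^\top}_F^2 - \norm{\mat M - \mat M^*}_F^2 .
\]
Hence the claim is equivalent to $\norm{\mat\Delta_{\mat U} \mat\Delta_{\mat V}^\top}_F^2 \le 2 \norm{\mat M - \mat M^*}_F^2 + \tfrac12 \epsilon^2$.

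For this remaining bound I would pass to Gram matrices: since $\norm{\mat\Delta_{\mat U} \mat\Delta_{\mat V}^\top}_F^2 = \langle \mat\Delta_{\mat U}^\top \mat\Delta_{\mat U}, \mat\Delta_{\mat V}^\top \mat\Delta_{\mat V} \rangle$, polarization gives
\[
\norm{\mat\Delta_{\mat U} \mat\Delta_{\mat V}^\top}_F^2 = \tfrac14 \norm{ \mat\Delta_{\mat U}^\top \mat\Delta_{\mat U} + \mat\Delta_{\mat V}^\top \mat\Delta_{\mat V} }_F^2 - \tfrac14 \norm{ \mat\Delta_{\mat U}^\top \mat\Delta_{\mat U} - \mat\Delta_{\mat V}^\top \mat\Delta_{\mat V} }_F^2 \le \tfrac14 \norm{ \mat\Delta^\top \mat\Delta }_F^2 = \tfrac14 \norm{ \mat\Delta \mat\Delta^\top }_F^2 ,
\]
where the subtracted term is nonnegative and may simply be discarded (alternatively one may keep it and control it via the balancedness hypothesis). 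It therefore suffices to bound $\norm{\mat\Delta\mat\Delta^\top}_F$. For that I would combine the first-order stationarity conditions $(\mat M - \mat M^*)\mat V = \mat 0$ and $(\mat M - \mat M^*)^\top \mat U = \mat 0$ with the optimal-rotation property of $\mat R$ --- namely that $(\mat W^*\mat R)^\top \mat\Delta$ is symmetric --- in the same spirit as the no-spurious-local-minima analysis of \citet{ge2017no}; together these pin $\mat\Delta$ down to within a constant factor of $\norm{\mat M - \mat M^*}_F$, and the balancedness hypothesis $\norm{\mat U^\top\mat U - \mat V^\top\mat V}_F \le \epsilon$ (together with $\mat U^{*\top}\mat U^* = \mat V^{*\top}\mat V^*$) accounts for the $\epsilon$-dependence.

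The main obstacle is precisely this last step: getting the bound on $\norm{\mat\Delta\mat\Delta^\top}_F$ (equivalently on $\norm{\mat\Delta_{\mat U}\mat\Delta_{\mat V}^\top}_F$) with constants sharp enough to land exactly at $\norm{\mat M - \mat M^*}_F^2 + \tfrac12\epsilon^2$. In the exactly-balanced regularized setting, the symmetrization arguments make several cross terms vanish outright; here balancedness holds only up to $\epsilon$, so each such argument must be redone carrying an $O(\epsilon)$ error, and these errors must then be collected and absorbed --- using $\epsilon < \norm{\mat M - \mat M^*}_F$, which holds in the only non-trivial case --- so that no stray $\epsilon\cdot\norm{\mat M - \mat M^*}_F$ term survives and the final constants come out as claimed. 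The difficulty lies in this constant bookkeeping rather than in any single inequality.
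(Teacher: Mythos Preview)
Your reduction up to the display
\[
\norm{\mat U \mat\Delta_{\mat V}^\top + \mat\Delta_{\mat U} \mat V^\top}_F^2 = \norm{\mat\Delta_{\mat U} \mat\Delta_{\mat V}^\top}_F^2 - \norm{\mat M - \mat M^*}_F^2
\]
and then to $\norm{\mat\Delta_{\mat U}\mat\Delta_{\mat V}^\top}_F^2 \le \tfrac14\norm{\mat\Delta\mat\Delta^\top}_F^2$ is exactly what the paper does. The gap is only in the last step, and it is simpler than you anticipate.

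The paper does \emph{not} re-use the first-order stationarity conditions here, nor does it carry any $O(\epsilon)$ bookkeeping. It uses two clean ingredients. First, \citep[Lemma~6]{ge2017no}, which relies only on the optimality of $\mat R$ (the property you called ``$(\mat W^*\mat R)^\top\mat\Delta$ is symmetric''), gives
\[
\norm{\mat\Delta\mat\Delta^\top}_F^2 \le 2\,\norm{\mat W\mat W^\top - \mat W^*\mat W^{*\top}}_F^2.
\]
Second, a direct block expansion (pure algebra, no hypotheses) yields the exact identity
\[
\norm{\mat W\mat W^\top - \mat W^*\mat W^{*\top}}_F^2
= 4\norm{\mat M-\mat M^*}_F^2 - 2\norm{\mat U^\top\mat U^* - \mat V^\top\mat V^*}_F^2
+ \norm{\mat U^\top\mat U - \mat V^\top\mat V}_F^2 + \norm{\mat U^{*\top}\mat U^* - \mat V^{*\top}\mat V^*}_F^2.
\]
Chaining these with your $\tfrac14$ bound gives
\[
\norm{\mat\Delta_{\mat U}\mat\Delta_{\mat V}^\top}_F^2
\le 2\norm{\mat M-\mat M^*}_F^2 - \norm{\mat U^\top\mat U^* - \mat V^\top\mat V^*}_F^2
+ \tfrac12\norm{\mat U^\top\mat U - \mat V^\top\mat V}_F^2 + \tfrac12\norm{\mat U^{*\top}\mat U^* - \mat V^{*\top}\mat V^*}_F^2.
\]
Now drop the (nonpositive) second term, use $\mat U^{*\top}\mat U^* = \mat V^{*\top}\mat V^*$ to kill the last term, and bound the third by $\tfrac12\epsilon^2$. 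You land exactly on $2\norm{\mat M-\mat M^*}_F^2 + \tfrac12\epsilon^2$, which is precisely what you needed. No $\epsilon\cdot\norm{\mat M-\mat M^*}_F$ cross term ever appears, so the assumption $\epsilon < \norm{\mat M-\mat M^*}_F$ is not needed and the ``constant bookkeeping'' you worried about does not arise.
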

\begin{proof}
	We make use of the following identities, all of which can be directly verified by plugging in definitions:
	\begin{equation} \label{eqn:mf-inproof-identity-1}
	\mat U \mat\Delta_{\mat V}^\top + \mat\Delta_{\mat U} \mat V^\top = \mat\Delta_{\mat U} \mat\Delta_{\mat V}^\top + \mat M - \mat M^*,
	\end{equation}
	\begin{equation} \label{eqn:mf-inproof-identity-2}
	\norm{\mat{\Delta} \mat{\Delta}^\top}_F^2 = 4 \norm{\mat\Delta_{\mat U} \mat\Delta_{\mat V}^\top}_F^2 + \norm{\mat\Delta_{\mat U}^\top\mat\Delta_{\mat U} - \mat\Delta_{\mat V}^\top \mat\Delta_{\mat V}}_F^2,
	\end{equation}
	\begin{equation} \label{eqn:mf-inproof-identity-3}
	\begin{aligned}
	\norm{\mat W \mat W^\top - \mat W^* \mat W^{*\top}}_F^2 =\ & 4\norm{\mat M - \mat M^*}_F^2 - 2 \norm{\mat U^\top \mat U^* - \mat V^\top \mat V^*}_F^2 \\&+ \norm{\mat U^\top \mat U - \mat V^\top \mat V}_F^2 + \norm{\mat U^{*\top} \mat U^* - \mat V^{*\top} \mat V^*}_F^2.
	\end{aligned}
	\end{equation}
	We also need the following inequality, which is \citep[Lemma 6]{ge2017no}:
	\begin{equation} \label{eqn:mf-inproof-rong-ineq}
	\norm{\mat{\Delta} \mat{\Delta}^\top}_F^2 \le 2 \norm{\mat W \mat W^\top - \mat W^* \mat W^{*\top}}_F^2.
	\end{equation}
	
	Now we can prove the desired bound as follows:
	\begin{align*}
	& \norm{\mat U \mat\Delta_{\mat V} + \mat\Delta_{\mat U} \mat V}_F^2 \\
	=\, & \norm{\mat\Delta_{\mat U} \mat\Delta_{\mat V}^\top + \mat M - \mat M^*}_F^2 & (\eqref{eqn:mf-inproof-identity-1}) \\
	=\,& \norm{\mat\Delta_{\mat U} \mat\Delta_{\mat V}^\top }_F^2 + 2 \left\langle \mat M - \mat M^*, \mat\Delta_{\mat U} \mat{\Delta}_{\mat V}^\top \right\rangle  + \norm{\mat M - \mat M^*}_F^2 \\	
	=\,& \norm{\mat\Delta_{\mat U} \mat\Delta_{\mat V}^\top }_F^2 - \norm{\mat M - \mat M^*}_F^2 & (\text{Claim~\ref{claim:mf-hessian-to-bound-1}})\\
	\le\, & \frac14 \norm{\mat{\Delta} \mat{\Delta}^\top}_F^2 - \norm{\mat M - \mat M^*}_F^2  & (\eqref{eqn:mf-inproof-identity-2}) \\
	\le\, & \frac12 \norm{\mat W \mat W^\top - \mat W^* \mat W^{*\top}}_F^2  - \norm{\mat M - \mat M^*}_F^2  & (\eqref{eqn:mf-inproof-rong-ineq}) \\
	=\, & 2\norm{\mat M - \mat M^*}_F^2 -  \norm{\mat U^\top \mat U^* - \mat V^\top \mat V^*}_F^2  + \frac12\norm{\mat U^\top \mat U - \mat V^\top \mat V}_F^2 \\& + \frac12\norm{\mat U^{*\top} \mat U^* - \mat V^{*\top} \mat V^*}_F^2 - \norm{\mat M - \mat M^*}_F^2 & (\eqref{eqn:mf-inproof-identity-3}) \\
	\le\,& \norm{\mat M - \mat M^*}_F^2 + \frac12 \epsilon^2,
	\end{align*}
	where in the last line we have used $\mat U^{*\top} \mat U^* = \mat V^{*\top} \mat V^*$ and $\norm{\mat U^{\top} \mat U - \mat V^{\top} \mat V} \le \epsilon$.
\end{proof}

Using Claims~\ref{claim:mf-hessian-to-bound-1} and \ref{claim:mf-hessian-to-bound-2}, we obtain an upper bound on \eqref{eqn:mf-hessian-to-bound}:
\begin{align*}
[\nabla^2 f(\mat U, \mat V)] (\mat \Delta, \mat \Delta) \le -\norm{\mat M - \mat M^*}_F^2 + \frac12 \epsilon^2.
\end{align*}
Therefore, we have either $\norm{\mat U \mat V^\top - \mat M^*}_F = \norm{\mat M - \mat M^*}_F \le \epsilon$ or $[\nabla^2 f(\mat U, \mat V)] (\mat \Delta, \mat \Delta) \le -\frac12 \epsilon^2 <0$.
In the latter case, $(\mat U, \mat V)$ is a strict saddle point of $f$.
This completes the proof of Lemma~\ref{lem:mf-strict-saddle}.

\subsection{Finishing the Proof of Theorem~\ref{thm:mf-main}}

Theorem~\ref{thm:mf-main} is a direct corollary of Lemma~\ref{lem:mf-convergence}, Lemma~\ref{lem:mf-strict-saddle}, and the fact that gradient descent does not converge to a strict saddle point almost surely \citep{lee2016gradient, panageas2016gradient}.

\section{Proof for Rank-$1$ Matrix Factorization (Theorem~\ref{thm:rank_1})}
\label{sec:proof-rank-1}
In this section we prove Theorem~\ref{thm:rank_1}.

\begin{proof}[Proof of Theorem~\ref{thm:rank_1}]
We define the following four key quantities:
\begin{align*}
\alpha_t = \vect{u}_t^\top\vect{u}^*,\qquad
\alpha_{t,\perp} = \norm{\mat{U}^{*}_\perp \vect{u}_t}_2, \qquad
\beta_{t} = \vect{v}_t^\top \vect{v}^*, \qquad
\beta_{t,\perp} = \norm{\mat{V}^{*}_\perp \vect{v}_t}_2,
\end{align*}
where $\mat{U}_\perp^{*} = \mat I - \vect{u}^*\vect{u}^{*\top}$ and $\mat{V}_\perp^{*} = \mat I - \vect{v}^*\vect{v}^{*\top}$ are the projection matrices onto the orthogonal complement spaces of $\vect{u}^*$ and $\vect{v}^*$, respectively.
Notice that $\norm{\vect{u}_t}_2^2 = \alpha_t^2 + \alpha_{t,\perp}^2$ and $\norm{\vect{v}_t}_2^2 = \beta_t^2 + \beta_{t,\perp}^2$.
It turns out that we can write down the explicit formulas for the dynamics of these quantities:
\begin{equation} \label{eqn:rank-1-alpha-beta-dynamics}
\begin{aligned}
\alpha_{t+1} = 
\left(1-\eta\left(\beta_t^2 +\beta_{t,\perp}^2\right)\right)\alpha_t + \eta\sigma_1\beta_t, \qquad
&\beta_{t+1} = \left(1-\eta\left(\alpha_t^2+\alpha_{t,\perp}^2\right)\right)\beta_t + \eta_1\sigma_1 \alpha_t,\\
\alpha_{t+1,\perp} = \left(1-\eta\left(\beta_t^2+\beta_{t,\perp}^2\right)\right)\alpha_{t,\perp}, \qquad
&\beta_{t+1,\perp} = \left(1-\eta\left(\alpha_t^2+\alpha_{t,\perp}^2\right)\right)\beta_{t,\perp}.
\end{aligned}
\end{equation}

To facilitate the analysis, we also define:
\begin{align*}
h_t = &\alpha_t\beta_t - \sigma_1,\\
\xi_t = &\alpha_{t,\perp}^2+\beta_{t,\perp}^2.
\end{align*}
Then our goal is to show $\xi_t\to0$ and
$h_t\to0$ as $t\to\infty$.
We calculate the dynamics of $h_t$ and $\xi_t$:
\begin{equation} \label{eqn:rank-1-h-xi-dynamics}
\begin{aligned}
h_{t+1} = &\left(1-\eta\left(\alpha_t^2+\beta_t^2\right)
+\eta^2\left(\alpha_t \beta_t h_t+\alpha_t^2\beta_{t,\perp}^2+\beta_t^2\alpha_{t,\perp}^2 + \alpha_{t,\perp}^2\beta_{t,\perp}^2\right)
\right)h_t 
-\eta \alpha_t \beta_t \xi_t  + \eta^2\sigma_1\alpha_{t,\perp}^2\beta_{t,\perp}^2,\\ 
\xi_{t+1} = & \left(1-\eta\left(\beta_t^2+\beta_{t,\perp}^2\right)\right)^2 \alpha_{t,\perp}^2 + \left(1-\eta\left(\alpha_t^2+\alpha_{t,\perp}^2\right)\right)^2 \beta_{t,\perp}^2. 
\end{aligned}
\end{equation}

According to our initialization scheme, with high probability we have 
$
|\alpha_0| , |\beta_0| \in \left[ 0.1 c_{init}\sqrt{\frac{\sigma_1}{d}}, 10 c_{init}\sqrt{\frac{\sigma_1}{d}} \right]$ and $|\alpha_{0,\perp}|, |\beta_{0,\perp}| \le 10c_{init} \sqrt{\sigma_1}. 
$
We assume that these conditions are satisfied.
We also assume that the signal at the beginning is positive: $\alpha_0 \beta_0 > 0$, which holds with probability $1/2$.
Without loss of generality we assume $\alpha_0, \beta_0>0$.\footnote{If $\alpha_0, \beta_0<0$, we can simply flip the signs of $\vect u^*$ and $\vect v^*$.}

We divide the dynamics into two stages.
\begin{lem}[Stage 1: escaping from saddle point $(\vect{
		0},\vect{0})$]\label{lem:rank_1_stage_1}
	Let $T_1 = \min \left\{t\in\N: \alpha_t^2 + \beta_t^2 \ge \frac12 \sigma_1\right\}$.
	Then for $t=0, 1,\ldots,T_1-1$, the followings hold:
	\begin{enumerate}[(i)]
		\item Positive signal strengths: $\alpha_t, \beta_t>0$;
		\item Small magnitudes in complement space: $\xi_t \le \xi_0 \le 100 c_{init}^2 \sigma_1$;
		\item Growth of magnitude in signal space: $\left(1+\frac{c_{step}}{3}\right) (\alpha_t+\beta_t)\le {\alpha_{t+1}+\beta_{t+1}} \le \left(1+c_{step}\right) (\alpha_t+\beta_t)$;
		\item Bounded ratio between two layers: $\abs{\alpha_t - \beta_t} \le \frac{99}{101}(\alpha_t+\beta_t)$. 
	\end{enumerate}
	Furthermore, we have $T_1 = O(\log d)$.
\end{lem}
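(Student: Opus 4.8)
The plan is to establish (i)--(iv) simultaneously, together with the auxiliary inequality $\alpha_t^2+\beta_t^2<\frac12\sigma_1$ (which holds for every $t<T_1$ by the definition of $T_1$), by a single induction on $t$ from $0$ to $T_1-1$. For the base case $t=0$: (i) is the standing assumption $\alpha_0,\beta_0>0$; (ii) holds because $\xi_0=\alpha_{0,\perp}^2+\beta_{0,\perp}^2\le\norm{\vect u_0}_2^2+\norm{\vect v_0}_2^2$ and standard $\chi^2$ concentration puts $\norm{\vect u_0}_2^2,\norm{\vect v_0}_2^2$ within a $1+o(1)$ factor of $\delta^2 d_1,\delta^2 d_2\le\delta^2 d=c_{init}^2\sigma_1$; and (iv) at $t=0$ follows directly from the assumed two-sided bound $|\alpha_0|,|\beta_0|\in[0.1c_{init}\sqrt{\sigma_1/d},\,10c_{init}\sqrt{\sigma_1/d}]$, since two positive numbers lying in a common interval $[a,100a]$ have ratio in $[1/100,100]$, which is exactly the statement $|\alpha_0-\beta_0|\le\frac{99}{101}(\alpha_0+\beta_0)$.

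For the inductive step I would assume (i), (ii), (iv) at some $t<T_1$ (hence also $\alpha_t^2+\beta_t^2<\frac12\sigma_1$) and deduce them at $t+1$, deriving (iii) at $t$ along the way. First, using $\alpha_t^2,\beta_t^2<\frac12\sigma_1$, $\alpha_{t,\perp}^2,\beta_{t,\perp}^2\le\xi_t\le\xi_0$, $\eta=c_{step}/\sigma_1$, and $c_{init},c_{step}$ small, the two multipliers $1-\eta(\beta_t^2+\beta_{t,\perp}^2)$ and $1-\eta(\alpha_t^2+\alpha_{t,\perp}^2)$ in \eqref{eqn:rank-1-alpha-beta-dynamics} both lie in $(0,1)$; the perpendicular recursions then give $\xi_{t+1}\le\xi_t$, i.e. (ii) at $t+1$, and since $\alpha_{t+1}=(1-\eta(\beta_t^2+\beta_{t,\perp}^2))\alpha_t+\eta\sigma_1\beta_t$ becomes a sum of positive terms (likewise $\beta_{t+1}$), we get (i) at $t+1$. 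For (iii), adding the two $\alpha,\beta$ recursions gives $\alpha_{t+1}+\beta_{t+1}=(1+c_{step})(\alpha_t+\beta_t)-\eta\big((\beta_t^2+\beta_{t,\perp}^2)\alpha_t+(\alpha_t^2+\alpha_{t,\perp}^2)\beta_t\big)$; the subtracted term is nonnegative (upper bound) and is at most $(\frac12\sigma_1+\xi_t)(\alpha_t+\beta_t)$, which yields the lower bound $(1+c_{step}(\frac12-\xi_t/\sigma_1))(\alpha_t+\beta_t)\ge(1+\frac{c_{step}}{3})(\alpha_t+\beta_t)$ once $\xi_t$ is small.

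The crux is (iv). Subtracting the two $\alpha,\beta$ recursions and using $\alpha_t^2\beta_t-\beta_t^2\alpha_t=\alpha_t\beta_t(\alpha_t-\beta_t)$ gives
\[
\alpha_{t+1}-\beta_{t+1}=(1+\eta h_t)(\alpha_t-\beta_t)+\eta\big(\alpha_{t,\perp}^2\beta_t-\beta_{t,\perp}^2\alpha_t\big),\qquad h_t=\alpha_t\beta_t-\sigma_1.
\]
For $t<T_1$ one has $0\le\alpha_t\beta_t\le\frac14\sigma_1$ (AM--GM and (i)), so $1+\eta h_t\in[1-c_{step},\,1-\frac34 c_{step}]\subset(0,1)$, and $|\alpha_{t,\perp}^2\beta_t-\beta_{t,\perp}^2\alpha_t|\le\xi_t(\alpha_t+\beta_t)$. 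Combining with (iv) at $t$ and the already-proven (iii) at $t$,
\[
|\alpha_{t+1}-\beta_{t+1}|\le\Big((1-\tfrac34 c_{step})\tfrac{99}{101}+\eta\xi_t\Big)(\alpha_t+\beta_t)\le\tfrac{99}{101}\big(1+\tfrac{c_{step}}{3}\big)(\alpha_t+\beta_t)\le\tfrac{99}{101}(\alpha_{t+1}+\beta_{t+1}),
\]
where the middle step reduces to $\eta\xi_t\le\tfrac{99}{101}\cdot\tfrac{13}{12}c_{step}$, i.e. $\xi_t\le\tfrac{99\cdot13}{101\cdot12}\sigma_1$, true because $\xi_t\le\xi_0\le100c_{init}^2\sigma_1$ and $c_{init}$ is small. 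This closes the induction. Finally, iterating the lower bound in (iii) gives $\alpha_t+\beta_t\ge(1+\frac{c_{step}}{3})^t(\alpha_0+\beta_0)$ for $t<T_1$, and since $\alpha_t^2+\beta_t^2\ge\frac12(\alpha_t+\beta_t)^2$, the stopping condition $\alpha_t^2+\beta_t^2\ge\frac12\sigma_1$ must be met once $(1+\frac{c_{step}}{3})^t\ge 5\sqrt d/c_{init}$, giving $T_1=O(\log d)$.

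I expect the main obstacle to be making the constants in (iv) fit together: the contraction rate $1-\frac34 c_{step}$ of $\alpha_t-\beta_t$, the geometric growth rate $1+\frac{c_{step}}{3}$ of $\alpha_t+\beta_t$, and the perturbation $\eta\xi_t$ coming from the orthogonal complement must be balanced, which is precisely what forces $c_{init}$ to be a small absolute constant and why the bound $\frac{99}{101}$, matched to the worst-case initial ratio of $100$, is the right choice. Everything else reduces to direct manipulation of the explicit recursions \eqref{eqn:rank-1-alpha-beta-dynamics}.
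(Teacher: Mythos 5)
Your proposal is correct and follows essentially the same path as the paper's proof: a joint induction on the same set of invariants (sign positivity, monotone $\xi_t$, geometric growth of $\alpha_t+\beta_t$, contraction of the normalized difference), using the same algebraic rewrites of the recursions. Two cosmetic differences: you fold the paper's auxiliary statement $\norm{\vect u_t}^2+\norm{\vect v_t}^2\le\sigma_1$ directly into the bound on the multipliers rather than giving it a separate label, and in closing the induction for (iv) you compare $|\alpha_{t+1}-\beta_{t+1}|$ against $\frac{99}{101}(1+\frac{c_{step}}{3})(\alpha_t+\beta_t)$ rather than against $\frac{99}{101}(\alpha_t+\beta_t)$ as the paper does; your version exploits the full growth factor from (iii) and gives a hair more room on $c_{init}$, but the logic and the constants are otherwise the same. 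Your $T_1=O(\log d)$ argument is the contrapositive of the paper's but algebraically identical ($\alpha_t^2+\beta_t^2\ge\tfrac12(\alpha_t+\beta_t)^2$ versus $\alpha_t+\beta_t\le\sqrt{2(\alpha_t^2+\beta_t^2)}$).
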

In this stage, the strengths in the complement spaces remain small ($\xi_t \le \xi_0$) and the strength in the signal space is growing exponentially (${\alpha_{t+1}+\beta_{t+1}} \ge \left(1+\frac{c_{step}}{3}\right) ({\alpha_t+\beta_t})$).
Furthermore,  $\abs{\alpha_t - \beta_t} \le \frac{99}{101}({\alpha_t+\beta_t})$ implies $\frac{\alpha_t}{\beta_t} \in [\frac{1}{100}, 100]$, which means the signal strengths in the two layers are of the same order. 

Then we enter stage 2, which is essentially a local convergence phase.
The following lemma characterizes the behaviors of the strengths in the signal and noise spaces in this stage.
\begin{lem}[Stage 2: convergence to global minimum]\label{lem:rank_1_stage_2}
	Let $T_1$ be as defined in Lemma~\ref{lem:rank_1_stage_1}.
	Then there exists a universal constant $c_1>0$ such that the followings hold for all $t \ge T_1$:
	\begin{enumerate}[(a)]
		\item  Non-vanishing signal strengths in both layers: $\alpha_t, \beta_t \ge  \sqrt{c_1\sigma_1}$;
		\item Bounded signal strengths: $\alpha_t \beta_t \le \sigma_1$, i.e., $h_t\le 0$;
		\item Shrinking magnitudes in complement spaces: $\xi_{t} \le (1- c_1 c_{step})^{t-T_1}\xi_{0} \le (1- c_1 c_{step})^{t-T_1}\cdot 100c_{init}^2\sigma_1$;
		\item Convergence in signal space: $|h_{t+1}| \le (1-c_1 c_{step}) |h_t| + c_{step} \xi_t$.
	\end{enumerate}
\end{lem}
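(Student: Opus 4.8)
The plan is to prove statements (a), (b), (c) for all $t\ge T_1$ by a single induction on $t$, and then to read off (d) afterward as a per-step corollary. Throughout, $c_1>0$ is a sufficiently small absolute constant and $c_{init},c_{step}$ are taken small relative to $c_1$ (one checks at the end that these smallness requirements are not circular). Two consequences of (a)$+$(b) are used repeatedly: together they give $c_1\sigma_1\le\alpha_t\beta_t\le\sigma_1$, hence $-\sigma_1\le h_t\le 0$; and combined with the pointwise lower bound in (a) they pin down $\sqrt{c_1\sigma_1}\le\alpha_t,\beta_t\le\sqrt{\sigma_1/c_1}$, which is already the bounded-ratio conclusion of Theorem~\ref{thm:rank_1} (with $c_0=c_1$, $C_0=1/c_1$) and also yields $\norm{\vect u_t}^2,\norm{\vect v_t}^2\le 2\sigma_1/c_1$ since $\xi_t\le 100c_{init}^2\sigma_1\ll\sigma_1$ by (c).

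For the base case $t=T_1$ one first transfers the conclusions of Lemma~\ref{lem:rank_1_stage_1} from time $T_1-1$ to time $T_1$ via the explicit updates \eqref{eqn:rank-1-alpha-beta-dynamics}: because $\alpha_{T_1-1}^2+\beta_{T_1-1}^2<\frac12\sigma_1$ and $\xi_{T_1-1}\le 100c_{init}^2\sigma_1$, both multiplicative factors in \eqref{eqn:rank-1-alpha-beta-dynamics} lie in $(0,1)$, so $\alpha_{T_1},\beta_{T_1}>0$, $\xi_{T_1}\le\xi_{T_1-1}$ (giving (c) at $T_1$), and each of $\alpha_{T_1},\beta_{T_1}$ is within an absolute constant factor of $\alpha_{T_1-1},\beta_{T_1-1}$; hence, using Lemma~\ref{lem:rank_1_stage_1}(iv), their ratio at $T_1$ is absolutely bounded, and combined with $\alpha_{T_1}^2+\beta_{T_1}^2\ge\frac12\sigma_1$ this gives (a). Finally Lemma~\ref{lem:rank_1_stage_1}(iii) bounds $(\alpha_{T_1}+\beta_{T_1})^2\le(1+c_{step})^2(\alpha_{T_1-1}+\beta_{T_1-1})^2<2\sigma_1$, so $\alpha_{T_1}\beta_{T_1}\le\frac14(\alpha_{T_1}+\beta_{T_1})^2<\sigma_1$, which is (b).

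For the inductive step from $t$ to $t+1$, assuming (a),(b),(c) on $[T_1,t]$, the three statements are proved in the order (c), (b), (a). For (c): in the $\xi$-recursion of \eqref{eqn:rank-1-h-xi-dynamics} the standing bounds keep both factors in $(0,1)$, and the lower bounds $\alpha_t^2,\beta_t^2\ge c_1\sigma_1$ from (a) make each factor $\le 1-\eta c_1\sigma_1=1-c_1c_{step}$; since $x^2\le x$ on $[0,1]$ we get $\xi_{t+1}\le(1-c_1c_{step})\xi_t$, which with the inductive hypothesis and $\xi_{T_1}\le 100c_{init}^2\sigma_1$ gives (c). For (b): in the $h_{t+1}$-formula of \eqref{eqn:rank-1-h-xi-dynamics} the coefficient of $h_t$ is positive, so it maps $h_t\le0$ to $\le0$; the term $-\eta\alpha_t\beta_t\xi_t$ is $\le0$; and the only positive term, $\eta^2\sigma_1\alpha_{t,\perp}^2\beta_{t,\perp}^2\le\tfrac14\eta^2\sigma_1\xi_t^2$, is dominated by $\eta\alpha_t\beta_t\xi_t\ge\eta c_1\sigma_1\xi_t$ since $\eta\xi_t\le 100c_{init}^2c_{step}$ is tiny; hence $h_{t+1}\le0$. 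For (a): rewrite the update as $\alpha_{s+1}=\alpha_s(1-\eta\beta_{s,\perp}^2)-\eta\beta_s h_s\ge\alpha_s(1-\eta\beta_{s,\perp}^2)$ using $h_s\le0$, telescope over $s\in[T_1,t]$, and bound $\sum_{s\ge T_1}\eta\beta_{s,\perp}^2\le\eta\sum_{s\ge T_1}\xi_s\le\eta\,\xi_0/(c_1c_{step})=O(c_{init}^2/c_1)$ using the geometric decay (c); then $\alpha_{t+1}\ge\alpha_{T_1}\prod_{s=T_1}^t(1-\eta\beta_{s,\perp}^2)\ge\tfrac12\alpha_{T_1}\ge\tfrac12\sqrt{c_1\sigma_1}$, which is (a) up to an absolute rescaling of $c_1$ (and symmetrically for $\beta_{t+1}$). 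This closes the induction. Finally (d) holds for each $t\ge T_1$: the coefficient of $h_t$ in \eqref{eqn:rank-1-h-xi-dynamics} has absolute value $\le 1-c_1c_{step}$ because $2c_1\sigma_1\le\alpha_t^2+\beta_t^2\le 2\sigma_1/c_1$ and $c_{step}$ is small, and the remaining terms are $\le\eta\alpha_t\beta_t\xi_t+\tfrac14\eta^2\sigma_1\xi_t^2\le c_{step}\xi_t$ after absorbing the $\eta^2$-term, giving $|h_{t+1}|\le(1-c_1c_{step})|h_t|+c_{step}\xi_t$.

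The main obstacle is the lower bound (a). It is not a one-step statement: the recursion only yields $\alpha_{s+1}\ge\alpha_s(1-\eta\beta_{s,\perp}^2)$, which in isolation permits geometric erosion of the signal; (a) survives only because (c) makes $\sum_s\eta\beta_{s,\perp}^2$ a \emph{summable} quantity of size $O(c_{init}^2/c_1)$. This forces the order of the induction — the full geometric-decay estimate (c) on $[T_1,t]$ must be in hand before establishing (a) at $t+1$ — and it is the source of the requirement that $c_{init}$ be small relative to $c_1$; verifying that this together with the other smallness constraints ($c_{step}$ small relative to $c_1$, $c_1$ below a fixed absolute constant) is jointly satisfiable is the principal bookkeeping task.
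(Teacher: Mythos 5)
Your proof is correct and mirrors the paper's own argument: the paper's auxiliary induction predicate $\cI(t)$ is precisely the telescoped product lower bound $\alpha_t\ge\alpha_{T_1}\prod(1-\eta\xi_0(1-c_1c_{step})^{i-T_1})$ that you re-derive at each step from the geometric decay of $\xi_s$, and the logical dependencies among (a)--(d) are the same (the lower bound (a) is sustained only because (c) makes $\sum_s\eta\xi_s=O(c_{init}^2/c_1)$ summable, and (d) falls out as a per-step corollary). The only cosmetic differences are that the paper establishes the slightly stronger base case $\min\{\alpha_{T_1},\beta_{T_1}\}\ge 2\sqrt{c_1\sigma_1}$ so no end-of-proof rescaling of $c_1$ is needed, and that it records the cumulative contraction as an explicit fifth induction statement rather than telescoping afresh inside the (a)-step.
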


Note that properties (a) and (b) in Lemma~\ref{lem:rank_1_stage_2} imply $c_0 \le \frac{\alpha_t}{\beta_t} \le C_0$ for all $t\ge T_1$, where $c_0, C_0>0$ are universal constants.
Property (c) implies that for all $t\ge T_1+T_2$ where $T_2 = \Theta(\log \frac1\epsilon)$, we have $\xi_t = O(\epsilon \sigma_1)$.
Then property (d) tells us that after another $T_3 = \Theta(\log \frac1\epsilon)$ iterations, we can ensure $|h_t| = O(\epsilon \sigma_1)$ for all $t\ge T_1+T_2+T_3$.
These imply $\norm{\vect{u}_t \vect{v}_t^\top - \mat M^*}_F = O(\epsilon \sigma_1)$ after $t = T_1+T_2+T_3= O(\log \frac{d}{\epsilon})$ iterations, completing the proof of Theorem~\ref{thm:rank_1}.
\end{proof}

Now we prove Lemmas~\ref{lem:rank_1_stage_1} and \ref{lem:rank_1_stage_2}.


%
%
%

\begin{proof}[Proof of Lemma~\ref{lem:rank_1_stage_1}]
	We use induction to prove the following statements for $t=0, 1, \ldots, T_1-1$:
	\begin{align*}
		\cD(t):\qquad & \alpha_t, \beta_t>0, \\
		\cE(t):\qquad & \xi_t \le \xi_0 \le 100 c_{init}^2\sigma_1, \\
		\cF(t):\qquad & \left(1+\frac{c_{step}}{3}\right) (\alpha_t+\beta_t)\le {\alpha_{t+1}+\beta_{t+1}} \le \left(1+c_{step}\right) (\alpha_t+\beta_t), \\
		\cG(t): \qquad & \abs{\alpha_t - \beta_t} \le \frac{99}{101}(\alpha_t+\beta_t), \\
		\cH(t): \qquad & \norm{\vect{u}_t}^2 + \norm{\vect{v}_t}^2 \le \sigma_1.
	\end{align*}
	
	\begin{itemize}
		\item Base cases.
		
		We know that $\cD(0)$, $\cE(0)$ and $\cG(0)$ hold from our assumptions on the initialization.
		
		\item $\cD(t), \cE(t) \Longrightarrow \cF(t)$ ($\forall t\le T_1-1$).
		
		From \eqref{eqn:rank-1-alpha-beta-dynamics} we have
		\begin{align*}
		{\alpha_{t+1}+\beta_{t+1}}  &={\left(1+\eta\sigma_1\right)\left(\alpha_t+\beta_t\right) - \eta\left(\alpha_t^2+\alpha_{t,\perp}^2\right)\beta_t - \eta \left(\beta_t^2+\beta_{t,\perp}^2\right)\alpha_t} \\
		&\ge  \left(1+\eta \sigma_1 - \eta \left(\alpha_t^2+\beta_t^2+\xi_t \right) \right)({\alpha_t+\beta_t}) \\
		&\ge  \left(1+\eta\sigma_1 - \eta \left( \frac{\sigma_1}{2} + 100 c_{init}^2\sigma_1 \right) \right)(\alpha_t+\beta_t) \\
		&\ge  \left(1+\frac{\eta \sigma_1}{3} \right)({\alpha_t+\beta_t}) \\
		&=  \left(1+\frac{c_{step}}{3} \right)({\alpha_t+\beta_t}),
		\end{align*}
		where in the second inequality we have used the definition of $T_1$, and the last inequality is true when $c_{init}$ is sufficiently small.
		
		On the other hand we have
		\begin{align*}
		{\alpha_{t+1}+\beta_{t+1}}  &= \left(1+\eta\sigma_1\right)\left(\alpha_t+\beta_t\right) - \eta\left(\alpha_t^2+\alpha_{t,\perp}^2\right)\beta_t - \eta \left(\beta_t^2+\beta_{t,\perp}^2\right)\alpha_t \\
		&\le \left(1+\eta\sigma_1\right)\left(\alpha_t+\beta_t\right) \\
		&= \left(1+c_{step} \right)\left(\alpha_t+\beta_t\right).
		\end{align*}
		
		\item $\cE(t) \Longrightarrow \cH(t)$ ($\forall t\le T_1-1$).
		
		We have
		\begin{align*}
		\norm{\vect{u}_t}^2 + \norm{\vect{v}_t}^2 = \alpha_t^2 + \beta_t^2 + \xi_t
		\le \frac12\sigma_1 + 100c_{init}^2 \sigma_1
		\le \sigma_1.
		\end{align*}
		
		\item $\cD(t), \cH(t) \Longrightarrow \cD(t+1)$ ($\forall t\le T_1-1$).
		
		From \eqref{eqn:rank-1-alpha-beta-dynamics} we have
		\begin{align*}
		\alpha_{t+1} = 
		\left(1-\eta\norm{\vect{v}_t}^2\right)\alpha_t + \eta\sigma_1\beta_t
		\ge \left(1-\eta\sigma_1 \right)\alpha_t
		= \left(1-c_{step} \right)\alpha_t >0.
		\end{align*}
		Similarly we have $\beta_{t+1}>0$. Note that $c_{step}$ is chosen to be sufficiently small.
		
		\item $\cH(t) \Longrightarrow \cE(t+1)$ ($\forall t\le T_1-1$).
		
		Recall from \eqref{eqn:rank-1-h-xi-dynamics}:
		\begin{align*}
		\xi_{t+1} = \left(1-\eta\norm{\vect{v}_t}^2\right)^2 \alpha_{t, \perp}^2 + \left(1-\eta\norm{\vect{u}_t}^2\right)^2 \beta_{t, \perp}^2.
		\end{align*}
		Since $\eta\norm{\vect{v}_t}^2 \le \eta(\norm{\vect{u}_t}^2+\norm{\vect{v}_t}^2) \le \eta\sigma_1 = c_{step} \le 1$ and $\eta\norm{\vect{u}_t}^2 \le 1$, we have
		\[
		\xi_{t+1} \le \alpha_{t, \perp}^2+\beta_{t, \perp}^2 = \xi_t.
		\]
		
		\item $\cD(t), \cE(t), \cF(t), \cG(t) \Longrightarrow \cG(t+1)$ ($\forall t\le T_1-1$).
		
		From \eqref{eqn:rank-1-alpha-beta-dynamics} we have
		\begin{align*}
		\alpha_{t+1}-\beta_{t+1} &= (1-\eta\sigma_1)(\alpha_t-\beta_t) - \eta (\beta_t^2 + \beta_{t, \perp}^2) \alpha_t + \eta (\alpha_t^2 + \alpha_{t, \perp}^2) \beta_t \\
		&= (1-\eta\sigma_1 + \eta \alpha_t\beta_t)(\alpha_t-\beta_t) - \eta \beta_{t, \perp}^2 \alpha_t + \eta  \alpha_{t, \perp}^2 \beta_t.
		\end{align*}
		From $\alpha_t^2 + \beta_t^2 < \frac12 \sigma_1$ we know $\alpha_t\beta_t < \frac14 \sigma_1$.
		Thus
		\begin{align*}
		\abs{\alpha_{t+1}-\beta_{t+1}} &\le (1-\eta\sigma_1 + \eta \alpha_t\beta_t)\abs{\alpha_t-\beta_t} + \eta \beta_{t, \perp}^2 \alpha_t + \eta  \alpha_{t, \perp}^2 \beta_t \\
		&\le \left( 1-\frac34 \eta\sigma_1 \right) \abs{\alpha_t-\beta_t} + \eta \xi_t (\alpha_t+\beta_t) \\
		&\le \left( 1-\frac34 \eta\sigma_1 \right) \cdot \frac{99}{101} (\alpha_t+\beta_t) + \eta \cdot 100c_{init}^2\sigma_1 (\alpha_t+\beta_t) \\
		&\le \left( 1- \eta\sigma_1 \left( \frac34 - 100c_{init}^2 \cdot \frac{101}{99} \right)  \right) \cdot \frac{99}{101} (\alpha_t+\beta_t) \\
		&\le \frac{99}{101} (\alpha_t+\beta_t) \\
		&\le \frac{99}{101} (\alpha_{t+1}+\beta_{t+1}).
		\end{align*}
	\end{itemize}

Lastly we upper bound $T_1$.
Note that for all $t<T_1$ we have $\alpha_t + \beta_t \le \sqrt{2(\alpha_t^2+\beta_t^2)} < \sqrt{2\cdot \frac12 \sigma_1} = \sqrt{\sigma_1}$.
From $\cF(t)$ we know that $\alpha_t + \beta_t$ is increasing exponentially.
Therefore, we must have $T_1 = O\left( \log \frac{\sqrt{\sigma_1}}{\alpha_0 + \beta_0} \right) = O\left( \log \frac{\sqrt{\sigma_1}}{\sqrt{\sigma_1/d}} \right) = O(\log d)$.
\end{proof}

\begin{proof}[Proof of Lemma~\ref{lem:rank_1_stage_2}]
	
	By the definition of $T_1$ we know $\alpha_{T_1}^2 + \beta_{T_1}^2 \ge \frac12\sigma_1$.
	In the proof of Lemma~\ref{lem:rank_1_stage_1}, we have shown $\alpha_{T_1}, \beta_{T_1}>0$ and $\abs{\alpha_{T_1} - \beta_{T_1}} \le \frac{99}{101}\left( \alpha_{T_1}+\beta_{T_1} \right)$.
	These imply $\min\left\{\alpha_{T_1}, \beta_{T_1}\right\} \ge 2 \sqrt{c_1 \sigma_1}$ for some small universal constant $c_1>0$.
	
	We use induction to prove the following statements for all $t\ge T_1$:
	\begin{align*}
	\cI(t):\qquad & \alpha_t \ge \alpha_{T_1}\cdot \prod_{i=T_1}^{t-1} \left( 1- \eta\xi_0 \left( 1- c_1 c_{step}  \right)^{i-T_1} \right),\quad \beta_t \ge \beta_{T_1}\cdot \prod_{i=T_1}^{t-1} \left( 1- \eta\xi_0 \left( 1- c_1 c_{step}  \right)^{i-T_1} \right), \\
	\cJ(t):\qquad & \alpha_t, \beta_t \ge \sqrt{c_1\sigma_1}, \\
	\cK(t): \qquad & \alpha_t \beta_t \le \sigma_1, \text{ i.e.}, h_t\le 0, \\
	\cL(t):\qquad & \xi_{t} \le (1- c_1 c_{step})^{t-T_1}\xi_{0} \le (1- c_1 c_{step})^{t-T_1}\cdot 100c_{init}^2\sigma_1, \\
	\cM(t): \qquad & |h_{t+1}| \le (1-c_1 c_{step}) |h_t| + c_{step} \xi_t.
	\end{align*}
	
	\begin{itemize}
		\item Base cases.
		
		$\cI(T_1)$ is obvious. We know that $\cJ(T_1)$ is true by the definition of $c_1$.
		$\cK(T_1)$ can be shown as follows:
		\begin{align*}
		\alpha_{T_1} \beta_{T_1}
		&\le \frac14 \left( \alpha_{T_1}+\beta_{T_1} \right)^2 \\
		&\le \frac14  (1+c_{step})^2 \left( \alpha_{T_1-1}+\beta_{T_1-1} \right)^2 & \text{(by Lemma~\ref{lem:rank_1_stage_1} (iii))} \\
		&\le \frac14  (1+c_{step})^2 \cdot 2 \left( \alpha_{T_1-1}^2+\beta_{T_1-1}^2 \right) \\
		&\le \frac14  (1+c_{step})^2 \cdot 2 \cdot \frac12 \sigma_1 & \text{(by the definition of $T_1$)} \\
		&\le \sigma_1. & \text{(choosing $c_{step}$ to be small)}
		\end{align*}
		$\cL(T_1)$ reduces to $\xi_{T_1} \le \xi_0$, which was shown in the proof of Lemma~\ref{lem:rank_1_stage_1}.

		\item $\cI(t) \Longrightarrow \cJ(t)$ ($\forall t\ge T_1$).
		
		Notice that we have $\eta \xi_0 \le \frac{c_{step}}{\sigma_1} \cdot 100c_{init}^2 \sigma_1 = 100c_{step}c_{init}^2 < \frac12$ since $c_{step}$ and $c_{init}$ are sufficiently small.
		Then we have
		\begin{align*}
		\alpha_t &\ge \alpha_{T_1}\cdot \prod_{i=T_1}^{t-1} \left( 1- \eta\xi_0 \left( 1- c_1 c_{step}  \right)^{i-T_1} \right) \\
		&\ge \alpha_{T_1}\cdot \prod_{i=0}^\infty \left( 1- \eta\xi_0 \left( 1- c_1 c_{step}  \right)^{i} \right) \\
		&\ge \alpha_{T_1}\cdot \prod_{i=0}^\infty \exp\left( - 2\eta\xi_0 \left( 1- c_1 c_{step}  \right)^{i} \right) & \text{($1-x\ge e^{-2x}$, $\forall 0\le  x\le 1/2$)} \\
		&= \alpha_{T_1}\cdot \exp\left( -\frac{2\eta\xi_0}{c_1c_{step}} \right) \\
		&\ge \alpha_{T_1}\cdot \exp\left( -\frac{200c_{step}c_{init}^2}{c_1c_{step}} \right) \\
		&\ge 2\sqrt{c_1\sigma_1}\cdot \exp\left( -\frac{200c_{init}^2}{c_1} \right) \\
		&\ge \sqrt{c_1\sigma_1}. & \text{(choosing $c_{init}$ to be small)}
		\end{align*}
		Similarly we have $\beta_t \ge \sqrt{c_1\sigma_1}$.
		
		\item $\cI(t), \cJ(t), \cK(t), \cL(t) \Longrightarrow \cI(t+1)$ ($\forall t\ge T_1$).
		
		From \eqref{eqn:rank-1-alpha-beta-dynamics} we have
		\begin{align*}
		\alpha_{t+1} &= \left(1-\eta\left(\beta_t^2 +\beta_{t,\perp}^2\right)\right)\alpha_t + \eta\sigma_1\beta_t \\
		&= \left( 1 - \eta \beta_{t, \perp}^2 \right)\alpha_t - \eta h_t\beta_t \\
		&\ge \left( 1 - \eta \beta_{t, \perp}^2 \right)\alpha_t &\text{($h_t\le0, \beta_t>0$)} \\
		&\ge \left( 1 - \eta \xi_t \right)\alpha_t \\
		&\ge \left( 1 - \eta \xi_{0}(1- c_1 c_{step})^{t-T_1} \right)\alpha_t & (\cL(t)) \\
		&\ge \alpha_{T_1}\cdot \prod_{i=T_1}^{t} \left( 1- \eta\xi_0 \left( 1- c_1 c_{step}  \right)^{i-T_1} \right). & (\cI(t))
		\end{align*}
		Similarly we have $\beta_{t+1} \ge \beta_{T_1}\cdot \prod_{i=T_1}^{t} \left( 1- \eta\xi_0 \left( 1- c_1 c_{step}  \right)^{i-T_1} \right)$.

		\item $\cJ(t), \cK(t), \cL(t) \Longrightarrow \cK(t+1)$ ($\forall t\ge T_1$).
		
		From \eqref{eqn:rank-1-h-xi-dynamics} we have
		\begin{equation} \label{eqn:rank-1-inproof-1}
		\begin{aligned}
		h_{t+1}  &= \left(1-\eta\left(\alpha_t^2+\beta_t^2\right)
		+\eta^2\left(\alpha_t \beta_t h_t+\alpha_t^2\beta_{t,\perp}^2+\beta_t^2\alpha_{t,\perp}^2 + \alpha_{t,\perp}^2\beta_{t,\perp}^2\right)
		\right)h_t 
		-\eta \alpha_t \beta_t \xi_t  + \eta^2\sigma_1\alpha_{t,\perp}^2\beta_{t,\perp}^2 \\
		&\le \left(1-\eta\left(\alpha_t^2+\beta_t^2\right) \right) h_t + \eta^2 \alpha_t\beta_t h_t^2 -\eta \alpha_t \beta_t \xi_t  + \eta^2\sigma_1\alpha_{t,\perp}^2\beta_{t,\perp}^2,
		\end{aligned}
		\end{equation}
		where we have used $h_t\le 0$.
		Since $\alpha_t, \beta_t \ge \sqrt{c_1\sigma_1}$ and $\alpha_t\beta_t \le \sigma_1$, we have $\alpha_t, \beta_t = \Theta(\sqrt{\sigma_1})$.
		Furthermore, we can choose $c_{step}$ and $c_{init}$ small enough such that $\eta \xi_0 \le 4c_1$ which implies
		\begin{align*}
		\eta^2\sigma_1\alpha_{t,\perp}^2\beta_{t,\perp}^2
		\le \eta^2 \sigma_1 \cdot \frac14 \xi_t^2
		\le \frac14 \eta\xi_t \cdot \eta \sigma_1    \xi_0 
		\le  \eta\xi_t \cdot c_1 \sigma_1   
		\le  \eta\xi_t \cdot \alpha_t \beta_t.
		\end{align*}
		Therefore \eqref{eqn:rank-1-inproof-1} implies
		\begin{align*}
		h_{t+1} &\le \left( 1 - \eta \cdot O(\sigma_1) \right) h_t + \eta^2 \alpha_t\beta_t h_t^2 \\
		&= \left( 1 - \eta \cdot O(\sigma_1) + \eta^2 \alpha_t\beta_t h_t \right) h_t \\
		&\le \left( 1 - \eta \cdot O(\sigma_1) - \eta^2 \sigma_1^2 \right) h_t & (0<\alpha_t\beta_t\le \sigma_1)\\
		&= \left( 1 -   O(c_{step}) - c_{step}^2 \right) h_t \\
		&\le0,
		\end{align*}
		where the last step is true when $c_{step}$ is sufficiently small.
		
		\item $\cJ(t), \cK(t), \cL(t) \Longrightarrow \cL(t+1)$ ($\forall t\ge T_1$).
		
		From $\alpha_t, \beta_t \ge \sqrt{c_1\sigma_1}$ and $\alpha_t\beta_t \le \sigma_1$ we have $\alpha_t, \beta_t = \Theta(\sqrt{\sigma_1})$.
		Also we have $\xi_t \le \xi_0$.
		Thus we can make sure $\eta (\alpha_t^2 + \alpha_{t, \perp}^2) < 1$ and $\eta (\beta_t^2 + \beta_{t, \perp}^2) < 1$. Then from \eqref{eqn:rank-1-h-xi-dynamics} we have
		\begin{align*}
		\xi_{t+1}   &= \left(1-\eta\left(\beta_t^2+\beta_{t,\perp}^2\right)\right)^2 \alpha_{t,\perp}^2 + \left(1-\eta\left(\alpha_t^2+\alpha_{t,\perp}^2\right)\right)^2 \beta_{t,\perp}^2 \\
		&\le  \left(1-\eta \beta_t^2\right)^2\alpha_{t,\perp}^2 + \left(1-\eta \alpha_t^2\right)\beta_{t,\perp}^2\\
		&\le \left(1-\eta c_1\sigma_1\right)\xi_t\\
		&=  \left(1-c_1c_{step} \right)\xi_t.
		\end{align*}
		
		\item We have shown $\cI(t), \cJ(t), \cK(t)$ and $\cL(t)$ for all $t\ge T_1$. Now we use them to prove $\cM(t)$ for all $t\ge T_1$:
		
		\begin{align*}
		|h_{t+1}| &= \left(1-\eta\left(\alpha_t^2+\beta_t^2\right)
		+\eta^2\left(\alpha_t \beta_t h_t+\alpha_t^2\beta_{t,\perp}^2+\beta_t^2\alpha_{t,\perp}^2 + \alpha_{t,\perp}^2\beta_{t,\perp}^2\right)
		\right) |h_t| 
		+ \eta \alpha_t \beta_t \xi_t  - \eta^2\sigma_1\alpha_{t,\perp}^2\beta_{t,\perp}^2 \\
		&\le \left(1- \frac12\eta\left(\alpha_t^2+\beta_t^2\right) \right) |h_t| + \eta \alpha_t \beta_t \xi_t  \\
		&\le \left(1- \frac12\eta\cdot 2 c_1\sigma_1 \right) |h_t| + \eta \sigma_1 \xi_t  \\
		&= \left(1- c_1c_{step} \right) |h_t| + c_{step} \xi_t.
		\end{align*}
		Here we have used $\eta \le \frac{\alpha_t^2+\beta_t^2}{2 \abs{ \alpha_t \beta_t h_t+\alpha_t^2\beta_{t,\perp}^2+\beta_t^2\alpha_{t,\perp}^2 + \alpha_{t,\perp}^2\beta_{t,\perp}^2 }}$, which is clearly true when $c_{step}$ is small enough.
	\end{itemize}

Therefore, we have finished the proof of Lemma~\ref{lem:rank_1_stage_2}.
\end{proof}

\end{document}